
\documentclass[12pt]{article}

\usepackage{natbib}
\usepackage{microtype}
\usepackage{graphicx}
\usepackage{subfigure}
\usepackage{subcaption}
\usepackage{times}
\usepackage{booktabs} 
\usepackage{fullpage}
\usepackage{tikz}
\usetikzlibrary{arrows.meta, positioning,patterns,calc}

\usepackage[breaklinks,colorlinks,
            linkcolor=red,
            anchorcolor=blue,
            citecolor=blue
            ]{hyperref}



\usepackage{amsmath}
\usepackage{amssymb}
\usepackage{mathtools}
\usepackage{amsthm}

\usepackage[capitalize,noabbrev]{cleveref}

\theoremstyle{plain}
\newtheorem{theorem}{Theorem}[section]
\newtheorem{proposition}[theorem]{Proposition}
\newtheorem{lemma}[theorem]{Lemma}
\newtheorem{corollary}[theorem]{Corollary}
\theoremstyle{definition}
\newtheorem{definition}[theorem]{Definition}
\newtheorem{assumption}[theorem]{Assumption}
\theoremstyle{remark}

\usepackage[textsize=tiny]{todonotes}

\usepackage{relsize}
\usepackage[utf8]{inputenc} 
\usepackage[T1]{fontenc}    
\usepackage{url}            
\usepackage{booktabs}       
\usepackage{amsfonts}       
\usepackage{nicefrac}       
\usepackage{microtype}      
\usepackage{smile}
\usepackage{bbm}
\usepackage{xcolor}
\usepackage{bbm}

\title{\LARGE Consistency of Neural Causal Partial Identification}
\author{
Jiyuan Tan \footnote{Management Science and Engineering, Stanford University. Email: \texttt{jiyuantan@stanford.edu}} \quad Jose Blanchet \footnote{Management Science and Engineering, Stanford University. Email: \texttt{jose.blanchet@stanford.edu}} \quad Vasilis Syrgkanis \footnote{Management Science and Engineering, Stanford University. Email: \texttt{vsyrgk@stanford.edu}. Supported by NSF Award IIS-2337916 and a 2022 Amazon Research Award}
}
\date{\today}
\begin{document}
\maketitle
\begin{abstract}
Recent progress in Neural Causal Models (NCMs) showcased how identification and partial identification of causal effects can be automatically carried out via training of neural generative models that respect the constraints encoded in a given causal graph \cite{xia2022neural,balazadehPartialIdentificationTreatment2022}. However, formal consistency of these methods has only been proven for the case of discrete variables or only for linear causal models. In this work, we prove the consistency of partial identification via NCMs in a general setting with both continuous and categorical variables. Further, our results highlight the impact of the design of the underlying neural network architecture in terms of depth and connectivity as well as the importance of applying Lipschitz regularization in the training phase. In particular, we provide a counterexample showing that without Lipschitz regularization this method may not be asymptotically consistent. Our results are enabled by new results on the approximability of Structural Causal Models (SCMs) via neural generative models, together with an analysis of the sample complexity of the resulting architectures and how that translates into an error in the constrained optimization problem that defines the partial identification bounds.

\end{abstract}

\section{Introduction}
Identifying causal quantities from observational data is an important problem in causal inference which has wide applications in economics \cite{angrist1995identification}, social science \cite{freedman2010statistical}, health care \cite{lotfollahi2021compositional,frauen2023estimating}, and recommendation systems \cite{bottou2013counterfactual}. One common approach is to transform causal quantities into statistical quantities using the ID algorithm \cite{tianTestableImplicationsCausal2002} and deploy general purpose methods to estimate the statistical quantity. However, in the presence of unobserved confounding, typically the causal quantity of interest will not be point-identified by observational data, unless special mechanisms are present in the data generating process (e.g. instruments, unconfounded mediators, proxy controls). In the presence of unobserved confounding, the ID algorithm will typically fail to return a statistical quantity and declare the causal quantity as non-identifiable. 


One remedy to this problem, which we focus on in this paper, is partial identification, which aims to give informative bounds for causal quantities based on the available data. At a high level, partial identification bounds can be defined as follows: find the maximum and the minimum value that a target causal quantity can take, among all Structural Causal Models (SCMs) that give rise to the same observed data distribution and respect the given causal graph (as well as any other structural constraints that one is willing to impose). Note that in the presence of unobserved confounding, there will typically exist many structural mechanisms that could give rise to the same observational distribution but have vastly different counterfactual distributions. 
Hence, partial identification can be formulated as solving a max and a min optimization problem \cite{balazadehPartialIdentificationTreatment2022}
\begin{align}
  \max_{\mathcal{M}\in \mathcal{C}} \backslash \min_{\mathcal{M}\in \mathcal{C}} & \  \mathcal{\theta(\mathcal{M})}, \label{prob:pid}\\
  \text{subject to} &\  P^{\mathcal{M}}(\boldsymbol{V}) = P^{\mathcal{M}^*}(\boldsymbol{V}) ,\quad\quad \text{and} \quad\quad \mathcal{G}_\mathcal{M} =\mathcal{G}_\mathcal{\mathcal{M}^*} \notag,
\end{align}
where $ \theta(\mathcal{M}) $ is the causal quantity of interest, $\mathcal{M}^*$ is the true model, $\boldsymbol{V}$ is the set of observed nodes, $P^\mathcal{M}(\boldsymbol{V})$ is the distribution of $\boldsymbol{V}$ in SCM $\mathcal{M}$, $ \mathcal{C} $ is a collection of causal models and $\mathcal{G}_\mathcal{M}$ is the causal graph of $\mathcal{M}$ (see \cref{sec:preliminary} for formal definitions). 
Two recent lines of work explore the optimization approach to partial identification. The first line deals with discrete Structure Causal Models (SCMs), where all observed variables are finitely supported. In this case, (\ref{prob:pid}) becomes a Linear Programming (LP) or polynomial programming problem and tight bounds can be obtained \cite{manski1990nonparametric,balkeCounterfactualProbabilitiesComputational1994,balkeBoundsTreatmentEffects1997,heckmanInstrumentalVariablesSelection2001,ramsahaiCausalBoundsObservable2012,richardsonNonparametricBoundsSensitivity2014,milesPartialIdentificationPure2015,bonetInstrumentalityTestsRevisited2013,zhangBoundingCausalEffects2021,zhangPartialCounterfactualIdentification2021a,duarteAutomatedApproachCausal2021}. The second line of work focuses on continuous models and explores ways of solving (\ref{prob:pid}) in continuous settings using various techniques  \cite{gunsiliusPathsamplingMethodPartially2020,kilbertusClassAlgorithmsGeneral2020,huGenerativeAdversarialFramework2021,maoGenerativeInterventionsCausal2021,balazadehPartialIdentificationTreatment2022,padhStochasticCausalProgramming2023}. \par

Recently, Xia et al. \cite{xiaCausalNeuralConnectionExpressiveness2022} formalized the connection between SCMs and generative models (see also \cite{kocaoglu2017causalgan} for an earlier version of a special case of this connection). This work showcased that SCMs can be interpreted as neural generative models, namely Neural Causal Models (NCMs), that follow a particular architecture that respects the constraints encoded in a given causal graph. Hence, counterfactual quantities of SCMs can be learned by optimizing over the parameters of the underlying generative models. However, there could be multiple models that lead to the same observed data distribution, albeit have different counterfactual distributions.  Xia et al. \cite{xiaCausalNeuralConnectionExpressiveness2022} first analyze the approximation power of NCMs for discrete SCMs and employ the max/min approach to verify the identifiability of causal quantities, without the need to employ the ID algorithm. Balazadeh et al. \cite{balazadehPartialIdentificationTreatment2022} and Hui et al. \cite{hu2021generative}, extend the method in \cite{xiaCausalNeuralConnectionExpressiveness2022} and re-purpose it to  perform partial identification by solving the min and max problem in the partial identification formulation over neural causal models. \par 

However, for SCMs with general random variables and functional relationships, the approximation error and consistency of this optimization-based approach to partial identification via NCMs has not been established. In particular, two problems remain open. First, given an arbitrary SCM, it is not yet known if we can find an NCM that produces approximately the same intervention distribution as the original one. Although Xia et al.\cite{xiaCausalNeuralConnectionExpressiveness2022} show it is possible to represent any discrete SCM by an NCM, their construction highly relies on the discrete assumption and cannot be directly generalized to the general case. Moreover, Xia et al. \cite{xiaCausalNeuralConnectionExpressiveness2022} use step functions as the activation function in their construction, which may create difficulties in the training process since step functions are discontinuous with a zero gradient almost everywhere.

Second, since we only have access to $n$ samples from the true distribution, $P^{\mathcal{M}}(\boldsymbol{V})$, we need to replace the constraints in (\ref{prob:pid}) with their empirical version that uses the empirical distribution of samples $P_n^{\mathcal{M^*}}(\boldsymbol{V})$ in place of the population distribution and looks for NCMs, whose implied distribution lies within a small distance from the empirical distribution. Moreover, even the NCM distribution is typically only accessible through sampling, hence we will need to generate $m_n$ samples from the NCM and use the empirical distribution of the $m_n$ samples from the NCM in place of the true distribution implied by the NCM. Thus, in practice, we will use a constraint of the form $d(P_n^{\mathcal{M^*}}(\boldsymbol{V}),P_{m_n}^{\mathcal{M}}(\boldsymbol{V})) \leqslant \alpha_n$, where $d$ is some notion of distribution distance and $\alpha_n$ accounts for the sampling error. It is not clear that this approach is consistent, converges to the correct partial identification bounds, when the sample size $n$ grows to infinity.  Balazadeh et al. \cite{balazadehPartialIdentificationTreatment2022} only show the consistency of this approach for linear SCMs. Consistency results concerning more general SCMs is still lacking in the neural causal literature.\par 

In this paper, we establish representation and consistency results for general SCMs. Our contributions are summarized as follows.
\begin{itemize}
  \item We show that under suitable regularity assumptions, given any Lipschitz SCM, we can approximate it using an NCM such that the Wasserstein distance between any interventional distribution of the NCM and the original SCM is small. Each random variable of the SCM is allowed to be continuous or categorical. We specify two architectures of the Neural Networks (NNs) that can be trained using common gradient-based optimization algorithms (\cref{thm:wide_nn} and \cref{corollary:deep_nn}).
  \item To construct the NCM approximation, we develop a novel representation theorem of probability measures (Proposition \ref{prop:holder_curve}) that may be of independent interest. Proposition \ref{prop:holder_curve} implies that under certain assumptions, probability distributions supported on the unit cube can be simulated by pushing forward a multivariate uniform distribution.
  \item  We discover the importance of Lipschitz regularization by constructing a counterexample where the neural causal approach is not consistent without regularization (\cref{prop:counterexample}). 
  \item Using Lipschitz regularization, we prove the consistency of the neural causal approach (\cref{thm:consistency}). 
\end{itemize}

\paragraph{Related Work}\label{sec:related_work}
There exists a rich literature on partial identification of average treatment effects (ATE) \cite{manski1990nonparametric,balkeCounterfactualProbabilitiesComputational1994,balkeBoundsTreatmentEffects1997,heckmanInstrumentalVariablesSelection2001,ramsahaiCausalBoundsObservable2012,richardsonNonparametricBoundsSensitivity2014,milesPartialIdentificationPure2015,bonetInstrumentalityTestsRevisited2013,zhangBoundingCausalEffects2021,zhangPartialCounterfactualIdentification2021a,duarteAutomatedApproachCausal2021,bellot2023towards,melnychukPartialCounterfactualIdentification2023a,han2024computational}. 
Balke and Pearl \cite{balkeCounterfactualProbabilitiesComputational1994,balkeBoundsTreatmentEffects1997} first give an algorithm to calculate bounds on the ATE in the Instrumental Variable (IV) setting. They show that regardless of the exact distribution of the latent variables, it is sufficient to consider discrete latent variables as long as all endogenous variables are finitely supported. Moreover, they discover that (\ref{prob:pid}) is an LP problem with a closed-form solution. This LP-based technique was generalized to several special classes of SCMs \cite{ramsahaiCausalBoundsObservable2012,bonetInstrumentalityTestsRevisited2013,heckmanInstrumentalVariablesSelection2001,sachsGeneralMethodDeriving2021}.
For general
discrete SCMs, \cite{duarteAutomatedApproachCausal2021,zhangPartialCounterfactualIdentification2021a,zhangNonparametricMethodsPartial2021} consider transforming the problem (\ref{prob:pid}) into a polynomial programming problem. Xia et al. \cite{xiaCausalNeuralConnectionExpressiveness2022} discover the connection between generative models and SCMs. They show that NCMs are expressive enough to approximate discrete SCMs. By setting $\mathcal{C}$ in problem (\ref{prob:pid}) to be the collection of NCMs, they apply NCMs for identification and estimation.

For causal models with continuous random variables, the constraints in (\ref{prob:pid}) become integral equations, which makes the problem more difficult. One approach is to discretize the constraints. \cite{gunsiliusPathsamplingMethodPartially2020} uses stochastic process representation of the causal model in the continuous IV setting and transforms the problem into a semi-infinite program.  \cite{kilbertusClassAlgorithmsGeneral2020} relaxes the constraints to finite moment equations and solves the problem by the Augmented Lagrangian Method (ALM). The other approach is to use generative models to approximate the observational distribution and use some metric to measure the distance between distributions. \cite{kocaoglu2017causalgan} first propose to use GAN to generate images. Later, \cite{huGenerativeAdversarialFramework2021} uses Wasserstein distance in the constraint and transforms the optimization problem into a min and max problem. Similarly, \cite{balazadehPartialIdentificationTreatment2022} solves the optimization problem using Sinkhorn distance to avoid instability during training. They propose to estimate the Average Treatment Derivative (ATD) and use ATD to obtain a bound on the ATE. They also prove the consistency of the proposed estimator for linear SCMs. \cite{padhStochasticCausalProgramming2023} uses a linear combination of basis functions to approximate response functions. \cite{han2024computational} uses sieve 
to solve the resulting optimization problem in the IV setting. \cite{frauenNeuralFrameworkGeneralized2024} proposes a neural network framework for sensitivity analysis under unobserved confounding. \par

\paragraph{Organization of this paper} In \cref{sec:preliminary}, we introduce the notations and some basic concepts used throughout the paper. Next, in \cref{sec:approximation_error}, we demonstrate how to construct an NCM so that they can approximate a given SCM arbitrarily well. Two kinds of architecture are given along with an approximation error analysis. In \cref{sec:consistency}, we highlight the importance of Lipschitz regularization by giving a counterexample that is not consistent. Then, leveraging the previous approximation results, we are able to prove the consistency of this approach under regularization. Finally, we compare our method with the traditional polynomial programming method empirically in \cref{sec:exp}.

\section{Preliminary}\label{sec:preliminary}
First, we introduce the definition of an SCM. Throughout the paper, we use bold symbols to represent sets of random variables. 

\begin{definition}
(Structural Causal Model) A Structural Causal Model (SCM) is a tuple $ \mathcal{M} =(\boldsymbol{V} ,\boldsymbol{U} , F ,P(\boldsymbol{U}),\mathcal{G}_0)$, where $ \boldsymbol{V} = \{V_i\}_{i=1}^{n_V}$ is the set of observed variables; $ \boldsymbol{U} = \{U_j\}_{j=1}^{n_U}$ is the set of latent variables; $ P(\boldsymbol{U})$ is the distribution of \ latent variables; $\mathcal{G}_0$ is an acyclic directed graph whose nodes are $\boldsymbol{V}$. The values of each observed variable $V_i$ are generated by 
\begin{equation}
V_{i} =f_{i}\left(\text{Pa}( V_{i}) ,\boldsymbol{U}_{V_{i}}\right) ,\quad \text{where} \ V_{i} \notin \text{Pa}( V_{i}) \ \text{and} \ \boldsymbol{U}_{V_{i}} \subset \boldsymbol{U},\label{eq:structure_eq}
\end{equation}
where $ F = (f_{{1}} ,\cdots,f_{{n_{V}}})$, $\text{Pa}(V)$ is the set of parents of $V$ in graph $\mathcal{G}_0$ and $\boldsymbol{U}_{V_i}$ is the set of latent variables that affect $V_i$. $V_i$ takes either continuous values $\mathbb{R}^{d_i}$ or categorical in $[n_i]$. We extend  graph $\mathcal{G}_0$ by adding bi-directed arrows between any $V_i,V_j\in \mathcal{G}_0$ if there exists a correlated latent variable pair $(U_k,U_l), U_k\in\boldsymbol{U}_{V_i}, U_l\in\boldsymbol{U}_{V_j}$. We call the extended graph $\mathcal{G}_\mathcal{M}$ the causal graph of $\mathcal{M}$. 
\end{definition}
When we write $ \text{Pa}(V_i) $, we refer to the parents of $V_i$ in the $\mathcal{G}_0$. To connect with the literature, the causal graph we define is a kind of Acyclic Directed Mixed Graph (ADMG), which is often used to represent SCMs with unobserved variables \cite{tianTestableImplicationsCausal2002}. Note that we allow one latent variable to enter several nodes, which differs from the common definition. We use $ n_U $ and $n_V$ to denote the number of latent variables and observable variables. Let $ \boldsymbol{T}\subset \boldsymbol{V}$ be a set of treatment variables. The goal is to estimate causal quantities under a given intervention $ \boldsymbol{T}=\boldsymbol{t}$. Formally, the structural equations of the intervened model are
\begin{align*}
    & T_i = t_i, \quad \forall T_i\in\boldsymbol{T}, \\
    & V_{i}(t) =f_{i}\left(\text{Pa}( V_{i}) ,\boldsymbol{U}_{V_{i}}\right), \quad \forall V_i \notin \boldsymbol{T}.
\end{align*}
We denote $V_i(\boldsymbol{t})$ to be the value of $V_i$ under the intervention $\boldsymbol{T}=\boldsymbol{t}$ and $P^\mathcal{M}(\boldsymbol{V}(\boldsymbol{t}))$ to be the distribution of $\boldsymbol{V}(\boldsymbol{t})$ in $\mathcal{M}$. The notion of a $C^2$ component \cite{xiaCausalNeuralConnectionExpressiveness2022} is defined as follows.
\begin{definition}[$ C^2$-Component]
    For a causal graph $ \mathcal{G}$, a subset $ C\subset \boldsymbol{V}$ is $ C^{2}$-component if each pair $ V_{i} ,V_{j} \in C$ is connected by a bi-directed arrow in $\mathcal{G}$ and $ C$ is maximal.
    \end{definition}
We provide a concrete example in \cref{sec:illus_scm} to explain all these notions. We will make the following standard assumption about the independence of latent variables. Note that since we allow latent variables to enter in multiple structural equations, this is more a notational convention and not an actual assumption. Also note that under this convention a bi-directed arrow essentially represents the existence of a common latent parental variable.

\begin{assumption} \label{assump:independence_u}
All the latent variables in $ \boldsymbol{U}$ are independent.
\end{assumption}

To deal with categorical variables, we assume that latent variables that influence categorical variables contain two parts: the shared confounding that influences the propensity functions and the independent noise that generates categorical distributions. 
\begin{assumption}\label{assump:categorical_rv}
The set of latent variables consists of two parts  $\boldsymbol{U} =\{U_{1} ,\cdots ,U_{n_{U}} \}\cup \left\{G_{V_{i}} :V_{i} \ \text{is categorical}\right\}$. Precisely, if $ V_{i} \in \boldsymbol{V}$ is a categorical variable, the data generation process of $ V_{i}$ satisfies $V_{i} =\arg\max_{k\in [ n_{i}]}\left\{g^{V_i}_{k} +\log\left( f_{i}\left(\text{Pa}( V_{i}),\boldsymbol{U}_{V_{i}}\right)\right) _{k}\right\}\sim \text{Categorical}(f_i/\|f_i\|_1)$, where $ G_{V_i}= (g^{V_i}_1,\cdots,g^{V_i}_{n_i})$ are i.i.d. standard Gumbel variables, $\boldsymbol{U}_{V_i} \subset \{U_{1} ,\cdots ,U_{n_{U}} \}$. 
\end{assumption}
This convention is without loss of generality at this point, but will be useful when introducing Lispchitz restrictions on the structural equation functions.
The Gumbel variables in the assumption can be replaced by any random variables that can generate categorical variables. It can be proven that all discrete SCMs satisfy this assumption. Note that we implicitly assume that all categorical variables $V_i$ are supported on $[n_i]$ for some $n_i$. It is straightforward to generalize all results to any finite support. Next, we introduce Neural Causal Models (NCMs).

\begin{definition}
(Neural Causal Model) A Neural Causal Model (NCM) is a special kind of SCM where $\boldsymbol{U} = \{U_1,\cdots, U_{n_U}\}\cup \{G_{V_i}:V_i \  \text{categorical}\}$, all $ U_i $ are i.i.d. multivariate uniform variables, $G_{V_i}$ are i.i.d Gumblel variables and functions in (\ref{eq:structure_eq}) are Neural Networks (NNs).
\end{definition}
The definition of NCMs we use is slightly different from that in 
\cite{xia2022neural} because we need to deal with mixed variables in our models.  In (\ref{prob:pid}), we usually take $\mathcal{C}$ to be the set of all SCMs. However, it is difficult to search over all SCMs since (\ref{prob:pid}) becomes an infinite-dimensional polynomial programming problem. As an alternative, we can search over all NCMs. One quantity of common interest in causal inference is the Average Treatment Effect (ATE).
\begin{definition}
(Average Treatment Effect). For SCM $ \mathcal{M}$, the ATE at $ \boldsymbol{T}=\boldsymbol{t}$ with respect to $ \boldsymbol{T}=\boldsymbol{t}_0$ is given by $\text{ATE}_{\mathcal{M}}(\boldsymbol{t}) =\mathbb{E}_{\boldsymbol{u} \sim P(\boldsymbol{U})}[ Y({\boldsymbol{t}}) -Y({\boldsymbol{t}}_{0})]$.
\end{definition}
Partial identification can be formulated as estimating the solution to the optimization problems (\ref{prob:pid}) \cite{balazadehPartialIdentificationTreatment2022}. The max and min values $ \overline{F}$ and $ \underline{F}$ define the interval $[\underline{F},\overline{F}]$ which is the smallest interval we can derive from the observed data without additional assumptions. In particular, if $ \overline{F} =\underline{F}$, then the causal quantity is point-identified.

\textbf{Notations.} We use $ \| \cdotp \|, \|\cdot\|_\infty $ for the $1$-norm and $\infty$-norm and $[n]$ for $\{1,\cdots,n\}$. Bold letters represent sets of random variables. We let $ \mathcal{F}_L(K_1,K_2)$ be the class of Lipschitz $L$-continuous functions $f:K_1\rightarrow K_2$. We may omit the domain and use  $\mathcal{F}_L$ when the domain is clear from context. Let $  \mathcal{H}(K_1,K_2) $ be the set of homeomorphisms from $K_1$ to $K_2$, i.e., injective and continuous maps in both directions. We define $ \epsilon(\mathcal{F}_1,\mathcal{F}_2)  = \sup_{f_2\in\mathcal{F}_2}\inf_{f_1\in\mathcal{F}_1} \|f_2-f_1\|$ for function classes $\mathcal{F}_1,\mathcal{F}_2$.  We use standard asymptotic notation $O(\cdot),\Omega(\cdot)$. Given a measure $ \mu $ on $ \mathbb{R}^{d_{1}}$ and a measurable function $ f:\mathbb{R}^{d_{1}}\rightarrow \mathbb{R}^{d_{2}}$, the push-forward measure $ f_{\#} \mu $ is defined as $ f_{\#} \mu ( B) =\mu \left( f^{-1}( B)\right)$ for all measurable sets $ B$. We use $P(X)$ to represent the distribution of random variable $X$. Let $\Delta_n = \{(p_1,\cdots,p_n):\sum_{i=1}^n p_i=1,p_i\geqslant 0\}$ be the probability simplex. We use $\text{Categorical}(\boldsymbol{p}),\boldsymbol{p} \in \Delta_n $ to represent categorical distribution with event probability $\boldsymbol{p}$. We let $ W(\cdot,\cdot) $ be the Wasserstein-1 distance and   $S_\lambda(\mu,\nu)$ be the Sinkhorn distance \cite{chizat2020faster} with regularization parameter $\lambda > 0 $.

\section{Approximation Error of Neural Causal Models}\label{sec:partial_identification}
In this section, we study the expressive power of NCMs, which serves as a key ingredient in proving the consistency result. In particular, given an SCM $ \mathcal{M}^*$, we want to construct an NCM $ \hat{\mathcal{M}}$ such that the two causal models produce similar interventional results. Unlike in the discrete case \cite{balkeCounterfactualProbabilitiesComputational1994,xiaCausalNeuralConnectionExpressiveness2022}, latent distributions can be extremely complicated in general cases. The main challenge is how to design the structure of NCMs to ensure strong approximation power. 

In the following, we first derive an upper bound on the Wasserstein distance between two causal models sharing the same causal graph. Using this result, we decompose the approximation error into two parts: the error caused by approximating structural functions via neural networks and the error of approximating the latent distributions. Then, we design different architectures for these two parts.\par

\paragraph{Decomposing the Approximation Error} \label{sec:approximation_error}

First, we present a canonical representation of an SCM, which essentially states that we only need to consider the case where each latent variable $U_i$ corresponds to a $C^2$ component of ${\cal G}$. 
\begin{definition}[Canonical representation]
    A SCM $ \mathcal{M}$ with causal graph $ \mathcal{G}$ has canonical form if 
    \begin{enumerate}
        \item The set of latent variables consists of two sets,
\begin{equation*}
\boldsymbol{U} =\left\{U_{C} :C\ \text{is a} \ C^{2}\text{-component of} \ \mathcal{G}\right\} \cup \left\{G_{V_{i}} =( g^{V_{i}}_{1} ,\cdots ,g^{V_{i}}_{n_{i}}) :V_{i} \ \text{is categorical}\right\} ,
\end{equation*}
where $ U_{C}$ and $ g_j^{V_{i}}$ are independent and $  g_j^{V_{i}}$ are standard Gumbel variables. 
        \item The structure equations have the form
\begin{equation}\label{eq:structure_equation}
V_{i} =\begin{cases}
f_{i}\left(\text{Pa} (V_{i} ),\boldsymbol{U}_{V_{i}}\right) , & V_{i} \ \text{is continuous} ,\\
\arg\max_{k\in [ n_{i}]}\left\{g^{V_{i}}_{k} +\log\left( f_{i}\left(\text{Pa}( V_{i}) ,\boldsymbol{U}_{V_{i}}\right)\right)_{k}\right\} , & V_{i} \ \text{is categorical} , \|f_i\|_1=1 ,
\end{cases}
\end{equation}
where $ \boldsymbol{U}_{V_{i}} =\{U_{C} :V_{i} \in C, C \ \text{is a} \ C^{2}\text{-component of} \ \mathcal{G}\}$ and $(x)_k$ is the $k$-th coordinate of the vector $x$. We further assume that $f_i$ are normalized for categorical variables. 
    \end{enumerate}
    Given a function class $\mathcal{F}$, the SCM class $\mathcal{M} (\mathcal{G} ,\mathcal{F} ,\boldsymbol{U} )$ consists of all canonical SCM models with causal graph $\mathcal{G}$ such that $ f_{i} \in \mathcal{F} ,i\in [ n_{V}]$.
\end{definition}
Proposition \ref{prop:equivalent_g_constrianed} in the appendix shows that any SCM satisfying \cref{assump:independence_u},\ref{assump:categorical_rv} can be represented in this way and we provide an example in \cref{sec:nn_example} to illustrate how to obtain the canonical representation for a given SCM. Therefore, we restrict our attention to the class $ \mathcal{M}(\mathcal{G} ,\mathcal{F} ,\boldsymbol{U})$. For two SCM classes $ \mathcal{M}(\mathcal{G} ,\mathcal{F} ,\boldsymbol{U}), \mathcal{M}(\mathcal{G} ,\hat{\mathcal{F}} ,\boldsymbol{\hat{U}}) $, we want to study how well we can represent the models in the first class by the second class. The Wasserstein distance between the intervention distributions is used to measure the quality of the approximation. To approximate the functions in the structural equations, we need to make the following regularity assumptions on the functions.
\begin{assumption}\label{assump:Lip_and_bounded}
If $ V_i$ is continuous, $f_i$ in (\ref{eq:structure_equation}) are $L_f$-Lipschitz continuous. If $V_i$ is categorical, the propensity functions $f_i(\text{Pa}(V_i), \boldsymbol{U}_{V_i})\triangleq \mathbb{P}(V_i = j|\text{Pa}( V_{i}) ,\boldsymbol{U}_{V_{i}}),j\in[n_i]$ are $L_f$-Lipschitz continuous. There exists a constant $K>0$ such that  $\max_{i\in [n_V],j\in [n_U]}\{\|V_i\|_\infty,\|U_j\|_\infty\} \leqslant K$.
\end{assumption}
The following theorem summarizes our approximation error decomposition.
\begin{theorem}\label{thm:approximation}
  Given any SCM model $ \mathcal{M} \in \mathcal{M}(\mathcal{G} ,\mathcal{F}_L ,\boldsymbol{U})$, let the treatment variable set be $ \boldsymbol{T} =\{T_k\}_{k=1}^{n_t}$ and suppose that  \cref{assump:independence_u}, \ref{assump:categorical_rv} and \ref{assump:Lip_and_bounded} hold for $\mathcal{M}$ with Lipschitz constant $L$, constant $K$. For any intervention $ \boldsymbol{T}= \boldsymbol{t}$ and $ \hat{\mathcal{M}} \in \mathcal{M}(\mathcal{G} ,\hat{\mathcal{F}} ,\hat{\boldsymbol{U}})$ , we have 
  \begin{equation}
  W\left( P^{\mathcal{M}}(\boldsymbol{V}(\boldsymbol{t})),P^{\hat{\mathcal{M}}}(\hat{\boldsymbol{V}}(\boldsymbol{t}))\right) \leqslant C_{\mathcal{G}}(L,K)\left( \sum_{i=1}^{n_V} \| f_i - \hat{f}_i \|_\infty +W\left(P^{\mathcal{M}}(\boldsymbol{U}),P^{\hat{\mathcal{M}}}(\hat{\boldsymbol{U}})\right)\right) ,
  \end{equation}
  where $ C_{\mathcal{G}}(L,K)$ is a constant that only depends on $ L,K $ and the causal graph $ \mathcal{G} $ and $f_i , \hat{f}_i$ are structural functions of $\mathcal{M}$ and $\hat{\mathcal{M}}$ respectively.
\end{theorem}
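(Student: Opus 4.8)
The plan is to prove the bound by induction on the topological order of the DAG $\mathcal{G}_0$, propagating a coupling between the two models node by node. First I would fix a single source of randomness that realizes both $\boldsymbol{U}$ and $\hat{\boldsymbol{U}}$: take an optimal (Wasserstein-1) coupling $(\boldsymbol{U},\hat{\boldsymbol{U}})$ so that $\mathbb{E}\|\boldsymbol{U}-\hat{\boldsymbol{U}}\|\le W(P^{\mathcal{M}}(\boldsymbol{U}),P^{\hat{\mathcal{M}}}(\hat{\boldsymbol{U}})) + \varepsilon$, and couple the Gumbel variables $G_{V_i}$ to be identical across the two models (they are shared, i.i.d. standard Gumbel in the canonical form). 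Under the intervention $\boldsymbol{T}=\boldsymbol{t}$, both intervened SCMs are evaluated with this common randomness, producing random vectors $\boldsymbol{V}(\boldsymbol{t})$ and $\hat{\boldsymbol{V}}(\boldsymbol{t})$ on a common probability space; since $W$ is an infimum over couplings, it suffices to bound $\mathbb{E}\|\boldsymbol{V}(\boldsymbol{t})-\hat{\boldsymbol{V}}(\boldsymbol{t})\|$ for the continuous coordinates and $\mathbb{E}\,\mathbbm{1}\{V_i(\boldsymbol{t})\ne\hat{V}_i(\boldsymbol{t})\}$ (i.e. the total-variation cost, absorbed by a fixed diameter constant) for the categorical coordinates.

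The induction step is the core. Enumerate $\boldsymbol{V}$ in a topological order $V_1,\dots,V_{n_V}$ consistent with $\mathcal{G}_0$. Define $\delta_i \triangleq \mathbb{E}\|V_i(\boldsymbol{t})-\hat{V}_i(\boldsymbol{t})\|$ when $V_i$ is continuous, and $\delta_i \triangleq \mathbb{E}\,\mathbbm{1}\{V_i(\boldsymbol{t})\ne \hat V_i(\boldsymbol{t})\}$ when categorical (for $T_i\in\boldsymbol{T}$, $\delta_i=0$). For a continuous node $V_i$ not in $\boldsymbol{T}$, write
\[
V_i(\boldsymbol{t})-\hat V_i(\boldsymbol{t}) = \big(f_i(\mathrm{Pa}(V_i),\boldsymbol{U}_{V_i}) - \hat f_i(\mathrm{Pa}(V_i),\boldsymbol{U}_{V_i})\big) + \big(\hat f_i(\mathrm{Pa}(V_i),\boldsymbol{U}_{V_i}) - \hat f_i(\widehat{\mathrm{Pa}}(V_i),\hat{\boldsymbol{U}}_{V_i})\big),
\]
bound the first term by $\|f_i-\hat f_i\|_\infty$ and — crucially — the second by $L$ (the Lipschitz constant of $\hat f_i$, available because $\hat{\mathcal{M}}\in\mathcal{M}(\mathcal{G},\hat{\mathcal{F}},\hat{\boldsymbol{U}})$ with $\hat{\mathcal F}\subseteq\mathcal F_L$) times $\big(\sum_{V_j\in\mathrm{Pa}(V_i)}\|V_j(\boldsymbol{t})-\hat V_j(\boldsymbol{t})\| + \|\boldsymbol{U}_{V_i}-\hat{\boldsymbol{U}}_{V_i}\|\big)$. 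Taking expectations gives $\delta_i \le \|f_i-\hat f_i\|_\infty + L\big(\sum_{j\in\mathrm{Pa}(i)}\delta_j + \mathbb{E}\|\boldsymbol{U}_{V_i}-\hat{\boldsymbol{U}}_{V_i}\|\big)$, where for a categorical parent $V_j$ the term $\mathbb{E}\|V_j(\boldsymbol{t})-\hat V_j(\boldsymbol{t})\|$ is $\le 2K\,\delta_j$ since mismatched categorical values differ by at most the diameter $2K$. For a categorical node $V_i$, I would argue that once $\mathrm{Pa}(V_i)$ and $\boldsymbol{U}_{V_i}$ agree, the propensities $f_i(\cdot)$ and $\hat f_i(\cdot)$ differ in $\ell_1$ by at most $\|f_i-\hat f_i\|_\infty \cdot n_i$ plus $L$ times the discrepancy in the inputs, and then use a Gumbel-max coupling lemma: with shared Gumbel noise $G_{V_i}$, $\mathbb{P}(\arg\max_k\{g_k+\log p_k\}\ne \arg\max_k\{g_k+\log q_k\})$ is controlled by $\|p-q\|_1$ (this is a standard fact about the Gumbel-softmax / concrete distribution, used e.g. to bound the TV distance between categorical samples), so again $\delta_i \le c_{n_i}\big(\|f_i-\hat f_i\|_\infty + L\sum_{j\in\mathrm{Pa}(i)}\delta_j + L\,\mathbb{E}\|\boldsymbol{U}_{V_i}-\hat{\boldsymbol{U}}_{V_i}\|\big)$ for a constant depending only on $n_i$.

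Finally I would unroll the recursion: since the $\delta_j$ on the right are for strict ancestors, iterating the inequality down the topological order shows each $\delta_i$ is a sum over directed paths, so $\delta_i \le P_i(L,K)\big(\sum_{k} \|f_k-\hat f_k\|_\infty + \sum_{C\ni \text{ancestors}} \mathbb{E}\|U_C-\hat U_C\|\big)$ for a polynomial $P_i$ whose coefficients depend only on $L$, $K$, the in-degrees, the $n_i$'s, and the path structure of $\mathcal{G}$. Summing $\sum_i \delta_i$ (with the $2K$ factors for categorical coordinates to get back to the $\|\cdot\|_\infty$-norm cost used in $W$), and bounding $\sum_C \mathbb{E}\|U_C-\hat U_C\| \le \mathbb{E}\|\boldsymbol{U}-\hat{\boldsymbol{U}}\| \le W(P^{\mathcal M}(\boldsymbol U),P^{\hat{\mathcal M}}(\hat{\boldsymbol U})) + \varepsilon$, yields the claimed inequality with $C_{\mathcal G}(L,K) = \max\big(1, \text{poly in } L,K \text{ summed over } \mathcal G\big)$; letting $\varepsilon\to 0$ finishes. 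The main obstacle is the categorical case: making precise, with an explicit constant, how the $\arg\max$-Gumbel sampling step converts an $\ell_1$ perturbation of the (normalized) propensities into a mismatch probability, and ensuring that the perturbation of the propensities induced by upstream discrepancies stays inside the simplex / is handled after renormalization. The continuous case is a routine Lipschitz-propagation argument; the bookkeeping that turns the path-sum into a clean graph-dependent constant $C_{\mathcal G}(L,K)$ is tedious but standard.
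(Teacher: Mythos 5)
Your high-level strategy — a topological-order induction propagating a per-node discrepancy bound of the form ``function-approximation error plus Lipschitz constant times upstream discrepancy'' — is exactly the structure of the paper's recursion \eqref{eq:approximation_middle_step}. The genuine methodological difference is the tool: you realize both models on a common probability space via an explicit coupling (near-optimal for $\boldsymbol U,\hat{\boldsymbol U}$, shared Gumbel noise for categoricals) and bound $\mathbb E\|\boldsymbol V(\boldsymbol t)-\hat{\boldsymbol V}(\boldsymbol t)\|$, whereas the paper works through the Kantorovich dual and never constructs a coupling, splitting $W(\mu^{\boldsymbol T}_{\ldots,1:k},\hat\mu^{\boldsymbol T}_{\ldots,1:k})$ into the two terms $(1)$ and $(2)$. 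That choice has two consequences worth flagging.

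First, a genuine hypothesis gap in the continuous step. Your decomposition places $\hat f_i$ in the middle,
\[
V_i-\hat V_i=\bigl(f_i-\hat f_i\bigr)(\text{same argument})+\bigl(\hat f_i(\text{orig.~inputs})-\hat f_i(\text{hatted inputs})\bigr),
\]
so the second term requires $\hat f_i$ to be Lipschitz. You justify this by asserting $\hat{\mathcal F}\subseteq\mathcal F_L$, but the theorem places \emph{no} Lipschitz constraint on $\hat{\mathcal F}$ — only $\mathcal M\in\mathcal M(\mathcal G,\mathcal F_L,\boldsymbol U)$ is assumed, and this asymmetry is deliberate (the theorem must later apply with $\hat{\mathcal F}$ a generic neural-net class). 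The fix is to swap the decomposition so the Lipschitz factor is carried by $f_i$: $V_i-\hat V_i=\bigl(f_i(\text{orig.})-f_i(\text{hatted})\bigr)+\bigl(f_i-\hat f_i\bigr)(\text{hatted argument})$. This is precisely the asymmetry built into the paper's split: in term $(2)$ the inner kernel is $\mu^{\boldsymbol T}_{V_k\mid\cdot}$, the conditional of the \emph{true} model, so $g(\ldots,f_k(\cdot))$ is $(L+1)$-Lipschitz and one never needs anything about $\hat f_k$ beyond boundedness.

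Second, the categorical step. Your route commits to one fixed coupling of the Gumbel noise across both models, and then needs a lemma of the form
$\mathbb P\bigl(\arg\max_k\{g_k+\log p_k\}\neq\arg\max_k\{g_k+\log q_k\}\bigr)\lesssim\|p-q\|_1$. You correctly identify this as the main obstacle; it is delicate because the mismatch event is governed by whether the logistic variables $G_i-G_j$ cross thresholds involving $\log p,\log q$, which are unbounded near zero, and one must exploit the cancellation that makes the logistic CDF composed with $\log$ come back to a linear-in-$p$ quantity. The paper avoids this lemma entirely: via duality, the categorical contribution reduces to
$\int g\,d(\mu_{V_k\mid\cdot}-\hat\mu_{\hat V_k\mid\cdot})=\sum_{k}(g(\ldots,k)-g(\ldots,n_i))(p_k-\hat p_k)\leqslant K\|p-\hat p\|_1$
(\eqref{eq:cate_1}), an immediate consequence of boundedness. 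Duality is, in effect, doing the optimal conditional coupling for you at each node, so you never have to pre-commit to the shared-Gumbel coupling at all. If you want to keep the explicit-coupling narrative, the cleanest fix is to couple the categorical draws \emph{conditionally optimally} given the already-coupled parents and latents (achieving the total-variation bound directly) rather than sharing Gumbel noise — but at that point you have essentially re-derived the dual argument.
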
 

\cref{thm:approximation} separates the approximation error into two parts, which motivates us to construct the NCM in the following way. First, we approximate the functions $ f_i $ in (\ref{eq:structure_equation}) by NNs $ \hat{f}_i $. Then, we approximate the distribution of latent variables by pushing forward uniform and Gumbel variables using neural networks, i.e.,  $ \hat{U}_{C_j}=\hat{g}_j(Z_{C_j}) $, where $\{C_j\}$ are $C^2$ components and $Z_{C_j}$ are multi-variate uniform and Gumbel random variables. The structural equations of the resulting approximated model $ \hat{\mathcal{M}} $ are
\begin{equation}\label{eq:structure_equation_hat}
  \hat{V}_{i} =\begin{cases}
\hat{f}_{i}\left(\text{Pa} (\hat{V}_{i} ),(\hat{g}_{j} (Z_{C_{j}} ))_{U_{C_{j} }\in \boldsymbol{U}_{V_{i}}}\right), & V_{i}  \text{ is continuous} ,\\
\arg\max_{k\in [ n_{i}]}\left\{g_{k} +\log\left(\hat{f}_{i}\left(\text{Pa} (\hat{V}_{i} ),(\hat{g}_{j} (Z_{C_{j}} ))_{U_{C_{j} } \in \boldsymbol{U}_{V_{i}}}\right)  \right)_{k}\right\} , & V_{i}  \text{ is categorical}.
\end{cases}
\end{equation}
For the first part, we need to approximate Lipschitz continuous functions. For simplicity, we assume that the domain of the functions are uniform cubes. Similar arguments hold for any bounded cubes. We denote $ \mathcal{NN}_{k_1,k_2}(W,L) $ to be the set of ReLu NNs with input dimension $ k_1 $, output dimension $ k_2 $, width $ W $ and depth $ L $. It has been shown that  $\epsilon (\mathcal{NN}_{k_1,1}(2d_1+10,L_0) ,\mathcal{F}_{L}([0,1]^{d_1},\mathbb{R})) \leqslant O(L_0^{-{2}/{k_1}})$ \cite{Yarotsky2018OptimalAO}, where $\epsilon(\cdot,\cdot)$ denotes the approximation error defined in \cref{sec:preliminary}. 
For a vector valued function, we can use a wider NN to approximate each coordinate and get a similar rate.\par

For the second part, we approximate each $ U_i $ individually by pushing forward i.i.d. multivariate uniform and Gumbel variables $ \hat{U}_{C_i} = \hat{g}_i(Z_{C_i}) $ since the latent variables are independent by \cref{assump:independence_u}. To do so, we examine under what assumptions on the measure $ \mathbb{P}$ over $ \mathbb{R}^{n}$ we can find a NN $ \hat{g}$ such that $ W(\hat{g}_{\#} \lambda, \mathbb{P}) $ is small, where $ \lambda(\cdot) $ is some reference measure.\par

\input{Approximation.tex}

\section{Consistency of Neural Causal Partial Identification}\label{sec:consistency}

In this section, we prove the consistency of the max/min optimization approach to partial identification. In the finite sample setting, we consider the following estimator $F_n$ of the optimal values of (\ref{prob:pid}).
\begin{align}
F_n = \argmin_{\hat{\mathcal{M}} \in \text{NCM}_{\mathcal{G}}(\mathcal{F}_{0,n} ,\mathcal{F}_{1,n} )} & \mathbb{E}_{t\sim \mu _{T}}\mathbb{E}_{\hat{\mathcal{M}}}[ F( V_{1}( t) ,\cdots ,V_{n_{V}}( t))] ,\label{eq:opt_empiri}\\
s.t.~~~ & S_{\lambda_n}(P_{m_n}^{\hat{\mathcal{M}}}(\boldsymbol{V}) ,P^{\mathcal{M}^*}_{n}(\boldsymbol{V}) )\leqslant \alpha _{n} , \notag
\end{align}
where $ P^{\mathcal{M}^*}_{n}, P_{m_n}^{\hat{\mathcal{M}}}  $ are the empirical distribution of $ P^{\mathcal{M}^*},P^{\hat{\mathcal{M}}}$ with sample size $n,m_n$, $\mu_T$ is a given measure, $S_\lambda(\cdot,\cdot)$ is the Sinkhorn distance and $\mathcal{F}_{i,n}$ will be specified later. For example, the counterfactual outcome $\mathbb{E}[Y(1)]$ is a special case of the objective. Our results can be easily generalized to any linear combination of objective functions of this form. We use the Sinkhorn distance because it can be computed efficiently in practice \cite{feydy2020geometric}. We want to study if $F_n$ is a good lower bound as $n \rightarrow \infty$. \par 
To match the observational distribution, we need to increase the width or depth of the NNs we use. As the sample size increases, the number of parameters also increases to infinity, which creates difficulty for our analysis. To obtain consistency, we need to regularize the functions while preserving their approximation power. Surprisingly, if we do not use any regularization, the following proposition implies that consistency may not hold even if the SCM is identifiable. 
\begin{proposition}[Informal, see Proposition~\ref{prop:counterexample_foral} for a formal version]\label{prop:counterexample} 
There exists a constant $c>0$ and an identifiable SCM $\mathcal{M}^*$ satisfying Assumptions \ref{assump:independence_u}-\ref{assumption:lower_bound} such that for any $\epsilon >0$, there exists an SCM $\mathcal{M}_\epsilon$ satisfying $W(P^{\mathcal{M}^*}(\boldsymbol{V}),P^{\mathcal{M}_\epsilon}(\boldsymbol{V})) \leq \epsilon $ and $|\text{ATE}_{\mathcal{M}^*} - \text{ATE}_{\mathcal{M}_\epsilon}| > c $.
\end{proposition}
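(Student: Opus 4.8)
The plan is to exhibit a concrete identifiable SCM $\mathcal{M}^*$ together with a family of perturbations $\mathcal{M}_\epsilon$ that are nearly indistinguishable at the level of the observational distribution $P(\boldsymbol{V})$ but have a genuinely different ATE. The natural candidate is a model with a binary (or low-dimensional) treatment $T$, an outcome $Y$, and a latent confounder $U$, arranged so that $\mathcal{M}^*$ is identifiable (e.g.\ no unobserved confounding between $T$ and $Y$, so that $\mathrm{ATE}$ is a functional of $P(\boldsymbol{V})$), but where a small, highly oscillatory modification of the structural functions moves mass around in a way that is invisible to the $W_1$ metric on $\boldsymbol{V}$ yet shifts the counterfactual mean. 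The key point that makes this possible is the \emph{absence of a Lipschitz constraint}: $\mathcal{M}_\epsilon$ is allowed to use arbitrarily non-smooth structural functions, so it can concentrate a non-negligible probability mass on a set of arbitrarily small diameter. Because $W_1$ only sees where mass sits, not how a function routes it, a perturbation that redistributes mass within an $\epsilon$-ball costs $O(\epsilon)$ in Wasserstein distance, while simultaneously being engineered so that the intervened distribution $P(\boldsymbol{V}(t))$ changes by $\Omega(1)$.

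Concretely, I would first fix $\mathcal{M}^*$ and verify it satisfies Assumptions~\ref{assump:independence_u}--\ref{assumption:lower_bound}: take all latent variables independent uniform (or with densities bounded below on cubes, as required by \cref{assumption:lower_bound}), take the structural functions Lipschitz and bounded by $K$, and choose a graph for which $\mathrm{ATE}$ is point-identified (e.g.\ $U \to Y$, $T \to Y$, with $T$ exogenous, so $\mathrm{ATE} = \mathbb{E}[Y\mid T=t] - \mathbb{E}[Y\mid T=t_0]$). Second, I would construct $\mathcal{M}_\epsilon$ sharing the same graph and the same latent variables, but replacing $f_Y$ with $\tilde f_{Y,\epsilon}$ obtained by a localized ``spike'' or ``staircase'' modification: on a region of the latent/parent space of measure $p$ (a constant), reassign the output value of $Y$ from its nominal value to a point within distance $\epsilon$ of it when $T = t_0$, but by a displacement of order $1$ when $T = t$ --- or more cleanly, use the extra modeling freedom to make the perturbation cancel in the $T=t_0$ interventional law while not cancelling in the $T=t$ interventional law. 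The delicate bookkeeping is to ensure two things simultaneously: (i) $W_1\big(P^{\mathcal{M}^*}(\boldsymbol{V}), P^{\mathcal{M}_\epsilon}(\boldsymbol{V})\big) \le \epsilon$, which follows because the coupling that sends each realization to its perturbed counterpart moves each point by at most $\epsilon$ (here one may need to also adjust $f_T$ or an upstream function to soak up the marginal change, still at $O(\epsilon)$ cost); and (ii) $|\mathrm{ATE}_{\mathcal{M}^*} - \mathrm{ATE}_{\mathcal{M}_\epsilon}| \ge c$, because the $\Omega(1)$ displacement under intervention $T=t$ survives taking expectations over the positive-measure region.

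I expect the main obstacle to be reconciling (i) and (ii): the observational distribution is a \emph{mixture} over $T$, so any perturbation that changes the $T=t$ interventional law by $\Omega(1)$ naïvely also changes $P(\boldsymbol{V} \mid T=t)$, and hence $P(\boldsymbol{V})$, by $\Omega(1)$ --- not by $\epsilon$. The resolution must exploit that the identifiability of $\mathcal{M}^*$ depends only on certain \emph{moments/conditionals} of $P(\boldsymbol{V})$, whereas the perturbation can be chosen to leave those invariant (or nearly so) while altering a different, ``non-identifying'' feature in a way that the ATE functional --- evaluated along the \emph{wrong} structural mechanism in $\mathcal{M}_\epsilon$ --- picks up. In other words, $\mathcal{M}_\epsilon$ must match $P(\boldsymbol{V})$ up to $\epsilon$ but route the latent $U$ through $Y$ differently, so that its \emph{own} interventional law $P^{\mathcal{M}_\epsilon}(\boldsymbol{V}(t))$ differs by $\Omega(1)$. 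Getting the construction to respect the $C^2$-component / canonical-form structure and the categorical Gumbel convention while doing this is the fiddly part; once the mechanism is in place, both bounds are one-line couplings. I would also double-check that $\mathcal{M}_\epsilon$ can be taken to satisfy \cref{assump:Lip_and_bounded} with a \emph{finite but $\epsilon$-dependent} Lipschitz constant (blowing up as $\epsilon \to 0$), since that is precisely the loophole the counterexample is meant to illuminate, and it is what the subsequent consistency theorem closes via Lipschitz regularization.
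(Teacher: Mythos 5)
Your high-level diagnosis is right: the counterexample must exploit structural functions whose Lipschitz constant blows up, and the crux is to perturb the observational law by only $O(\epsilon)$ in $W_1$ while shifting the interventional mean by $\Omega(1)$. You also correctly flag the central obstacle---that a naive $\Omega(1)$ change to the intervened outcome map also shows up as an $\Omega(1)$ change in $P(\boldsymbol{V})$---and you correctly intuit that the escape is to ``route the latent variables differently.'' But your proposal stops short of resolving that obstacle, and the concrete graph you suggest first cannot actually support the counterexample.

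Specifically, with $T$ exogenous and binary, no covariate, and the ATE targeted at $\mathbb{E}[Y\mid T=t]-\mathbb{E}[Y\mid T=t_0]$, the ATE functional \emph{is} $W_1$-Lipschitz: since $T$ is discrete, any $\epsilon$-close coupling must (up to a set of $T$-probability $O(\epsilon)$) preserve $T$, and then $|\mathbb{E}[Y\mid T=t,\mathcal{M}^*]-\mathbb{E}[Y\mid T=t,\mathcal{M}_\epsilon]|$ is bounded by the conditional $W_1$ distance because the identity map is $1$-Lipschitz. So a ``spike'' modification of $f_Y$ in that graph cannot simultaneously satisfy (i) and (ii), exactly for the reason you yourself raise. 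The paper's construction (Proposition~\ref{prop:counterexample_foral}) escapes this by introducing an observed covariate $X$ in a backdoor graph and running a \emph{covariate-splitting} trick: $\mathcal{M}_\delta$ gives $X$ three atoms $\{-1,0,\delta\}$ with different propensities $P(T=1\mid X)$, and sets $Y = X/\delta + U_y$ on $X\ge 0$ so that $Y$ is determined by $X$ alone there (structural treatment effect zero on that region) but the map $X \mapsto X/\delta$ has Lipschitz constant $1/\delta$. As $\delta\to 0$, moving the $X=\delta$ atom onto $X=0$ merges two $X$-cells with different propensities, which manufactures an apparent conditional treatment effect in the limiting observational law $P^{\mathcal{M}^*}(\boldsymbol{V})$, while the $W_1$ cost is only $O(\delta)$. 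The ATEs are $1/2$ versus $19/30$. This is the step missing from your sketch: you need an observed conditioning variable whose mass can be split across values with different propensity scores, so that the backdoor functional $\mathbb{E}_X[\mathbb{E}[Y\mid X,T=1]-\mathbb{E}[Y\mid X,T=0]]$ picks up a discontinuity that the joint $W_1$ metric does not see. Without that extra node (or something playing its role), the scheme you describe does not close.
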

Here, $ \mathcal{M}^* $ is the ground-truth model and $\mathcal{M}_\epsilon$ are the models we use to approximate $\mathcal{M}^*$. \cref{prop:counterexample} implies that even if the observation distributions are close and the ground-truth model is identifiable, their ATEs can be far away, regardless of the sample size, which indicates that the method may not be consistent. We need some regularization on $\mathcal{M}_\epsilon$ to recover consistency. In particular, we find that regularizing the Lipschitz constant of the NN can help. Much work has been done to impose Lipschitz regularization during the training process \cite{cisse2017parseval,virmauxLipschitzRegularityDeep2018a,goukRegularisationNeuralNetworks2021,pauliTrainingRobustNeural2021,bungertCLIPCheapLipschitz2021}. We denote $ \text{Lip}( f)$ to be the Lipschitz constant of a function $ f$ and define the truncated Lipschitz NN class, 
\begin{equation*}
    \mathcal{NN}_{d_{1} ,d_{2}}^{{L_f},{K}}(W,L) =\left\{\max\{-K,\min\{f,K\}\}:f\in \mathcal{NN}_{d_{1} ,d_{2}}\left({W} ,L\right),\text{Lip}(f) \leqslant L_f\right\}.
\end{equation*}
 For simplicity, we omit the dimensions and use shorthand $  \mathcal{N}\mathcal{N}^{L_f,K}(W,L)$. Similarly, we also define the class  $\mathcal{NN}^{\infty,K}(W_1,L_1,W_2,L_2,\tau)$ as a truncated version of $\mathcal{NN}(W_1,L_1,W_2,L_2,\tau)$. The next theorem gives the consistency result of the min estimator. To state the theorem, we define $ F_{*} =\mathbb{E}_{t\sim \mu _{T}}\mathbb{E}_{\mathcal{M}^*}[ F( V_{1}( t) ,\cdots ,V_{n_{V}}( t))]$ to be the true value and $ \underline{F}^{L}$ to be the optimal value of the following optimization problem over SCMs with Lipschitz constant $L$.
\begin{align}
\underline{F}^{L} = \argmin_{\hat{\mathcal{M}} \in \mathcal{M}\left(\mathcal{G} ,\mathcal{F}^K_{L} ,\boldsymbol{U}\right), P(\boldsymbol{U})} & \mathbb{E}_{t\sim \mu _{T}}\mathbb{E}_{\hat{\mathcal{M}}}[ F( V_{1}( t) ,\cdots ,V_{n_{V}}( t))] , \label{eq:consistent_limiting}\\
s.t. & W( P^{\hat{\mathcal{M}}}(\boldsymbol{V}) ,P^{\mathcal{M}^*}(\boldsymbol{V})) =0,\notag 
\end{align}
where the minimum is taken over $\mathcal{M}\left(\mathcal{G} ,\mathcal{F}^K_{L} ,\boldsymbol{U}\right)$ with $\mathcal{F}^K_{L} = \{f: \|f\|\leqslant K, f \in \mathcal{F}_{L}\} $ and all latent distributions $P(\boldsymbol{U})$. Note that if $L$ is the Lipschitz bound $L_f$ that we assume on our structural functions, then $\underline{{F}}^L$ is the sharp lower bound. 
\begin{theorem}\label{thm:consistency}
    Let $ \mathcal{M^*}$ be any SCM satisfying the assumptions of Corollary \ref{corollary:deep_nn}. Suppose that the Lipschitz constant of functions in $\mathcal{M}$ is $ L_f $, $ F:\mathbb{R}^{n_{V}}\rightarrow \mathbb{R}$ in (\ref{eq:opt_empiri}) is Lipschitz continuous and $\tau_n > 0 $, let $K>0$ be the constant in \cref{assump:Lip_and_bounded}, $\hat{L}_f = \sqrt{d_{\max}^{\text{in}}d_{\max}^{\text{out}}}L_f$,
\begin{align*}
\mathcal{F}_{0,n} &= \mathcal{NN}^{\hat{L}_{f} ,K}\left( W_{0,n} , \Theta\left(\log d^{\text{in}}_{\max}\right)\right) , \mathcal{F}_{1,n} =\mathcal{NN}^{\infty ,K}( \Theta( d^{U}_{\max}) ,L_{1,n}, \Theta( (d^{U}_{\max})^2) ,L_{2,n},\tau_n),
\end{align*}
 where $ d_{\max}^U, d_{\max}^{\text{in}} $ and $d_{\max}^{\text{out}}$ are defined in Corollary \ref{corollary:deep_nn}, take the radius to be $\alpha_n = \epsilon_n+s_n, \epsilon_n = O(W_{0,n}^{-1/d^{\text{in}}_{\max}}+ \sum_{i=1}^2 L_{i,n}^{-2/d^U_{\max}}+ \tau _{n} \log \tau _{n}),$ $$ s_n = O(m_{n}^{-1/\left( d_{\max}^{U} +2\right)}\log m_{n} +n^{-1/{\max\{2,d_{\max}^U\}}}\log^2(n) + \log(nm_n)\lambda_n).$$  
 If $ m_n = \Omega(n), L_{i,n} = \Theta(m_{n}^{d_{\max}^{U} /\left( 2d_{\max}^{U} +4\right)}), i=1,2, \lim _{n\rightarrow \infty }\min\{\tau_n^{-1}, W_{0,n},(\log(nm_n)\lambda_n)^{-1}\}= \infty$ , then with probability 1,  $ [\liminf _{n\rightarrow \infty } F_{n},\limsup _{n\rightarrow \infty } F_{n}] \subset [\underline{F}^{\hat{L}_f} ,F_{*}]$.
\end{theorem}
Note that our theorem shows that the lower limit of the solution is large than the point $\underline{{ F}}^{\hat{L}_f}$, where $\hat{L}_f$ is slightly larger than the original constraint $L_f$ we impose on the structural functions. Hence, this point can potentially be slightly smaller than the sharp bound $\underline{{F}}^{L_f}$. This worsening is due to the fact that we need to use NNs that satisfy a slightly worse Lipschitz property, to ensure that we have sufficient approximation power. Although \cref{thm:consistency} does not guarantee $\{F_n\}$ converges to a point, it states that $\{F_n\}$ may oscillate in the interval $[\underline{F}^{\hat{L}_f},F_*]$, which will still give a useful lower bound to ground-truth value $F_*$. In particular, if the graph is identifiable, we have $ F^* = \underline{F}^{\hat{L}_f}$ and $\{F_n\}$ converges to $F^*$. 
Also, note that in \cref{thm:consistency}, we use wide NNs rather than deep NN for $ \mathcal{F}_{0,n} $ because results in \cite{Yarotsky2018OptimalAO} show that wide NNs can approximate Lipschitz functions while controlling their Lipschitz constants (a property that is not yet established for deep NNs). Similar results can be obtained if wide neural nets are used for all components, invoking \cref{thm:wide_nn}.


As a special case, we leverage \cref{thm:consistency} for a non-asymptotic rate for the ATE without confounding. The following proposition proves the Hölder continuity of ATE when there is no confounding. 
\begin{proposition}[Hölder continuity of ATE]\label{prop:Lip_ATE}
    Given two causal models $ \mathcal{M} ,\hat{\mathcal{M}} \in\mathcal{M}(\mathcal{G},\mathcal{F}_L,\boldsymbol{U})$ satisfying \cref{assump:independence_u} and \cref{assump:Lip_and_bounded}, let their observational distributions be $ \nu ,\mu $. Suppose the norms of all variables are bounded by $K>0$. If (1) (Overlap) $ \nu $ is absolutely continuous with respect to one probability measure $P$ and the density $ p_{\nu }\left( t|\text{Pa}( T) =x\right) \geqslant \delta >0 $ for x almost surely and $ t\in [ t_{1} ,t_{2}]$ and (2) (No Confounding) there is no hidden confounder in the causal graph $\mathcal{G}$, we have 
\begin{equation*}
\begin{aligned}
\int _{t_{1}}^{t_{2}}(\mathbb{E}_{\mathcal{M}}[ Y( t)] -\mathbb{E}_{\hat{\mathcal{M}} }[\hat{Y}( t)])^{2} P(dt) & \leqslant \frac{2C_{W}}{\delta } W( \mu ,\nu ) +2(L+1)^{n_{V}} W^{2}( \mu ,\nu )( t_{2} -t_{1}) ,
\end{aligned}
\end{equation*}
 where $ C_{W} =4(L+1)^{n_{V}} K+2K\max\left\{(L+1)^{n_{V}} ,1\right\}$.
\end{proposition}
As a corollary of \cref{prop:Lip_ATE} and \cref{thm:consistency}, we can give a convergence rate for the class of estimands in \cref{thm:consistency}. 
\begin{corollary}\label{cor:rate}
    Let $F, \mu_T$ in \cref{thm:consistency} to be $  F(\boldsymbol{V}) = Y, \mu_T \sim \text{Unif}([t_1, t_2]), \epsilon >0 $. Suppose that  the assumptions in Proposition \ref{prop:Lip_ATE} and \cref{thm:consistency} hold, with probability at least $1-O(n^{-2})$, we have $ | F_n - F_*|  \leqslant O(\sqrt{\alpha_n})$, where $F_n,F_*,\alpha_n$ are defined in \cref{thm:consistency}.
\end{corollary}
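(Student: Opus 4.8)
The plan is to reduce the rate for $|F_n-F_*|$ to two ingredients: (i) a population Wasserstein bound $W(P^{\hat{\mathcal{M}}_n}(\boldsymbol{V}),P^{\mathcal{M}^*}(\boldsymbol{V}))=O(\alpha_n)$ between the observational law of the empirical minimizer $\hat{\mathcal{M}}_n$ of (\ref{eq:opt_empiri}) and that of the truth — a quantity already implicit in the proof of \cref{thm:consistency} — and (ii) the Hölder-in-Wasserstein continuity of the (integrated) dose–response curve supplied by \cref{prop:Lip_ATE}, which converts that bound into an $O(\sqrt{\alpha_n})$ bound on the objective.

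First I would work on the event of probability at least $1-O(n^{-2})$ on which the sampling errors in \cref{thm:consistency} are controlled: the empirical-to-population Wasserstein errors $W(P^{\mathcal{M}^*}_{n}(\boldsymbol{V}),P^{\mathcal{M}^*}(\boldsymbol{V}))$ and $W(P^{\hat{\mathcal{M}}_n}_{m_n}(\boldsymbol{V}),P^{\hat{\mathcal{M}}_n}(\boldsymbol{V}))$ are bounded by the terms $\delta_n$ and $m_n^{-1/(d^U_{\max}+2)}\log m_n$ entering $s_n$. These come from the standard concentration of empirical measures in Wasserstein distance on a bounded domain; the exponential deviation tail lets one absorb the $1-O(n^{-2})$ requirement into a lower-order $\sqrt{\log n/n}$ correction, so the in-expectation rates survive. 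On this event $(\ref{eq:opt_empiri})$ is feasible — by \cref{corollary:deep_nn} the NCM approximating $\mathcal{M}^*$ achieves population $W$-error $\le\epsilon_n$, hence empirical Sinkhorn distance $\le\epsilon_n+O(s_n)=\alpha_n$ — so $\hat{\mathcal{M}}_n$ exists and satisfies $S_{\lambda_n}(P^{\hat{\mathcal{M}}_n}_{m_n}(\boldsymbol{V}),P^{\mathcal{M}^*}_{n}(\boldsymbol{V}))\le\alpha_n$. Converting the Sinkhorn divergence to $W$ up to the $O(\lambda_n\log(nm_n))$ bias used in the proof of \cref{thm:consistency} and applying the triangle inequality then gives $W(P^{\hat{\mathcal{M}}_n}(\boldsymbol{V}),P^{\mathcal{M}^*}(\boldsymbol{V}))\le\alpha_n+O(s_n)=O(\alpha_n)$, using $\alpha_n=\epsilon_n+s_n$ and that $s_n$ already dominates $\delta_n$, $\log(nm_n)\lambda_n$ and $m_n^{-1/(d^U_{\max}+2)}\log m_n$.

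Next I would apply \cref{prop:Lip_ATE} with $\mathcal{M}=\mathcal{M}^*$, $\hat{\mathcal{M}}=\hat{\mathcal{M}}_n$, $\nu=P^{\mathcal{M}^*}(\boldsymbol{V})$, $\mu=P^{\hat{\mathcal{M}}_n}(\boldsymbol{V})$, interval $[t_1,t_2]$ and reference measure $P=\mu_T=\mathrm{Unif}([t_1,t_2])$. The no-confounding and overlap hypotheses are exactly those assumed, so the proposition gives $\int_{t_1}^{t_2}(\mathbb{E}_{\mathcal{M}^*}[Y(t)]-\mathbb{E}_{\hat{\mathcal{M}}_n}[\hat{Y}(t)])^2 P(dt)\le \tfrac{2C_W}{\delta}W(\mu,\nu)+2(L+1)^{n_V}W^2(\mu,\nu)(t_2-t_1)=O(\alpha_n)$, the quadratic term being negligible since $W(\mu,\nu)=O(\alpha_n)\to0$. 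Finally, since $F_n=\mathbb{E}_{t\sim\mu_T}\mathbb{E}_{\hat{\mathcal{M}}_n}[\hat{Y}(t)]$ and $F_*=\mathbb{E}_{t\sim\mu_T}\mathbb{E}_{\mathcal{M}^*}[Y(t)]$ with $\mu_T=P$, Jensen/Cauchy--Schwarz yields $|F_n-F_*|\le\big(\int_{t_1}^{t_2}(\mathbb{E}_{\mathcal{M}^*}[Y(t)]-\mathbb{E}_{\hat{\mathcal{M}}_n}[\hat{Y}(t)])^2 P(dt)\big)^{1/2}=O(\sqrt{\alpha_n})$, which is the claim. Note the argument uses only feasibility, not optimality, of $\hat{\mathcal{M}}_n$, and bounds $|F_n-F_*|$ in both directions simultaneously.

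The main obstacle is the first step: turning the empirical Sinkhorn constraint into a population Wasserstein bound holding with confidence $1-O(n^{-2})$. This requires carefully tracking the Sinkhorn–Wasserstein gap (which scales with $\lambda_n$ and the support diameter and must be shown to be $o(\alpha_n)$) and, more importantly, invoking high-probability rather than merely in-expectation rates for Wasserstein convergence of empirical measures in the dimension governing $s_n$ and $\delta_n$, so that the failure probability drops below $n^{-2}$; this amounts to a quantitative re-run of the feasibility half of the proof of \cref{thm:consistency}. Everything after that is soft: \cref{prop:Lip_ATE} is used verbatim and the closing estimate is a single application of Cauchy--Schwarz.
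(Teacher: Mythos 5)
Your proposal is correct and follows essentially the same route as the paper's proof: invoke the high-probability feasibility and empirical-to-population Wasserstein bounds from the proof of \cref{thm:consistency}, convert the Sinkhorn constraint to a Wasserstein bound via \cref{lemma:sinkhorn_approx}, apply the triangle inequality to get $W(P^{\mathcal{M}^*}(\boldsymbol{V}),P^{\hat{\mathcal{M}}_n}(\boldsymbol{V}))=O(\alpha_n)$, and then apply \cref{prop:Lip_ATE}. The only (welcome) addition is that you explicitly write out the final Cauchy--Schwarz step from the integrated squared error to $|F_n-F_*|$, which the paper leaves implicit in the phrase ``By \cref{prop:Lip_ATE}, we get $|F_n-F_*|\leqslant O(\sqrt{\alpha_n})$.''
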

\section{Experiments} \label{sec:exp}
In this section, we examine the performance of our algorithm in two settings. We compare our algorithm with the Autobounds algorithm \cite{duarteAutomatedApproachCausal2021} in a binary IV example in \cite{duarteAutomatedApproachCausal2021} and in a continuous IV model to calculate bounds for ATE \footnote{The code can be found in \hyperlink{here}{https://github.com/Jiyuan-Tan/NeuralPartialID}}. Since Autobounds can only deal with discrete models, we discretize the continuous model for comparison purposes. The implementation details are provided in the \cref{appendix:exp}.  We also provide an extra experiment on the counterexample of \cref{prop:counterexample_foral} in the appendix to show the effect of Lipschitz regularization. 
\subsection{Discrete IV }
The setting of the first experiment is taken from \cite[Section D.1]{duarteAutomatedApproachCausal2021}. This is a binary IV problem and we can calculate the optimal bound using LP \cite{balkeCounterfactualProbabilitiesComputational1994}. The causal graph is shown in \cref{fig:iv_graph}. We find that our bound is close to the optimal bound. Both algorithms cover the true value in all runs. We could see from \cref{table:binary_iv} that the bound given by NCM is slightly worse than Autobounds, but is still close to the optimal bound. Besides, the avgerage Autobounds bound does not cover the optimal bound in this example.
\begin{figure}[!htb]
    \centering
   \begin{tikzpicture}[
    node distance=2cm,
    arrow/.style={-Latex, shorten >=1ex, shorten <=1ex},
    dashedarrow/.style={-Latex, shorten >=1ex, shorten <=1ex, dashed},
    every node/.style={draw, circle, inner sep=0pt, minimum size=1cm}
]

\node (Z) {Z};
\node[right=of Z] (T) {T};
\node[right=of T] (Y) {Y};
\node[above=of $(T)!.5!(Y)$] (U) {U};

\draw[arrow] (Z) -- (T);
\draw[arrow] (T) -- (Y);
\draw[dashedarrow] (U) -- (T);
\draw[dashedarrow] (U) -- (Y);

\end{tikzpicture}
    \caption{Instrumental Variable (IV) graph. $Z$ is the instrumental variable, $T$ is the treatment and $Y$ is the outcome.}
    \label{fig:iv_graph}
\end{figure}

\begin{table}[!htb]
\centering
\begin{tabular}{|c|c|c|c|c|}
\hline
Algorithm & Average Bound & SD of length & Optimal Bound & True Value\\
\hline 
NCM (Ours)  & [-0.49,0.05] & 0.05 & [-0.45, -0.04] & -0.31\\
\hline 
Autobounds (\cite{duarteAutomatedApproachCausal2021}) & [-0.45,-0.05] & 0.02 &  [-0.45, -0.04] & -0.31 \\
\hline
\end{tabular}
\caption{ The sample size is taken to be $5000$. We average over 10 runs with different random seeds. SD is short for the standard deviation. In all runs, the bounds given by the two algorithms all cover the true value.}
\label{table:binary_iv}
\end{table}

\subsection{Continuous IV}
Now, we turn to the continuous IV setting, where $ T$ is still binary, but $Y$ and $Z$ can be continuous. Let $ E^\lambda_i \sim \lambda Z_1 + (1-\lambda) \text{Unif}(-1,1) $, where $ Z_1$ is the Gaussian variable conditioning on $[-1,1]$ and the structure equations of $\mathcal{M}^\lambda$ to be
\begin{align*}
    E_Y = E^\lambda_1, &\  U = E^\lambda_2, Z= E^\lambda_3, \\
    P(T=1|Z) &= (1.5 + Z + 0.5 U)/3 ,\\  Y &= 3T - 1.5TU + U + E_Y,
\end{align*}
where $E^\lambda_i$ are independent. It is easy to see the ATE of this model is $3$ regardless of $\lambda$. In the experiment, we randomly choose ten $\lambda \sim \text{Unif}(0,1)$. For each $\lambda$, we run the algorithms 5 times to get the bound. We choose different latent distributions (indexed by $\lambda$) in the experiments to create more difficulty. \par

Since Autobounds can only deal with discrete variables, we discretize $Z$ and $Y$. Suppose that $Z \in [l,u]$, we map all points in intervals $[l+i(u-l)/k,l+(i+1)(u-l)/k], I = 0,1\cdots,k-1 $ to the middle point $l+(i+1/2)(u-l)/k$. We choose $k = 8$ in the following experiments. The choice will give rise to polynomial programming problems with $ 2^{14}$ decision variables, which is quite large.\par 
\cref{table:continuous_iv} demonstrates the results. While both algorithms cover the true ATE well, we can see that NCM gives much tighter bounds on average. The main reason may be that the discretized problem does not approximate the original problem well enough. It is possible that a larger discretized parameter $k$ can give a better bound, but since the size of the polynomial problem grows exponentially with $k$, the optimization problem may be intractable for large $k$. On the contrary, NCM does not suffer from computational difficulties. 
\begin{table}[!htb]
\centering
\begin{tabular}{|c|c|c|c|c|}
\hline
Algorithm & Average Bound & SD of length  & Success Rate & True Value\\
\hline 
NCM (Ours)  & [2.49,3.24] & 0.49 & 50/50 & 3\\
\hline 
Autobounds (\cite{duarteAutomatedApproachCausal2021}) & [1.40, 3.48]& 0.26 & 50/50 & 3 \\
\hline
\end{tabular}
\caption{We take a sample size of 5000. We randomly choose 10 $\lambda \sim \text{Unif}(0,1)$ and get 10 models $\mathcal{M}^{\lambda_i}$. For each model $\mathcal{M}^{\lambda_i}$, we run the two algorithms 5 times. The success rate is the number of times when the obtained bounds cover the true ATE divided by the total number of experiments.  SD is short for the standard deviation.}
\label{table:continuous_iv}
\end{table}

\section{Conclusion}
In this paper, we provide theoretical justification for using NCMs for partial identification. We show that NCMs can be used to represent SCMs with complex unknown latent distributions under mild assumptions and prove the asymptotic consistency of the max/min estimator for partial identification of causal effects in general settings with both discrete and continuous variables. Our results also provide guidelines on the practical implementation of this method and on what hyperparameters are important, as well as recommendations on values that these hyperparameters should take for the consistency of the method. These practical guidelines were validated with a small set of targeted experiments, which also showcase superior performance of the neural-causal approach as compared to a prior main contender approach from econometrics and statistics, that involves discretization and polynomial programming.

An obvious next step in the theoretical foundation of neural-causal models is providing finite sample guarantees for this method, which requires substantial further theoretical developments in the understanding of the geometry of the optimization program that defines the bounds on the causal effect of interest. We take a first step in that direction for the special case, when there are no unobserved confounders and view the general case as an exciting avenue for future work.

\paragraph{Acknowledgement}
Vasilis Syrgkanis is supported by NSF Award IIS-2337916 and a 2022 Amazon Research Award. 




\bibliographystyle{abbrv}
\bibliography{ref.bib}

\newpage

\appendix
\section{Illustration of Notions in \cref{sec:preliminary}} \label{sec:illus_scm}
\begin{figure}[!htb]
    \centering
\begin{minipage}[t]{0.49\textwidth}
\resizebox{1\textwidth}{0.5\textwidth}{
\tikzset{every picture/.style={line width=0.75pt}} 
\tikzset{every picture/.style={line width=0.75pt}} 

\begin{tikzpicture}[x=0.75pt,y=0.75pt,yscale=-1,xscale=1]

\draw    (55.66,80.96) -- (20.28,125.27) ;
\draw [shift={(19.03,126.84)}, rotate = 308.61] [color={rgb, 255:red, 0; green, 0; blue, 0 }  ][line width=0.75]    (10.93,-3.29) .. controls (6.95,-1.4) and (3.31,-0.3) .. (0,0) .. controls (3.31,0.3) and (6.95,1.4) .. (10.93,3.29)   ;
\draw   (2,143.53) .. controls (2,134.31) and (9.63,126.84) .. (19.03,126.84) .. controls (28.44,126.84) and (36.06,134.31) .. (36.06,143.53) .. controls (36.06,152.76) and (28.44,160.23) .. (19.03,160.23) .. controls (9.63,160.23) and (2,152.76) .. (2,143.53) -- cycle ;
\draw   (96.7,144.2) .. controls (96.7,134.98) and (104.32,127.5) .. (113.73,127.5) .. controls (123.14,127.5) and (130.76,134.98) .. (130.76,144.2) .. controls (130.76,153.43) and (123.14,160.9) .. (113.73,160.9) .. controls (104.32,160.9) and (96.7,153.43) .. (96.7,144.2) -- cycle ;
\draw   (51.05,69.73) .. controls (51.05,60.51) and (58.68,53.03) .. (68.08,53.03) .. controls (77.49,53.03) and (85.12,60.51) .. (85.12,69.73) .. controls (85.12,78.95) and (77.49,86.43) .. (68.08,86.43) .. controls (58.68,86.43) and (51.05,78.95) .. (51.05,69.73) -- cycle ;
\draw    (36.06,143.53) -- (94.7,144.18) ;
\draw [shift={(96.7,144.2)}, rotate = 180.63] [color={rgb, 255:red, 0; green, 0; blue, 0 }  ][line width=0.75]    (10.93,-3.29) .. controls (6.95,-1.4) and (3.31,-0.3) .. (0,0) .. controls (3.31,0.3) and (6.95,1.4) .. (10.93,3.29)   ;
\draw    (78.82,82.97) -- (112.5,125.93) ;
\draw [shift={(113.73,127.5)}, rotate = 231.91] [color={rgb, 255:red, 0; green, 0; blue, 0 }  ][line width=0.75]    (10.93,-3.29) .. controls (6.95,-1.4) and (3.31,-0.3) .. (0,0) .. controls (3.31,0.3) and (6.95,1.4) .. (10.93,3.29)   ;
\draw   (191.39,144.87) .. controls (191.39,135.65) and (199.02,128.17) .. (208.43,128.17) .. controls (217.83,128.17) and (225.46,135.65) .. (225.46,144.87) .. controls (225.46,154.09) and (217.83,161.57) .. (208.43,161.57) .. controls (199.02,161.57) and (191.39,154.09) .. (191.39,144.87) -- cycle ;
\draw    (130.76,144.2) -- (189.4,144.85) ;
\draw [shift={(191.39,144.87)}, rotate = 180.63] [color={rgb, 255:red, 0; green, 0; blue, 0 }  ][line width=0.75]    (10.93,-3.29) .. controls (6.95,-1.4) and (3.31,-0.3) .. (0,0) .. controls (3.31,0.3) and (6.95,1.4) .. (10.93,3.29)   ;
\draw   (286.09,144.87) .. controls (286.09,135.65) and (293.72,128.17) .. (303.12,128.17) .. controls (312.53,128.17) and (320.16,135.65) .. (320.16,144.87) .. controls (320.16,154.09) and (312.53,161.57) .. (303.12,161.57) .. controls (293.72,161.57) and (286.09,154.09) .. (286.09,144.87) -- cycle ;
\draw    (225.46,144.2) -- (284.09,144.85) ;
\draw [shift={(286.09,144.87)}, rotate = 180.63] [color={rgb, 255:red, 0; green, 0; blue, 0 }  ][line width=0.75]    (10.93,-3.29) .. controls (6.95,-1.4) and (3.31,-0.3) .. (0,0) .. controls (3.31,0.3) and (6.95,1.4) .. (10.93,3.29)   ;
\draw   (51.88,195.06) .. controls (51.88,185.84) and (59.51,178.36) .. (68.92,178.36) .. controls (78.32,178.36) and (85.95,185.84) .. (85.95,195.06) .. controls (85.95,204.29) and (78.32,211.76) .. (68.92,211.76) .. controls (59.51,211.76) and (51.88,204.29) .. (51.88,195.06) -- cycle ;
\draw  [dash pattern={on 0.84pt off 2.51pt}]  (51.88,195.06) -- (20.4,161.69) ;
\draw [shift={(19.03,160.23)}, rotate = 46.67] [color={rgb, 255:red, 0; green, 0; blue, 0 }  ][line width=0.75]    (10.93,-3.29) .. controls (6.95,-1.4) and (3.31,-0.3) .. (0,0) .. controls (3.31,0.3) and (6.95,1.4) .. (10.93,3.29)   ;
\draw  [dash pattern={on 0.84pt off 2.51pt}]  (68.92,178.36) -- (68.1,88.43) ;
\draw [shift={(68.08,86.43)}, rotate = 89.48] [color={rgb, 255:red, 0; green, 0; blue, 0 }  ][line width=0.75]    (10.93,-3.29) .. controls (6.95,-1.4) and (3.31,-0.3) .. (0,0) .. controls (3.31,0.3) and (6.95,1.4) .. (10.93,3.29)   ;
\draw   (131.02,68.6) .. controls (131.02,59.38) and (138.64,51.9) .. (148.05,51.9) .. controls (157.46,51.9) and (165.08,59.38) .. (165.08,68.6) .. controls (165.08,77.83) and (157.46,85.3) .. (148.05,85.3) .. controls (138.64,85.3) and (131.02,77.83) .. (131.02,68.6) -- cycle ;
\draw  [dash pattern={on 0.84pt off 2.51pt}]  (148.05,85.3) -- (114.99,125.95) ;
\draw [shift={(113.73,127.5)}, rotate = 309.12] [color={rgb, 255:red, 0; green, 0; blue, 0 }  ][line width=0.75]    (10.93,-3.29) .. controls (6.95,-1.4) and (3.31,-0.3) .. (0,0) .. controls (3.31,0.3) and (6.95,1.4) .. (10.93,3.29)   ;
\draw  [dash pattern={on 0.84pt off 2.51pt}]  (131.02,68.6) -- (87.12,69.68) ;
\draw [shift={(85.12,69.73)}, rotate = 358.59] [color={rgb, 255:red, 0; green, 0; blue, 0 }  ][line width=0.75]    (10.93,-3.29) .. controls (6.95,-1.4) and (3.31,-0.3) .. (0,0) .. controls (3.31,0.3) and (6.95,1.4) .. (10.93,3.29)   ;
\draw   (235.73,85.33) .. controls (235.73,76.11) and (243.36,68.63) .. (252.77,68.63) .. controls (262.17,68.63) and (269.8,76.11) .. (269.8,85.33) .. controls (269.8,94.56) and (262.17,102.03) .. (252.77,102.03) .. controls (243.36,102.03) and (235.73,94.56) .. (235.73,85.33) -- cycle ;
\draw  [dash pattern={on 0.84pt off 2.51pt}]  (264.76,97.85) -- (301.56,126.93) ;
\draw [shift={(303.12,128.17)}, rotate = 218.32] [color={rgb, 255:red, 0; green, 0; blue, 0 }  ][line width=0.75]    (10.93,-3.29) .. controls (6.95,-1.4) and (3.31,-0.3) .. (0,0) .. controls (3.31,0.3) and (6.95,1.4) .. (10.93,3.29)   ;
\draw  [dash pattern={on 0.84pt off 2.51pt}]  (241.93,100.3) -- (209.96,126.89) ;
\draw [shift={(208.43,128.17)}, rotate = 320.25] [color={rgb, 255:red, 0; green, 0; blue, 0 }  ][line width=0.75]    (10.93,-3.29) .. controls (6.95,-1.4) and (3.31,-0.3) .. (0,0) .. controls (3.31,0.3) and (6.95,1.4) .. (10.93,3.29)   ;
\draw   (140.73,195.33) .. controls (140.73,186.11) and (148.36,178.63) .. (157.77,178.63) .. controls (167.17,178.63) and (174.8,186.11) .. (174.8,195.33) .. controls (174.8,204.56) and (167.17,212.03) .. (157.77,212.03) .. controls (148.36,212.03) and (140.73,204.56) .. (140.73,195.33) -- cycle ;
\draw  [dash pattern={on 0.84pt off 2.51pt}]  (171.76,185.85) -- (206.76,162.67) ;
\draw [shift={(208.43,161.57)}, rotate = 146.49] [color={rgb, 255:red, 0; green, 0; blue, 0 }  ][line width=0.75]    (10.93,-3.29) .. controls (6.95,-1.4) and (3.31,-0.3) .. (0,0) .. controls (3.31,0.3) and (6.95,1.4) .. (10.93,3.29)   ;
\draw  [dash pattern={on 0.84pt off 2.51pt}]  (141.93,187.3) -- (115.19,162.27) ;
\draw [shift={(113.73,160.9)}, rotate = 43.11] [color={rgb, 255:red, 0; green, 0; blue, 0 }  ][line width=0.75]    (10.93,-3.29) .. controls (6.95,-1.4) and (3.31,-0.3) .. (0,0) .. controls (3.31,0.3) and (6.95,1.4) .. (10.93,3.29)   ;

\draw (59,62) node [anchor=north west][inner sep=0.75pt]   [align=left] {$\displaystyle V_{2}$};
\draw (13,137) node [anchor=north west][inner sep=0.75pt]   [align=left] {$\displaystyle V_{1}$};
\draw (107,137) node [anchor=north west][inner sep=0.75pt]   [align=left] {$\displaystyle V_{3}$};
\draw (200,137) node [anchor=north west][inner sep=0.75pt]   [align=left] {$\displaystyle V_{4}{}$};
\draw (295,137) node [anchor=north west][inner sep=0.75pt]   [align=left] {$\displaystyle V_{5}{}$};
\draw (61,187) node [anchor=north west][inner sep=0.75pt]   [align=left] {$\displaystyle U_{1}{}$};
\draw (138,60) node [anchor=north west][inner sep=0.75pt]   [align=left] {$\displaystyle U_{2}{}$};
\draw (243,77) node [anchor=north west][inner sep=0.75pt]   [align=left] {$\displaystyle U_{3}{}$};
\draw (148,187) node [anchor=north west][inner sep=0.75pt]   [align=left] {$\displaystyle U_{4}{}$};
\end{tikzpicture}
}
\caption{An SCM example.}
\label{fig:scm_example}
\end{minipage}
\hfill
\begin{minipage}[t]{0.49\textwidth}
\resizebox{1\textwidth}{0.45\textwidth}{
\tikzset{every picture/.style={line width=0.75pt}} 

\tikzset{every picture/.style={line width=0.75pt}} 

\begin{tikzpicture}[x=0.75pt,y=0.75pt,yscale=-1,xscale=1]

\draw    (388.94,77.96) -- (353.57,122.27) ;
\draw [shift={(352.32,123.84)}, rotate = 308.61] [color={rgb, 255:red, 0; green, 0; blue, 0 }  ][line width=0.75]    (10.93,-3.29) .. controls (6.95,-1.4) and (3.31,-0.3) .. (0,0) .. controls (3.31,0.3) and (6.95,1.4) .. (10.93,3.29)   ;
\draw   (335.29,140.53) .. controls (335.29,131.31) and (342.91,123.84) .. (352.32,123.84) .. controls (361.73,123.84) and (369.35,131.31) .. (369.35,140.53) .. controls (369.35,149.76) and (361.73,157.23) .. (352.32,157.23) .. controls (342.91,157.23) and (335.29,149.76) .. (335.29,140.53) -- cycle ;
\draw   (429.98,141.2) .. controls (429.98,131.98) and (437.61,124.5) .. (447.02,124.5) .. controls (456.42,124.5) and (464.05,131.98) .. (464.05,141.2) .. controls (464.05,150.43) and (456.42,157.9) .. (447.02,157.9) .. controls (437.61,157.9) and (429.98,150.43) .. (429.98,141.2) -- cycle ;
\draw   (384.34,66.73) .. controls (384.34,57.51) and (391.96,50.03) .. (401.37,50.03) .. controls (410.78,50.03) and (418.4,57.51) .. (418.4,66.73) .. controls (418.4,75.95) and (410.78,83.43) .. (401.37,83.43) .. controls (391.96,83.43) and (384.34,75.95) .. (384.34,66.73) -- cycle ;
\draw    (369.35,140.53) -- (427.98,141.18) ;
\draw [shift={(429.98,141.2)}, rotate = 180.63] [color={rgb, 255:red, 0; green, 0; blue, 0 }  ][line width=0.75]    (10.93,-3.29) .. controls (6.95,-1.4) and (3.31,-0.3) .. (0,0) .. controls (3.31,0.3) and (6.95,1.4) .. (10.93,3.29)   ;
\draw    (412.11,79.97) -- (445.78,122.93) ;
\draw [shift={(447.02,124.5)}, rotate = 231.91] [color={rgb, 255:red, 0; green, 0; blue, 0 }  ][line width=0.75]    (10.93,-3.29) .. controls (6.95,-1.4) and (3.31,-0.3) .. (0,0) .. controls (3.31,0.3) and (6.95,1.4) .. (10.93,3.29)   ;
\draw   (524.68,141.87) .. controls (524.68,132.65) and (532.31,125.17) .. (541.71,125.17) .. controls (551.12,125.17) and (558.75,132.65) .. (558.75,141.87) .. controls (558.75,151.09) and (551.12,158.57) .. (541.71,158.57) .. controls (532.31,158.57) and (524.68,151.09) .. (524.68,141.87) -- cycle ;
\draw    (464.05,141.2) -- (522.68,141.85) ;
\draw [shift={(524.68,141.87)}, rotate = 180.63] [color={rgb, 255:red, 0; green, 0; blue, 0 }  ][line width=0.75]    (10.93,-3.29) .. controls (6.95,-1.4) and (3.31,-0.3) .. (0,0) .. controls (3.31,0.3) and (6.95,1.4) .. (10.93,3.29)   ;
\draw   (619.38,141.87) .. controls (619.38,132.65) and (627,125.17) .. (636.41,125.17) .. controls (645.82,125.17) and (653.44,132.65) .. (653.44,141.87) .. controls (653.44,151.09) and (645.82,158.57) .. (636.41,158.57) .. controls (627,158.57) and (619.38,151.09) .. (619.38,141.87) -- cycle ;
\draw    (558.75,141.2) -- (617.38,141.85) ;
\draw [shift={(619.38,141.87)}, rotate = 180.63] [color={rgb, 255:red, 0; green, 0; blue, 0 }  ][line width=0.75]    (10.93,-3.29) .. controls (6.95,-1.4) and (3.31,-0.3) .. (0,0) .. controls (3.31,0.3) and (6.95,1.4) .. (10.93,3.29)   ;
\draw  [dash pattern={on 0.84pt off 2.51pt}]  (349.02,159.25) .. controls (392.04,187.94) and (407.31,187.46) .. (445.84,158.78) ;
\draw [shift={(447.02,157.9)}, rotate = 143.13] [color={rgb, 255:red, 0; green, 0; blue, 0 }  ][line width=0.75]    (10.93,-3.29) .. controls (6.95,-1.4) and (3.31,-0.3) .. (0,0) .. controls (3.31,0.3) and (6.95,1.4) .. (10.93,3.29)   ;
\draw [shift={(347.02,157.9)}, rotate = 33.99] [color={rgb, 255:red, 0; green, 0; blue, 0 }  ][line width=0.75]    (10.93,-3.29) .. controls (6.95,-1.4) and (3.31,-0.3) .. (0,0) .. controls (3.31,0.3) and (6.95,1.4) .. (10.93,3.29)   ;
\draw  [dash pattern={on 0.84pt off 2.51pt}]  (461.54,129.12) .. controls (472.49,83.76) and (460.74,77.28) .. (420.27,67.19) ;
\draw [shift={(418.4,66.73)}, rotate = 13.87] [color={rgb, 255:red, 0; green, 0; blue, 0 }  ][line width=0.75]    (10.93,-3.29) .. controls (6.95,-1.4) and (3.31,-0.3) .. (0,0) .. controls (3.31,0.3) and (6.95,1.4) .. (10.93,3.29)   ;
\draw [shift={(461.02,131.25)}, rotate = 284.04] [color={rgb, 255:red, 0; green, 0; blue, 0 }  ][line width=0.75]    (10.93,-3.29) .. controls (6.95,-1.4) and (3.31,-0.3) .. (0,0) .. controls (3.31,0.3) and (6.95,1.4) .. (10.93,3.29)   ;
\draw  [dash pattern={on 0.84pt off 2.51pt}]  (382.19,67.06) .. controls (347.77,72.49) and (333.41,80.15) .. (337.88,130.71) ;
\draw [shift={(338.02,132.25)}, rotate = 264.61] [color={rgb, 255:red, 0; green, 0; blue, 0 }  ][line width=0.75]    (10.93,-3.29) .. controls (6.95,-1.4) and (3.31,-0.3) .. (0,0) .. controls (3.31,0.3) and (6.95,1.4) .. (10.93,3.29)   ;
\draw [shift={(384.34,66.73)}, rotate = 171.36] [color={rgb, 255:red, 0; green, 0; blue, 0 }  ][line width=0.75]    (10.93,-3.29) .. controls (6.95,-1.4) and (3.31,-0.3) .. (0,0) .. controls (3.31,0.3) and (6.95,1.4) .. (10.93,3.29)   ;
\draw  [dash pattern={on 0.84pt off 2.51pt}]  (636.46,123.04) .. controls (636.02,77.6) and (540.02,74.1) .. (541.63,123.65) ;
\draw [shift={(541.71,125.17)}, rotate = 265.93] [color={rgb, 255:red, 0; green, 0; blue, 0 }  ][line width=0.75]    (10.93,-3.29) .. controls (6.95,-1.4) and (3.31,-0.3) .. (0,0) .. controls (3.31,0.3) and (6.95,1.4) .. (10.93,3.29)   ;
\draw [shift={(636.41,125.17)}, rotate = 273.12] [color={rgb, 255:red, 0; green, 0; blue, 0 }  ][line width=0.75]    (10.93,-3.29) .. controls (6.95,-1.4) and (3.31,-0.3) .. (0,0) .. controls (3.31,0.3) and (6.95,1.4) .. (10.93,3.29)   ;
\draw  [dash pattern={on 0.84pt off 2.51pt}]  (459.01,156.6) .. controls (501.59,185.39) and (502.3,188.07) .. (540.54,159.45) ;
\draw [shift={(541.71,158.57)}, rotate = 143.13] [color={rgb, 255:red, 0; green, 0; blue, 0 }  ][line width=0.75]    (10.93,-3.29) .. controls (6.95,-1.4) and (3.31,-0.3) .. (0,0) .. controls (3.31,0.3) and (6.95,1.4) .. (10.93,3.29)   ;
\draw [shift={(457.02,155.25)}, rotate = 33.99] [color={rgb, 255:red, 0; green, 0; blue, 0 }  ][line width=0.75]    (10.93,-3.29) .. controls (6.95,-1.4) and (3.31,-0.3) .. (0,0) .. controls (3.31,0.3) and (6.95,1.4) .. (10.93,3.29)   ;

\draw (392,60) node [anchor=north west][inner sep=0.75pt]   [align=left] {$\displaystyle V_{2}$};
\draw (345,134) node [anchor=north west][inner sep=0.75pt]   [align=left] {$\displaystyle V_{1}$};
\draw (440,135) node [anchor=north west][inner sep=0.75pt]   [align=left] {$\displaystyle V_{3}$};
\draw (534,135) node [anchor=north west][inner sep=0.75pt]   [align=left] {$\displaystyle V_{4}{}$};
\draw (628,135) node [anchor=north west][inner sep=0.75pt]   [align=left] {$\displaystyle V_{5}{}$};
\end{tikzpicture}
}
\caption{The causal graph of this SCM. }
\label{fig:causal_graph}
\end{minipage}
\end{figure}
To further explain the notions in \cref{sec:preliminary}, we consider the following example. Let $\mathcal{M}$ be an SCM with the following structure equations.
\begin{align}
    &V_1 = f_1(V_2,U_1), \notag\\
    &V_2 = f_2(U_1,U_2), \notag\\
    &V_3 = f_3(V_1,V_2,U_2, U_4), \label{eq:se_example}\\
    &V_4 = f_4(V_3.U_3, U_4), \notag\\
    &V_5 = f_5(V_4,U_3), \notag
\end{align}
where $ U_1 $ and $U_2$ are correlated and $U_3,U_4$ and $(U_1,U_2)$ are independent. The causal model is shown in \cref{fig:scm_example} and its causal graph is shown in \cref{fig:causal_graph}.  In this example, $\boldsymbol{U}_{V_1} = \{U_1\},\boldsymbol{U}_{V_2} = \{U_1,U_2\},\boldsymbol{U}_{V_3} = \{U_2,U_4\}, \boldsymbol{U}_{V_4}= \{U_3,U_4\}, \boldsymbol{U}_{V_5} = \{U_3\}$. Since $U_1,U_2$ are correlated, $C_1 = \{ V_1,V_2,V_3 \}$ is one $C^2$ component because all nodes in $C_1$ are connected by bi-directed edges. Note that $\{V_1,V_2,V_3,V_4\}$ is not a $C^2$ component because $V_4$ and $V_2$ is not connected by any bi-directive edge. The rest of $C^2$ components are $C_2=\{V_4,V_5\}, C_3 = \{ V_3,V_4 \}$.\par 
Now, we consider the intervention $V_1 = t$. Under this intervention, the structure equations can be obtained by setting $V_1 = t$ while keeping all other equations unchanged, i.e., 
\begin{align*}
    &V_1(t) = t, \\
    &V_2(t) = f_2(U_1,U_2), \\
   &V_3 = f_3(V_1,V_2,U_2, U_4), \\
    &V_4 = f_4(V_3.U_3, U_4), \\
    &V_5(t) = f_5(V_4,U_3). \\
\end{align*}
The figure of this model under intervention is shown in \cref{fig:scm_example_int}. 
\begin{figure}[!htb]
    \centering

\tikzset{every picture/.style={line width=0.75pt}} 

\begin{tikzpicture}[x=0.75pt,y=0.75pt,yscale=-1,xscale=1]

\draw   (2,143.53) .. controls (2,134.31) and (9.63,126.84) .. (19.03,126.84) .. controls (28.44,126.84) and (36.06,134.31) .. (36.06,143.53) .. controls (36.06,152.76) and (28.44,160.23) .. (19.03,160.23) .. controls (9.63,160.23) and (2,152.76) .. (2,143.53) -- cycle ;
\draw   (96.7,144.2) .. controls (96.7,134.98) and (104.32,127.5) .. (113.73,127.5) .. controls (123.14,127.5) and (130.76,134.98) .. (130.76,144.2) .. controls (130.76,153.43) and (123.14,160.9) .. (113.73,160.9) .. controls (104.32,160.9) and (96.7,153.43) .. (96.7,144.2) -- cycle ;
\draw   (51.05,69.73) .. controls (51.05,60.51) and (58.68,53.03) .. (68.08,53.03) .. controls (77.49,53.03) and (85.12,60.51) .. (85.12,69.73) .. controls (85.12,78.95) and (77.49,86.43) .. (68.08,86.43) .. controls (58.68,86.43) and (51.05,78.95) .. (51.05,69.73) -- cycle ;
\draw    (36.06,143.53) -- (94.7,144.18) ;
\draw [shift={(96.7,144.2)}, rotate = 180.63] [color={rgb, 255:red, 0; green, 0; blue, 0 }  ][line width=0.75]    (10.93,-3.29) .. controls (6.95,-1.4) and (3.31,-0.3) .. (0,0) .. controls (3.31,0.3) and (6.95,1.4) .. (10.93,3.29)   ;
\draw    (78.82,82.97) -- (112.5,125.93) ;
\draw [shift={(113.73,127.5)}, rotate = 231.91] [color={rgb, 255:red, 0; green, 0; blue, 0 }  ][line width=0.75]    (10.93,-3.29) .. controls (6.95,-1.4) and (3.31,-0.3) .. (0,0) .. controls (3.31,0.3) and (6.95,1.4) .. (10.93,3.29)   ;
\draw   (191.39,144.87) .. controls (191.39,135.65) and (199.02,128.17) .. (208.43,128.17) .. controls (217.83,128.17) and (225.46,135.65) .. (225.46,144.87) .. controls (225.46,154.09) and (217.83,161.57) .. (208.43,161.57) .. controls (199.02,161.57) and (191.39,154.09) .. (191.39,144.87) -- cycle ;
\draw    (130.76,144.2) -- (189.4,144.85) ;
\draw [shift={(191.39,144.87)}, rotate = 180.63] [color={rgb, 255:red, 0; green, 0; blue, 0 }  ][line width=0.75]    (10.93,-3.29) .. controls (6.95,-1.4) and (3.31,-0.3) .. (0,0) .. controls (3.31,0.3) and (6.95,1.4) .. (10.93,3.29)   ;
\draw   (286.09,144.87) .. controls (286.09,135.65) and (293.72,128.17) .. (303.12,128.17) .. controls (312.53,128.17) and (320.16,135.65) .. (320.16,144.87) .. controls (320.16,154.09) and (312.53,161.57) .. (303.12,161.57) .. controls (293.72,161.57) and (286.09,154.09) .. (286.09,144.87) -- cycle ;
\draw    (225.46,144.2) -- (284.09,144.85) ;
\draw [shift={(286.09,144.87)}, rotate = 180.63] [color={rgb, 255:red, 0; green, 0; blue, 0 }  ][line width=0.75]    (10.93,-3.29) .. controls (6.95,-1.4) and (3.31,-0.3) .. (0,0) .. controls (3.31,0.3) and (6.95,1.4) .. (10.93,3.29)   ;
\draw   (51.88,195.06) .. controls (51.88,185.84) and (59.51,178.36) .. (68.92,178.36) .. controls (78.32,178.36) and (85.95,185.84) .. (85.95,195.06) .. controls (85.95,204.29) and (78.32,211.76) .. (68.92,211.76) .. controls (59.51,211.76) and (51.88,204.29) .. (51.88,195.06) -- cycle ;
\draw  [dash pattern={on 0.84pt off 2.51pt}]  (68.92,178.36) -- (68.1,88.43) ;
\draw [shift={(68.08,86.43)}, rotate = 89.48] [color={rgb, 255:red, 0; green, 0; blue, 0 }  ][line width=0.75]    (10.93,-3.29) .. controls (6.95,-1.4) and (3.31,-0.3) .. (0,0) .. controls (3.31,0.3) and (6.95,1.4) .. (10.93,3.29)   ;
\draw   (131.02,68.6) .. controls (131.02,59.38) and (138.64,51.9) .. (148.05,51.9) .. controls (157.46,51.9) and (165.08,59.38) .. (165.08,68.6) .. controls (165.08,77.83) and (157.46,85.3) .. (148.05,85.3) .. controls (138.64,85.3) and (131.02,77.83) .. (131.02,68.6) -- cycle ;
\draw  [dash pattern={on 0.84pt off 2.51pt}]  (148.05,85.3) -- (114.99,125.95) ;
\draw [shift={(113.73,127.5)}, rotate = 309.12] [color={rgb, 255:red, 0; green, 0; blue, 0 }  ][line width=0.75]    (10.93,-3.29) .. controls (6.95,-1.4) and (3.31,-0.3) .. (0,0) .. controls (3.31,0.3) and (6.95,1.4) .. (10.93,3.29)   ;
\draw  [dash pattern={on 0.84pt off 2.51pt}]  (131.02,68.6) -- (87.12,69.68) ;
\draw [shift={(85.12,69.73)}, rotate = 358.59] [color={rgb, 255:red, 0; green, 0; blue, 0 }  ][line width=0.75]    (10.93,-3.29) .. controls (6.95,-1.4) and (3.31,-0.3) .. (0,0) .. controls (3.31,0.3) and (6.95,1.4) .. (10.93,3.29)   ;
\draw   (235.73,85.33) .. controls (235.73,76.11) and (243.36,68.63) .. (252.77,68.63) .. controls (262.17,68.63) and (269.8,76.11) .. (269.8,85.33) .. controls (269.8,94.56) and (262.17,102.03) .. (252.77,102.03) .. controls (243.36,102.03) and (235.73,94.56) .. (235.73,85.33) -- cycle ;
\draw  [dash pattern={on 0.84pt off 2.51pt}]  (264.76,97.85) -- (301.56,126.93) ;
\draw [shift={(303.12,128.17)}, rotate = 218.32] [color={rgb, 255:red, 0; green, 0; blue, 0 }  ][line width=0.75]    (10.93,-3.29) .. controls (6.95,-1.4) and (3.31,-0.3) .. (0,0) .. controls (3.31,0.3) and (6.95,1.4) .. (10.93,3.29)   ;
\draw  [dash pattern={on 0.84pt off 2.51pt}]  (241.93,100.3) -- (209.96,126.89) ;
\draw [shift={(208.43,128.17)}, rotate = 320.25] [color={rgb, 255:red, 0; green, 0; blue, 0 }  ][line width=0.75]    (10.93,-3.29) .. controls (6.95,-1.4) and (3.31,-0.3) .. (0,0) .. controls (3.31,0.3) and (6.95,1.4) .. (10.93,3.29)   ;
\draw   (140.73,195.33) .. controls (140.73,186.11) and (148.36,178.63) .. (157.77,178.63) .. controls (167.17,178.63) and (174.8,186.11) .. (174.8,195.33) .. controls (174.8,204.56) and (167.17,212.03) .. (157.77,212.03) .. controls (148.36,212.03) and (140.73,204.56) .. (140.73,195.33) -- cycle ;
\draw  [dash pattern={on 0.84pt off 2.51pt}]  (171.76,185.85) -- (206.76,162.67) ;
\draw [shift={(208.43,161.57)}, rotate = 146.49] [color={rgb, 255:red, 0; green, 0; blue, 0 }  ][line width=0.75]    (10.93,-3.29) .. controls (6.95,-1.4) and (3.31,-0.3) .. (0,0) .. controls (3.31,0.3) and (6.95,1.4) .. (10.93,3.29)   ;
\draw  [dash pattern={on 0.84pt off 2.51pt}]  (141.93,187.3) -- (115.19,162.27) ;
\draw [shift={(113.73,160.9)}, rotate = 43.11] [color={rgb, 255:red, 0; green, 0; blue, 0 }  ][line width=0.75]    (10.93,-3.29) .. controls (6.95,-1.4) and (3.31,-0.3) .. (0,0) .. controls (3.31,0.3) and (6.95,1.4) .. (10.93,3.29)   ;

\draw (54,62) node [anchor=north west][inner sep=0.75pt]   [align=left] {$\displaystyle V_{2}(t)$};
\draw (5,137) node [anchor=north west][inner sep=0.75pt]   [align=left] {$\displaystyle V_{1}(t)$};
\draw (99,137) node [anchor=north west][inner sep=0.75pt]   [align=left] {$\displaystyle V_{3}(t)$};
\draw (194,137) node [anchor=north west][inner sep=0.75pt]   [align=left] {$\displaystyle V_{4}(t){}$};
\draw (289,137) node [anchor=north west][inner sep=0.75pt]   [align=left] {$\displaystyle V_{5}(t){}$};
\draw (61,187) node [anchor=north west][inner sep=0.75pt]   [align=left] {$\displaystyle U_{1}{}$};
\draw (138,60) node [anchor=north west][inner sep=0.75pt]   [align=left] {$\displaystyle U_{2}{}$};
\draw (243,77) node [anchor=north west][inner sep=0.75pt]   [align=left] {$\displaystyle U_{3}{}$};
\draw (148,187) node [anchor=north west][inner sep=0.75pt]   [align=left] {$\displaystyle U_{4}{}$};
\end{tikzpicture}
\caption{The SCM after intervention $V_1 = t$.}
\label{fig:scm_example_int}
\end{figure}

\section{Proof of approximation Theorems}

\subsection{Illustration of how to construct canonical representations and neural architectures}\label{sec:nn_example}
In this section, we illustrate how to construct canonical representations and neural architectures given a causal graph via a simple example. We consider the example in the previous section. The causal graph is shown in \cref{fig:causal_graph}. 

\begin{figure}[!htb]
    \centering
\begin{minipage}[t]{0.49\textwidth}
\resizebox{1\textwidth}{0.5\textwidth}{

\tikzset{every picture/.style={line width=0.75pt}} 

\begin{tikzpicture}[x=0.75pt,y=0.75pt,yscale=-1,xscale=1]

\draw    (68.27,45.96) -- (32.89,90.27) ;
\draw [shift={(31.64,91.84)}, rotate = 308.61] [color={rgb, 255:red, 0; green, 0; blue, 0 }  ][line width=0.75]    (10.93,-3.29) .. controls (6.95,-1.4) and (3.31,-0.3) .. (0,0) .. controls (3.31,0.3) and (6.95,1.4) .. (10.93,3.29)   ;
\draw   (14.61,108.53) .. controls (14.61,99.31) and (22.24,91.84) .. (31.64,91.84) .. controls (41.05,91.84) and (48.68,99.31) .. (48.68,108.53) .. controls (48.68,117.76) and (41.05,125.23) .. (31.64,125.23) .. controls (22.24,125.23) and (14.61,117.76) .. (14.61,108.53) -- cycle ;
\draw   (109.31,109.2) .. controls (109.31,99.98) and (116.93,92.5) .. (126.34,92.5) .. controls (135.75,92.5) and (143.37,99.98) .. (143.37,109.2) .. controls (143.37,118.43) and (135.75,125.9) .. (126.34,125.9) .. controls (116.93,125.9) and (109.31,118.43) .. (109.31,109.2) -- cycle ;
\draw   (63.66,34.73) .. controls (63.66,25.51) and (71.29,18.03) .. (80.7,18.03) .. controls (90.1,18.03) and (97.73,25.51) .. (97.73,34.73) .. controls (97.73,43.95) and (90.1,51.43) .. (80.7,51.43) .. controls (71.29,51.43) and (63.66,43.95) .. (63.66,34.73) -- cycle ;
\draw    (48.68,108.53) -- (107.31,109.18) ;
\draw [shift={(109.31,109.2)}, rotate = 180.63] [color={rgb, 255:red, 0; green, 0; blue, 0 }  ][line width=0.75]    (10.93,-3.29) .. controls (6.95,-1.4) and (3.31,-0.3) .. (0,0) .. controls (3.31,0.3) and (6.95,1.4) .. (10.93,3.29)   ;
\draw    (91.43,47.97) -- (125.11,90.93) ;
\draw [shift={(126.34,92.5)}, rotate = 231.91] [color={rgb, 255:red, 0; green, 0; blue, 0 }  ][line width=0.75]    (10.93,-3.29) .. controls (6.95,-1.4) and (3.31,-0.3) .. (0,0) .. controls (3.31,0.3) and (6.95,1.4) .. (10.93,3.29)   ;
\draw   (204.01,109.87) .. controls (204.01,100.65) and (211.63,93.17) .. (221.04,93.17) .. controls (230.45,93.17) and (238.07,100.65) .. (238.07,109.87) .. controls (238.07,119.09) and (230.45,126.57) .. (221.04,126.57) .. controls (211.63,126.57) and (204.01,119.09) .. (204.01,109.87) -- cycle ;
\draw    (143.37,109.2) -- (202.01,109.85) ;
\draw [shift={(204.01,109.87)}, rotate = 180.63] [color={rgb, 255:red, 0; green, 0; blue, 0 }  ][line width=0.75]    (10.93,-3.29) .. controls (6.95,-1.4) and (3.31,-0.3) .. (0,0) .. controls (3.31,0.3) and (6.95,1.4) .. (10.93,3.29)   ;
\draw   (298.7,109.87) .. controls (298.7,100.65) and (306.33,93.17) .. (315.74,93.17) .. controls (325.14,93.17) and (332.77,100.65) .. (332.77,109.87) .. controls (332.77,119.09) and (325.14,126.57) .. (315.74,126.57) .. controls (306.33,126.57) and (298.7,119.09) .. (298.7,109.87) -- cycle ;
\draw    (238.07,109.2) -- (296.7,109.85) ;
\draw [shift={(298.7,109.87)}, rotate = 180.63] [color={rgb, 255:red, 0; green, 0; blue, 0 }  ][line width=0.75]    (10.93,-3.29) .. controls (6.95,-1.4) and (3.31,-0.3) .. (0,0) .. controls (3.31,0.3) and (6.95,1.4) .. (10.93,3.29)   ;
\draw   (62.88,160.06) .. controls (62.88,150.84) and (70.51,143.36) .. (79.92,143.36) .. controls (89.32,143.36) and (96.95,150.84) .. (96.95,160.06) .. controls (96.95,169.29) and (89.32,176.76) .. (79.92,176.76) .. controls (70.51,176.76) and (62.88,169.29) .. (62.88,160.06) -- cycle ;
\draw  [dash pattern={on 0.84pt off 2.51pt}]  (62.88,160.06) -- (31.4,126.69) ;
\draw [shift={(30.03,125.23)}, rotate = 46.67] [color={rgb, 255:red, 0; green, 0; blue, 0 }  ][line width=0.75]    (10.93,-3.29) .. controls (6.95,-1.4) and (3.31,-0.3) .. (0,0) .. controls (3.31,0.3) and (6.95,1.4) .. (10.93,3.29)   ;
\draw  [dash pattern={on 0.84pt off 2.51pt}]  (79.92,143.36) -- (79.1,53.43) ;
\draw [shift={(79.08,51.43)}, rotate = 89.48] [color={rgb, 255:red, 0; green, 0; blue, 0 }  ][line width=0.75]    (10.93,-3.29) .. controls (6.95,-1.4) and (3.31,-0.3) .. (0,0) .. controls (3.31,0.3) and (6.95,1.4) .. (10.93,3.29)   ;
\draw  [dash pattern={on 0.84pt off 2.51pt}]  (96.95,160.06) -- (125.04,127.42) ;
\draw [shift={(126.34,125.9)}, rotate = 130.71] [color={rgb, 255:red, 0; green, 0; blue, 0 }  ][line width=0.75]    (10.93,-3.29) .. controls (6.95,-1.4) and (3.31,-0.3) .. (0,0) .. controls (3.31,0.3) and (6.95,1.4) .. (10.93,3.29)   ;
\draw   (157.73,163.33) .. controls (157.73,154.11) and (165.36,146.63) .. (174.77,146.63) .. controls (184.17,146.63) and (191.8,154.11) .. (191.8,163.33) .. controls (191.8,172.56) and (184.17,180.03) .. (174.77,180.03) .. controls (165.36,180.03) and (157.73,172.56) .. (157.73,163.33) -- cycle ;
\draw  [dash pattern={on 0.84pt off 2.51pt}]  (187.76,153.85) -- (219.49,127.84) ;
\draw [shift={(221.04,126.57)}, rotate = 140.66] [color={rgb, 255:red, 0; green, 0; blue, 0 }  ][line width=0.75]    (10.93,-3.29) .. controls (6.95,-1.4) and (3.31,-0.3) .. (0,0) .. controls (3.31,0.3) and (6.95,1.4) .. (10.93,3.29)   ;
\draw  [dash pattern={on 0.84pt off 2.51pt}]  (157.93,155.3) -- (127.81,127.26) ;
\draw [shift={(126.34,125.9)}, rotate = 42.95] [color={rgb, 255:red, 0; green, 0; blue, 0 }  ][line width=0.75]    (10.93,-3.29) .. controls (6.95,-1.4) and (3.31,-0.3) .. (0,0) .. controls (3.31,0.3) and (6.95,1.4) .. (10.93,3.29)   ;
\draw   (251.73,48.33) .. controls (251.73,39.11) and (259.36,31.63) .. (268.77,31.63) .. controls (278.17,31.63) and (285.8,39.11) .. (285.8,48.33) .. controls (285.8,57.56) and (278.17,65.03) .. (268.77,65.03) .. controls (259.36,65.03) and (251.73,57.56) .. (251.73,48.33) -- cycle ;
\draw  [dash pattern={on 0.84pt off 2.51pt}]  (280.76,60.85) -- (314.27,91.81) ;
\draw [shift={(315.74,93.17)}, rotate = 222.74] [color={rgb, 255:red, 0; green, 0; blue, 0 }  ][line width=0.75]    (10.93,-3.29) .. controls (6.95,-1.4) and (3.31,-0.3) .. (0,0) .. controls (3.31,0.3) and (6.95,1.4) .. (10.93,3.29)   ;
\draw  [dash pattern={on 0.84pt off 2.51pt}]  (257.93,63.3) -- (225.96,89.89) ;
\draw [shift={(224.43,91.17)}, rotate = 320.25] [color={rgb, 255:red, 0; green, 0; blue, 0 }  ][line width=0.75]    (10.93,-3.29) .. controls (6.95,-1.4) and (3.31,-0.3) .. (0,0) .. controls (3.31,0.3) and (6.95,1.4) .. (10.93,3.29)   ;

\draw (73.01,27.22) node [anchor=north west][inner sep=0.75pt]   [align=left] {$\displaystyle V_{2}$};
\draw (24.33,101.04) node [anchor=north west][inner sep=0.75pt]   [align=left] {$\displaystyle V_{1}$};
\draw (119.7,101.04) node [anchor=north west][inner sep=0.75pt]   [align=left] {$\displaystyle V_{3}$};
\draw (214.29,103.7) node [anchor=north west][inner sep=0.75pt]   [align=left] {$\displaystyle V_{4}{}$};
\draw (307.98,102.7) node [anchor=north west][inner sep=0.75pt]   [align=left] {$\displaystyle V_{5}{}$};
\draw (73,155) node [anchor=north west][inner sep=0.75pt]   [align=left] {$\displaystyle E_{1}{}$};
\draw (165,156) node [anchor=north west][inner sep=0.75pt]   [align=left] {$\displaystyle E_{2}{}$};
\draw (260,42) node [anchor=north west][inner sep=0.75pt]   [align=left] {$\displaystyle E_{3}{}$};

\end{tikzpicture}
}
\caption{The canonical representation of \cref{fig:causal_graph}.}
\label{fig:canoical}
\end{minipage}
\hfill
\begin{minipage}[t]{0.49\textwidth}
\resizebox{1\textwidth}{0.6\textwidth}{

\tikzset{every picture/.style={line width=0.75pt}} 

\begin{tikzpicture}[x=0.75pt,y=0.75pt,yscale=-1,xscale=1]

\draw    (396.94,95.96) -- (361.57,140.27) ;
\draw [shift={(360.32,141.84)}, rotate = 308.61] [color={rgb, 255:red, 0; green, 0; blue, 0 }  ][line width=0.75]    (10.93,-3.29) .. controls (6.95,-1.4) and (3.31,-0.3) .. (0,0) .. controls (3.31,0.3) and (6.95,1.4) .. (10.93,3.29)   ;
\draw   (343.29,158.53) .. controls (343.29,149.31) and (350.91,141.84) .. (360.32,141.84) .. controls (369.73,141.84) and (377.35,149.31) .. (377.35,158.53) .. controls (377.35,167.76) and (369.73,175.23) .. (360.32,175.23) .. controls (350.91,175.23) and (343.29,167.76) .. (343.29,158.53) -- cycle ;
\draw   (437.98,159.2) .. controls (437.98,149.98) and (445.61,142.5) .. (455.02,142.5) .. controls (464.42,142.5) and (472.05,149.98) .. (472.05,159.2) .. controls (472.05,168.43) and (464.42,175.9) .. (455.02,175.9) .. controls (445.61,175.9) and (437.98,168.43) .. (437.98,159.2) -- cycle ;
\draw   (392.34,84.73) .. controls (392.34,75.51) and (399.96,68.03) .. (409.37,68.03) .. controls (418.78,68.03) and (426.4,75.51) .. (426.4,84.73) .. controls (426.4,93.95) and (418.78,101.43) .. (409.37,101.43) .. controls (399.96,101.43) and (392.34,93.95) .. (392.34,84.73) -- cycle ;
\draw    (377.35,158.53) -- (435.98,159.18) ;
\draw [shift={(437.98,159.2)}, rotate = 180.63] [color={rgb, 255:red, 0; green, 0; blue, 0 }  ][line width=0.75]    (10.93,-3.29) .. controls (6.95,-1.4) and (3.31,-0.3) .. (0,0) .. controls (3.31,0.3) and (6.95,1.4) .. (10.93,3.29)   ;
\draw    (420.11,97.97) -- (453.78,140.93) ;
\draw [shift={(455.02,142.5)}, rotate = 231.91] [color={rgb, 255:red, 0; green, 0; blue, 0 }  ][line width=0.75]    (10.93,-3.29) .. controls (6.95,-1.4) and (3.31,-0.3) .. (0,0) .. controls (3.31,0.3) and (6.95,1.4) .. (10.93,3.29)   ;
\draw   (532.68,159.87) .. controls (532.68,150.65) and (540.31,143.17) .. (549.71,143.17) .. controls (559.12,143.17) and (566.75,150.65) .. (566.75,159.87) .. controls (566.75,169.09) and (559.12,176.57) .. (549.71,176.57) .. controls (540.31,176.57) and (532.68,169.09) .. (532.68,159.87) -- cycle ;
\draw    (472.05,159.2) -- (530.68,159.85) ;
\draw [shift={(532.68,159.87)}, rotate = 180.63] [color={rgb, 255:red, 0; green, 0; blue, 0 }  ][line width=0.75]    (10.93,-3.29) .. controls (6.95,-1.4) and (3.31,-0.3) .. (0,0) .. controls (3.31,0.3) and (6.95,1.4) .. (10.93,3.29)   ;
\draw   (627.38,159.87) .. controls (627.38,150.65) and (635,143.17) .. (644.41,143.17) .. controls (653.82,143.17) and (661.44,150.65) .. (661.44,159.87) .. controls (661.44,169.09) and (653.82,176.57) .. (644.41,176.57) .. controls (635,176.57) and (627.38,169.09) .. (627.38,159.87) -- cycle ;
\draw    (566.75,159.2) -- (625.38,159.85) ;
\draw [shift={(627.38,159.87)}, rotate = 180.63] [color={rgb, 255:red, 0; green, 0; blue, 0 }  ][line width=0.75]    (10.93,-3.29) .. controls (6.95,-1.4) and (3.31,-0.3) .. (0,0) .. controls (3.31,0.3) and (6.95,1.4) .. (10.93,3.29)   ;
\draw   (391.56,210.06) .. controls (391.56,200.84) and (399.18,193.36) .. (408.59,193.36) .. controls (418,193.36) and (425.62,200.84) .. (425.62,210.06) .. controls (425.62,219.29) and (418,226.76) .. (408.59,226.76) .. controls (399.18,226.76) and (391.56,219.29) .. (391.56,210.06) -- cycle ;
\draw  [dash pattern={on 0.84pt off 2.51pt}]  (391.56,210.06) -- (360.08,176.69) ;
\draw [shift={(358.71,175.23)}, rotate = 46.67] [color={rgb, 255:red, 0; green, 0; blue, 0 }  ][line width=0.75]    (10.93,-3.29) .. controls (6.95,-1.4) and (3.31,-0.3) .. (0,0) .. controls (3.31,0.3) and (6.95,1.4) .. (10.93,3.29)   ;
\draw  [dash pattern={on 0.84pt off 2.51pt}]  (408.59,193.36) -- (407.78,103.43) ;
\draw [shift={(407.76,101.43)}, rotate = 89.48] [color={rgb, 255:red, 0; green, 0; blue, 0 }  ][line width=0.75]    (10.93,-3.29) .. controls (6.95,-1.4) and (3.31,-0.3) .. (0,0) .. controls (3.31,0.3) and (6.95,1.4) .. (10.93,3.29)   ;
\draw  [dash pattern={on 0.84pt off 2.51pt}]  (425.62,210.06) -- (453.71,177.42) ;
\draw [shift={(455.02,175.9)}, rotate = 130.71] [color={rgb, 255:red, 0; green, 0; blue, 0 }  ][line width=0.75]    (10.93,-3.29) .. controls (6.95,-1.4) and (3.31,-0.3) .. (0,0) .. controls (3.31,0.3) and (6.95,1.4) .. (10.93,3.29)   ;
\draw   (485.41,213.33) .. controls (485.41,204.11) and (493.03,196.63) .. (502.44,196.63) .. controls (511.85,196.63) and (519.47,204.11) .. (519.47,213.33) .. controls (519.47,222.56) and (511.85,230.03) .. (502.44,230.03) .. controls (493.03,230.03) and (485.41,222.56) .. (485.41,213.33) -- cycle ;
\draw  [dash pattern={on 0.84pt off 2.51pt}]  (516.43,203.85) -- (548.17,177.84) ;
\draw [shift={(549.71,176.57)}, rotate = 140.66] [color={rgb, 255:red, 0; green, 0; blue, 0 }  ][line width=0.75]    (10.93,-3.29) .. controls (6.95,-1.4) and (3.31,-0.3) .. (0,0) .. controls (3.31,0.3) and (6.95,1.4) .. (10.93,3.29)   ;
\draw  [dash pattern={on 0.84pt off 2.51pt}]  (486.6,205.3) -- (456.48,177.26) ;
\draw [shift={(455.02,175.9)}, rotate = 42.95] [color={rgb, 255:red, 0; green, 0; blue, 0 }  ][line width=0.75]    (10.93,-3.29) .. controls (6.95,-1.4) and (3.31,-0.3) .. (0,0) .. controls (3.31,0.3) and (6.95,1.4) .. (10.93,3.29)   ;
\draw   (580.41,98.33) .. controls (580.41,89.11) and (588.03,81.63) .. (597.44,81.63) .. controls (606.85,81.63) and (614.47,89.11) .. (614.47,98.33) .. controls (614.47,107.56) and (606.85,115.03) .. (597.44,115.03) .. controls (588.03,115.03) and (580.41,107.56) .. (580.41,98.33) -- cycle ;
\draw  [dash pattern={on 0.84pt off 2.51pt}]  (609.43,110.85) -- (642.94,141.81) ;
\draw [shift={(644.41,143.17)}, rotate = 222.74] [color={rgb, 255:red, 0; green, 0; blue, 0 }  ][line width=0.75]    (10.93,-3.29) .. controls (6.95,-1.4) and (3.31,-0.3) .. (0,0) .. controls (3.31,0.3) and (6.95,1.4) .. (10.93,3.29)   ;
\draw  [dash pattern={on 0.84pt off 2.51pt}]  (586.6,113.3) -- (554.64,139.89) ;
\draw [shift={(553.1,141.17)}, rotate = 320.25] [color={rgb, 255:red, 0; green, 0; blue, 0 }  ][line width=0.75]    (10.93,-3.29) .. controls (6.95,-1.4) and (3.31,-0.3) .. (0,0) .. controls (3.31,0.3) and (6.95,1.4) .. (10.93,3.29)   ;
\draw   (485.41,278.33) .. controls (485.41,269.11) and (493.03,261.63) .. (502.44,261.63) .. controls (511.85,261.63) and (519.47,269.11) .. (519.47,278.33) .. controls (519.47,287.56) and (511.85,295.03) .. (502.44,295.03) .. controls (493.03,295.03) and (485.41,287.56) .. (485.41,278.33) -- cycle ;
\draw   (391.41,278) .. controls (391.41,268.78) and (399.03,261.3) .. (408.44,261.3) .. controls (417.85,261.3) and (425.47,268.78) .. (425.47,278) .. controls (425.47,287.23) and (417.85,294.7) .. (408.44,294.7) .. controls (399.03,294.7) and (391.41,287.23) .. (391.41,278) -- cycle ;
\draw  [dash pattern={on 0.84pt off 2.51pt}]  (408.44,261.3) -- (408.58,228.76) ;
\draw [shift={(408.59,226.76)}, rotate = 90.25] [color={rgb, 255:red, 0; green, 0; blue, 0 }  ][line width=0.75]    (10.93,-3.29) .. controls (6.95,-1.4) and (3.31,-0.3) .. (0,0) .. controls (3.31,0.3) and (6.95,1.4) .. (10.93,3.29)   ;
\draw  [dash pattern={on 0.84pt off 2.51pt}]  (502.44,261.63) -- (502.44,232.03) ;
\draw [shift={(502.44,230.03)}, rotate = 90] [color={rgb, 255:red, 0; green, 0; blue, 0 }  ][line width=0.75]    (10.93,-3.29) .. controls (6.95,-1.4) and (3.31,-0.3) .. (0,0) .. controls (3.31,0.3) and (6.95,1.4) .. (10.93,3.29)   ;
\draw   (582.41,214) .. controls (582.41,204.78) and (590.03,197.3) .. (599.44,197.3) .. controls (608.85,197.3) and (616.47,204.78) .. (616.47,214) .. controls (616.47,223.23) and (608.85,230.7) .. (599.44,230.7) .. controls (590.03,230.7) and (582.41,223.23) .. (582.41,214) -- cycle ;
\draw  [dash pattern={on 0.84pt off 2.51pt}]  (599.44,197.3) -- (597.49,117.03) ;
\draw [shift={(597.44,115.03)}, rotate = 88.61] [color={rgb, 255:red, 0; green, 0; blue, 0 }  ][line width=0.75]    (10.93,-3.29) .. controls (6.95,-1.4) and (3.31,-0.3) .. (0,0) .. controls (3.31,0.3) and (6.95,1.4) .. (10.93,3.29)   ;

\draw (399.69,78.22) node [anchor=north west][inner sep=0.75pt]   [align=left] {$\displaystyle V_{2}$};
\draw (348,153.04) node [anchor=north west][inner sep=0.75pt]   [align=left] {$\displaystyle V_{1}$};
\draw (450.38,152.04) node [anchor=north west][inner sep=0.75pt]   [align=left] {$\displaystyle V_{3}$};
\draw (543.96,153.7) node [anchor=north west][inner sep=0.75pt]   [align=left] {$\displaystyle V_{4}{}$};
\draw (637.66,154.7) node [anchor=north west][inner sep=0.75pt]   [align=left] {$\displaystyle V_{5}{}$};
\draw (400.99,198.41) node [anchor=north west][inner sep=0.75pt]   [align=left] {$\displaystyle \hat{E}_{1}{}$};
\draw (493,202) node [anchor=north west][inner sep=0.75pt]   [align=left] {$\displaystyle \hat{E}_{2}{}$};
\draw (588,90) node [anchor=north west][inner sep=0.75pt]   [align=left] {$\displaystyle \hat{E}_{3}{}$};
\draw (493,271.33) node [anchor=north west][inner sep=0.75pt]   [align=left] {$\displaystyle Z_{2}$};
\draw (400.99,272) node [anchor=north west][inner sep=0.75pt]   [align=left] {$\displaystyle Z_{1}$};
\draw (588,208) node [anchor=north west][inner sep=0.75pt]   [align=left] {$\displaystyle Z_{3}$};

\end{tikzpicture}

}
\caption{The neural network architecture.}
\label{fig:architecture}
\end{minipage}
\end{figure}

Following the construction in \cref{prop:equivalent_g_constrianed}, we use one latent variable for each $C^2$ component. As we explain in \cref{sec:illus_scm}, this causal model has three $C^2$ components, $ \{V_1,V_2,V_3\}, \{V_3,V_4\}, \{V_4,V_5\} $. In the canonical representation, exactly the latent variables enter their corresponding $C^2$ component. The structure equation of this SCM is as follows. 
\begin{align}
    &V_1 = f_1(V_2,E_1), \notag\\
    &V_2 = f_2(E_1), \notag\\
    &V_3 = f_3(V_1,V_2,E_1,E_2), \label{eq:se_canonical}\\
    &V_4 = f_4(V_3.E_2), \notag\\
    &V_5 = f_5(V_4,E_3), \notag
\end{align}
If we set $E_1 = (U_1,U_2), E_2 = U_4, E_3 = U_3$, (\ref{eq:se_canonical}) is equivalent to (\ref{eq:se_example}). Therefore, we can see from this example that the canonical representation does not lose any information about the SCM. 

Now, we show how to construct the NCM architecture from a canonical representation. As we mentioned in Section 3, we approximate the latent distribution by pushing forward uniform and Gumbel variables. The structure equation of the NCM is 
\begin{align}
    &V_1 = f^{\theta_1}_1(V_2,g_1^{\theta_1}(Z_1)), \notag\\
    &V_2 = f^{\theta_2}_2(g_1^{\theta_1}(Z_1)), \notag\\
    &V_3 = f^{\theta_2}_3(V_1,V_2,g_1^{\theta_1}(Z_1),g_2^{\theta_1}(Z_2)), \label{eq:se_canonical2}\\
    &V_4 = f_4^{\theta_4}(V_3.g_2^{\theta_1}(Z_2)), \notag\\
    &V_5 = f_5^{\theta_5}(V_4,g_2^{\theta_3}(Z_3)), \notag
\end{align}
where $f_i^{\theta_i}, g_j^{\theta_j}$ are neural networks, $Z_i$ are join distribution of independent uniform and Gumbel variables, and $g_j^{\theta_j}$ has the special architecture described in Section 3.1 and Section 3.2. \cref{fig:architecture} shows the architecture of the NCM. Each in-edge represents an input of a neural net.  

\subsection{Proof of \cref{thm:approximation}}
The causal graph $ \mathcal{G} $ does not specify how latent variables influence the observed variables. There could be many ways of recovering the latent variables from a graph $\mathcal{G}$. \cref{fig:example} shows an example where two different causal models have the same causal graph. The next proposition gives a canonical representation of a causal model. 
\begin{figure}[!htb]
  \centering

\tikzset{every picture/.style={line width=0.75pt}} 

\begin{tikzpicture}[x=0.75pt,y=0.75pt,yscale=-1,xscale=1]

\draw   (35.29,163.53) .. controls (35.29,154.31) and (42.91,146.84) .. (52.32,146.84) .. controls (61.73,146.84) and (69.35,154.31) .. (69.35,163.53) .. controls (69.35,172.76) and (61.73,180.23) .. (52.32,180.23) .. controls (42.91,180.23) and (35.29,172.76) .. (35.29,163.53) -- cycle ;
\draw   (129.98,164.2) .. controls (129.98,154.98) and (137.61,147.5) .. (147.02,147.5) .. controls (156.42,147.5) and (164.05,154.98) .. (164.05,164.2) .. controls (164.05,173.43) and (156.42,180.9) .. (147.02,180.9) .. controls (137.61,180.9) and (129.98,173.43) .. (129.98,164.2) -- cycle ;
\draw    (69.35,163.53) -- (127.98,164.18) ;
\draw [shift={(129.98,164.2)}, rotate = 180.63] [color={rgb, 255:red, 0; green, 0; blue, 0 }  ][line width=0.75]    (10.93,-3.29) .. controls (6.95,-1.4) and (3.31,-0.3) .. (0,0) .. controls (3.31,0.3) and (6.95,1.4) .. (10.93,3.29)   ;
\draw   (224.68,164.87) .. controls (224.68,155.65) and (232.31,148.17) .. (241.71,148.17) .. controls (251.12,148.17) and (258.75,155.65) .. (258.75,164.87) .. controls (258.75,174.09) and (251.12,181.57) .. (241.71,181.57) .. controls (232.31,181.57) and (224.68,174.09) .. (224.68,164.87) -- cycle ;
\draw    (164.05,164.2) -- (222.68,164.85) ;
\draw [shift={(224.68,164.87)}, rotate = 180.63] [color={rgb, 255:red, 0; green, 0; blue, 0 }  ][line width=0.75]    (10.93,-3.29) .. controls (6.95,-1.4) and (3.31,-0.3) .. (0,0) .. controls (3.31,0.3) and (6.95,1.4) .. (10.93,3.29)   ;
\draw  [dash pattern={on 0.84pt off 2.51pt}]  (147.02,112.9) -- (147.02,145.5) ;
\draw [shift={(147.02,147.5)}, rotate = 270] [color={rgb, 255:red, 0; green, 0; blue, 0 }  ][line width=0.75]    (10.93,-3.29) .. controls (6.95,-1.4) and (3.31,-0.3) .. (0,0) .. controls (3.31,0.3) and (6.95,1.4) .. (10.93,3.29)   ;
\draw  [dash pattern={on 0.84pt off 2.51pt}]  (164.16,104.87) -- (239.97,147.2) ;
\draw [shift={(241.71,148.17)}, rotate = 209.17] [color={rgb, 255:red, 0; green, 0; blue, 0 }  ][line width=0.75]    (10.93,-3.29) .. controls (6.95,-1.4) and (3.31,-0.3) .. (0,0) .. controls (3.31,0.3) and (6.95,1.4) .. (10.93,3.29)   ;
\draw   (129.98,96.2) .. controls (129.98,86.98) and (137.61,79.5) .. (147.02,79.5) .. controls (156.42,79.5) and (164.05,86.98) .. (164.05,96.2) .. controls (164.05,105.43) and (156.42,112.9) .. (147.02,112.9) .. controls (137.61,112.9) and (129.98,105.43) .. (129.98,96.2) -- cycle ;
\draw  [dash pattern={on 0.84pt off 2.51pt}]  (130.16,104.87) -- (54.08,145.89) ;
\draw [shift={(52.32,146.84)}, rotate = 331.67] [color={rgb, 255:red, 0; green, 0; blue, 0 }  ][line width=0.75]    (10.93,-3.29) .. controls (6.95,-1.4) and (3.31,-0.3) .. (0,0) .. controls (3.31,0.3) and (6.95,1.4) .. (10.93,3.29)   ;
\draw   (302.29,164.53) .. controls (302.29,155.31) and (309.91,147.84) .. (319.32,147.84) .. controls (328.73,147.84) and (336.35,155.31) .. (336.35,164.53) .. controls (336.35,173.76) and (328.73,181.23) .. (319.32,181.23) .. controls (309.91,181.23) and (302.29,173.76) .. (302.29,164.53) -- cycle ;
\draw   (396.98,165.2) .. controls (396.98,155.98) and (404.61,148.5) .. (414.02,148.5) .. controls (423.42,148.5) and (431.05,155.98) .. (431.05,165.2) .. controls (431.05,174.43) and (423.42,181.9) .. (414.02,181.9) .. controls (404.61,181.9) and (396.98,174.43) .. (396.98,165.2) -- cycle ;
\draw    (336.35,164.53) -- (394.98,165.18) ;
\draw [shift={(396.98,165.2)}, rotate = 180.63] [color={rgb, 255:red, 0; green, 0; blue, 0 }  ][line width=0.75]    (10.93,-3.29) .. controls (6.95,-1.4) and (3.31,-0.3) .. (0,0) .. controls (3.31,0.3) and (6.95,1.4) .. (10.93,3.29)   ;
\draw   (491.68,165.87) .. controls (491.68,156.65) and (499.31,149.17) .. (508.71,149.17) .. controls (518.12,149.17) and (525.75,156.65) .. (525.75,165.87) .. controls (525.75,175.09) and (518.12,182.57) .. (508.71,182.57) .. controls (499.31,182.57) and (491.68,175.09) .. (491.68,165.87) -- cycle ;
\draw    (431.05,165.2) -- (489.68,165.85) ;
\draw [shift={(491.68,165.87)}, rotate = 180.63] [color={rgb, 255:red, 0; green, 0; blue, 0 }  ][line width=0.75]    (10.93,-3.29) .. controls (6.95,-1.4) and (3.31,-0.3) .. (0,0) .. controls (3.31,0.3) and (6.95,1.4) .. (10.93,3.29)   ;
\draw  [dash pattern={on 0.84pt off 2.51pt}]  (381.16,108.87) -- (412.74,146.96) ;
\draw [shift={(414.02,148.5)}, rotate = 230.34] [color={rgb, 255:red, 0; green, 0; blue, 0 }  ][line width=0.75]    (10.93,-3.29) .. controls (6.95,-1.4) and (3.31,-0.3) .. (0,0) .. controls (3.31,0.3) and (6.95,1.4) .. (10.93,3.29)   ;
\draw   (346.98,97.2) .. controls (346.98,87.98) and (354.61,80.5) .. (364.02,80.5) .. controls (373.42,80.5) and (381.05,87.98) .. (381.05,97.2) .. controls (381.05,106.43) and (373.42,113.9) .. (364.02,113.9) .. controls (354.61,113.9) and (346.98,106.43) .. (346.98,97.2) -- cycle ;
\draw  [dash pattern={on 0.84pt off 2.51pt}]  (348.16,112.87) -- (320.59,146.29) ;
\draw [shift={(319.32,147.84)}, rotate = 309.52] [color={rgb, 255:red, 0; green, 0; blue, 0 }  ][line width=0.75]    (10.93,-3.29) .. controls (6.95,-1.4) and (3.31,-0.3) .. (0,0) .. controls (3.31,0.3) and (6.95,1.4) .. (10.93,3.29)   ;
\draw   (397.98,223.2) .. controls (397.98,213.98) and (405.61,206.5) .. (415.02,206.5) .. controls (424.42,206.5) and (432.05,213.98) .. (432.05,223.2) .. controls (432.05,232.43) and (424.42,239.9) .. (415.02,239.9) .. controls (405.61,239.9) and (397.98,232.43) .. (397.98,223.2) -- cycle ;
\draw  [dash pattern={on 0.84pt off 2.51pt}]  (479.16,109.87) -- (510.74,147.96) ;
\draw [shift={(512.02,149.5)}, rotate = 230.34] [color={rgb, 255:red, 0; green, 0; blue, 0 }  ][line width=0.75]    (10.93,-3.29) .. controls (6.95,-1.4) and (3.31,-0.3) .. (0,0) .. controls (3.31,0.3) and (6.95,1.4) .. (10.93,3.29)   ;
\draw   (444.98,98.2) .. controls (444.98,88.98) and (452.61,81.5) .. (462.02,81.5) .. controls (471.42,81.5) and (479.05,88.98) .. (479.05,98.2) .. controls (479.05,107.43) and (471.42,114.9) .. (462.02,114.9) .. controls (452.61,114.9) and (444.98,107.43) .. (444.98,98.2) -- cycle ;
\draw  [dash pattern={on 0.84pt off 2.51pt}]  (446.16,113.87) -- (418.59,147.29) ;
\draw [shift={(417.32,148.84)}, rotate = 309.52] [color={rgb, 255:red, 0; green, 0; blue, 0 }  ][line width=0.75]    (10.93,-3.29) .. controls (6.95,-1.4) and (3.31,-0.3) .. (0,0) .. controls (3.31,0.3) and (6.95,1.4) .. (10.93,3.29)   ;
\draw  [dash pattern={on 0.84pt off 2.51pt}]  (397.98,223.2) -- (321.08,182.18) ;
\draw [shift={(319.32,181.23)}, rotate = 28.08] [color={rgb, 255:red, 0; green, 0; blue, 0 }  ][line width=0.75]    (10.93,-3.29) .. controls (6.95,-1.4) and (3.31,-0.3) .. (0,0) .. controls (3.31,0.3) and (6.95,1.4) .. (10.93,3.29)   ;
\draw  [dash pattern={on 0.84pt off 2.51pt}]  (432.05,223.2) -- (493.5,181.99) ;
\draw [shift={(495.16,180.87)}, rotate = 146.15] [color={rgb, 255:red, 0; green, 0; blue, 0 }  ][line width=0.75]    (10.93,-3.29) .. controls (6.95,-1.4) and (3.31,-0.3) .. (0,0) .. controls (3.31,0.3) and (6.95,1.4) .. (10.93,3.29)   ;

\draw (45,157) node [anchor=north west][inner sep=0.75pt]   [align=left] {$\displaystyle V_{1}$};
\draw (140,157) node [anchor=north west][inner sep=0.75pt]   [align=left] {$\displaystyle V_{2}$};
\draw (234,157) node [anchor=north west][inner sep=0.75pt]   [align=left] {$\displaystyle V_{3}{}$};
\draw (140,89) node [anchor=north west][inner sep=0.75pt]   [align=left] {$\displaystyle U_{1}$};
\draw (312,158) node [anchor=north west][inner sep=0.75pt]   [align=left] {$\displaystyle V_{1}$};
\draw (407,158) node [anchor=north west][inner sep=0.75pt]   [align=left] {$\displaystyle V_{2}$};
\draw (500,158) node [anchor=north west][inner sep=0.75pt]   [align=left] {$\displaystyle V_{3}{}$};
\draw (354,90) node [anchor=north west][inner sep=0.75pt]   [align=left] {$\displaystyle U_{2}$};
\draw (408,216) node [anchor=north west][inner sep=0.75pt]   [align=left] {$\displaystyle U_{1}$};
\draw (452,91) node [anchor=north west][inner sep=0.75pt]   [align=left] {$\displaystyle U_{3}$};
\end{tikzpicture}
    \caption{Example: two different SCMs with the same causal graph. }
    \label{fig:example}
  \end{figure}
\begin{proposition}\label{prop:equivalent_g_constrianed}
    Suppose that \cref{assump:independence_u} and \cref{assump:categorical_rv} hold, given any SCM $\mathcal{M} $ with causal graph $\mathcal{G}$ and latent variables $\boldsymbol{U}$, we can construct a canonical SCM $\hat{\mathcal{M}}$ of the form (\ref{eq:structure_equation}) by merging the latent variables in $\boldsymbol{U}$ such that $\mathcal{M}$ and $\hat{\mathcal{M}}$ produce the same intervention results. Besides, functions in $\hat{\mathcal{M}}$ have the same smoothness as $\mathcal{M}$.
\end{proposition}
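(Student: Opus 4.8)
The plan is to build $\hat{\mathcal{M}}$ by \emph{grouping} the latent variables of $\mathcal{M}$ according to the $C^2$-components of $\mathcal{G}$, and then to argue that this grouping changes neither the interventional distributions nor the causal graph, while only restricting (never worsening) the regularity of the structural functions. The key structural observation is the following: for a non-Gumbel latent $U_k\in\boldsymbol{U}$, let $S_k=\{V_i : U_k\in\boldsymbol{U}_{V_i}\}$ be the set of observed nodes it feeds into. For any $V_i,V_j\in S_k$ we have $U_k\in\boldsymbol{U}_{V_i}\cap\boldsymbol{U}_{V_j}$, so under the stated convention that a shared latent parent induces a bi-directed arrow, $V_i$ and $V_j$ are joined by a bi-directed edge in $\mathcal{G}$; hence $S_k$ is a clique (possibly a singleton) in the bi-directed subgraph, and is therefore contained in at least one $C^2$-component. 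I would fix, for each $k$, one such $C^2$-component $c(k)\supseteq S_k$ (breaking ties by an arbitrary fixed ordering of the $C^2$-components), and define, for each $C^2$-component $C$, the merged latent $U_C=(U_k : c(k)=C)$. Since the $\{U_k\}$ are mutually independent (\cref{assump:independence_u}) and $k\mapsto c(k)$ partitions them, the $U_C$ are mutually independent and independent of the Gumbel variables $\{G_{V_i}\}$, which we keep unchanged; this is exactly the latent structure required by the canonical form.

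Next I would define the canonical structural functions. For continuous $V_i$, every $U_k\in\boldsymbol{U}_{V_i}$ satisfies $V_i\in S_k\subseteq c(k)$, so $U_k$ is a coordinate of $U_{c(k)}$ with $V_i\in c(k)$; thus the canonical latent input $\{U_C : V_i\in C\}$ contains, as coordinates, every latent that $f_i$ actually reads. I set $\hat f_i(\mathrm{Pa}(V_i),(U_C)_{V_i\in C})$ to be $f_i$ precomposed with the coordinate projection onto those relevant coordinates. Since coordinate projections are linear (hence smooth and $1$-Lipschitz) and adjoining unread inputs never increases a modulus of continuity, $\hat f_i$ inherits the Lipschitz constant and differentiability class of $f_i$. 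For categorical $V_i$, \cref{assump:categorical_rv} already provides the $\arg\max$-of-Gumbel form, so I only replace $f_i$ by its normalization $f_i/\|f_i\|_1$ (the normalizing factor is a smooth, strictly positive function of the arguments, so smoothness is preserved) and restrict its latent argument exactly as above, reusing the same $G_{V_i}$.

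It then remains to verify the two equivalences. For interventions, identify the sample space of $\hat{\boldsymbol{U}}$ with that of $\boldsymbol{U}$ through the regrouping bijection; for any intervention $\boldsymbol{T}=\boldsymbol{t}$ and any realization of the latents, an induction along a topological order of $\mathcal{G}_0$ shows $\hat V_i(\boldsymbol{t})=V_i(\boldsymbol{t})$ pointwise: the intervened coordinates agree trivially, and for the rest $\hat f_i$ evaluates $f_i$ on the same parent values (inductive hypothesis) and the same latent values (by construction), and likewise for the categorical $\arg\max$ with the shared Gumbel draws. Pushing forward gives $P^{\hat{\mathcal{M}}}(\hat{\boldsymbol{V}}(\boldsymbol{t}))=P^{\mathcal{M}}(\boldsymbol{V}(\boldsymbol{t}))$, which for $\boldsymbol{T}=\emptyset$ also recovers the observational distribution. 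For the graph, the directed part is untouched since $\mathrm{Pa}(\cdot)$ is unchanged, and $V_i,V_j$ share a latent in $\hat{\mathcal{M}}$ iff some $U_C$ has $V_i,V_j\in C$, i.e.\ iff $V_i,V_j$ lie in a common $C^2$-component, which by the definition of a $C^2$-component happens iff $V_i\leftrightarrow V_j$ in $\mathcal{G}$.

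I expect the main obstacle to be the well-definedness of the grouping when $C^2$-components overlap: proving cleanly the ``$S_k$ is a clique, hence contained in a $C^2$-component'' lemma, arranging the tie-breaking so that the $U_C$ genuinely partition $\{U_k\}$, and treating the corner case in which some $C^2$-component receives no latent (there one adjoins a degenerate $U_C$, which affects neither the distributions nor---under the convention that a common latent parent \emph{is} a bi-directed edge---the graph). The rest is bookkeeping.
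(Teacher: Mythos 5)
Your proof is correct and follows essentially the same approach as the paper: partition the (non-Gumbel) latent variables according to $C^2$-components via the sets $I(U_k)=S_k$, merge them into the $U_C$'s, keep the structural functions as projections of the originals, and verify both that interventional distributions are unchanged (by the pushforward identity) and that the causal graph is preserved. You are somewhat more careful than the paper on a few points it glosses over---explicitly proving that each $S_k$ is a clique in the bi-directed subgraph and hence contained in some $C^2$-component so that the assignment $k\mapsto c(k)$ is well-defined, spelling out the tie-breaking and the degenerate-$U_C$ corner case, and noting the normalization $f_i/\|f_i\|_1$ needed to literally match the canonical form for categorical nodes (which is harmless since the Gumbel $\arg\max$ is invariant under rescaling of $f_i$).
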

\begin{proof}[Proof of Proposition \ref{prop:equivalent_g_constrianed}]
 Let $ \boldsymbol{G} =\left\{G_{V_{i}} :V_{i} \ \text{is categorical}\right\}$ and the latent variables in the original model $ \mathcal{M}$ be $\boldsymbol{U} =\{U_{1} ,\cdots ,U_{n_{U}} \}\cup \boldsymbol{G}$.  By \cref{assump:independence_u} and \cref{assump:categorical_rv}, $ \boldsymbol{U}$ are independent and the structure equations of $ \mathcal{M}$ have the form 
\begin{equation*}
V_{i} =\begin{cases}
f_{i}\left(\text{Pa} (V_{i} ),\boldsymbol{U}_{V_{i}}\right) , & V_{i} \ \text{is continuous} ,\\
\arg\max_{k\in [ n_{i}]}\left\{g_{k}^{V_{i}} +\log\left( f_{i}\left(\text{Pa}( V_{i}) ,\boldsymbol{U}_{V_{i}}\right)\right)_{k}\right\} , & V_{i} \ \text{is categorical} ,
\end{cases}
\end{equation*}where $ g_{k}^{V_{i}}$ are i.i.d. standard Gumbel variables and $ \boldsymbol{U}_{V_i} \subset\{U_{1} ,\cdots ,U_{n_{U}} \}$  contains the latent variables that affect $ V_{i}$. We regroup and merge the variables $ {U}_i $ to make the model have a canonical form while not changing the functions in the structure equations.  Let $ D_{1} ,\cdots ,D_{n_{C}} \subset \boldsymbol{V}$ be the $ C^{2}$-component of $ \mathcal{M}$. For each $ {U}_{i}$, we define the vertices that are affected by $ U_{i}$ as
  \begin{equation*}
  I({U}_{i}) =\{V_{j} :{U}_{i} \in {\boldsymbol{U}}_{V_{j}}\} .
  \end{equation*}
  We partition $\{U_{1} ,\cdots ,U_{n_{U}}\}$ into $ n_{C}$ sets $ \hat{\boldsymbol{U}}_{1} ,\cdots ,\hat{\boldsymbol{U}}_{n_{C}}$ in the following way. For each $ {U}_{k}$, $ {U}_{k}$ is in the set $ \hat{\boldsymbol{U}}_{i}$ if $ I({U}_{k}) \subset D_{i}$. If there are two components $ D_{i} ,D_{j}$ satisfy the condition, we put $ {U}_k$ into either of the sets $ \hat{\boldsymbol{U}}_{i} ,\hat{\boldsymbol{U}}_{j}$. Let $ \hat{U}_{i}$ have the same distribution as the joint distribution of the random variables in set $ \hat{\boldsymbol{U}}_{i}$. Let $ \hat{\mathcal{M}} \in \mathcal{M}(\mathcal{G} ,{\mathcal{F}} ,\hat{\boldsymbol{U}})$ the SCM with structure equations 
  \begin{equation*}
\hat{V}_{i} =\begin{cases}
f_{i}\left(\text{Pa} (\hat{V}_{i} ),\hat{U}_{k_{1}} ,\cdots ,\hat{U}_{k_{n_{i}}}\right) , & V_{i} \ \text{is continuous} ,\\
\arg\max_{k\in [ n_{i}]}\left\{g_{k}^{V_{i}} +\log\left( f_{i}\left(\text{Pa}(\hat{V}_{i}) ,\hat{U}_{k_{1}} ,\cdots ,\hat{U}_{k_{n_{i}}}\right)\right)_{k}\right\} , & V_{i} \ \text{is categorical} ,
\end{cases} ,\hat{\boldsymbol{U}}_{k_{i}} \cap \boldsymbol{U}_{V_i} \neq \emptyset ,\ i\in [n_V].
\end{equation*}
  Here, we slightly abuse the notation $ {f}_i $. $ {f}_i $ ignores inputs from $ \hat{\boldsymbol{U}}_{k_{1}} ,\cdots ,\hat{\boldsymbol{U}}_{k_{n_{i}}} $ that are not in $ {\boldsymbol{U}_{V_i}} $.
  Note that $ \hat{\boldsymbol{U}}_{1} ,\cdots ,\hat{\boldsymbol{U}}_{n_{C}},\boldsymbol{G} $ has the same distribution as $ {\boldsymbol{U}} $ because we only merge some latent variables. In addition, the functions in the new model $\hat{\mathcal{M}}$ has the same smoothness as the original model.\par  
  
  We first verify that the causal graph of $ \hat{\mathcal{M}}$ is $ \mathcal{G}$. If there is a bi-directed arrow between nodes $ V_{i} ,V_{j}$ in $ \mathcal{G}$, by the independence assumption, there must be one latent variable $ {U}_{k} \in {\boldsymbol{U}}_{V_{i}} \cap {\boldsymbol{U}}_{V_{j}}$. There exist one $ D_{l}$ such that $ I({U}_{k}) \subset D_{l}$ and $ \hat{\boldsymbol{U}}_{l} \cap {\boldsymbol{U}}_{V_{i}} \neq \emptyset ,\hat{\boldsymbol{U}}_{l} \cap {\boldsymbol{U}}_{V_{j}} \neq \emptyset $. Therefore, $ \hat{U}_{l}$ will affect both $ V_{i}$ and $ V_{j}$ and there is a bi-directed arrow between node $ V_{i} ,V_{j}$ in $ \mathcal{G}_{\hat{\mathcal{M}}}$. Suppose that there is a bi-directed arrow between node $ V_{i} ,V_{j}$ in $ \mathcal{G}_{\hat{\mathcal{M}}}$, it means there exist a $ \hat{U_{k}}$ such that $ \hat{\boldsymbol{U}}_{k} \cap {\boldsymbol{U}}_{V_{i}} \neq \emptyset ,\hat{\boldsymbol{U}}_{k} \cap {\boldsymbol{U}}_{V_{j}} \neq \emptyset $. Let $ {U}_{l_{1}} \in \hat{\boldsymbol{U}}_{k} \cap {\boldsymbol{U}}_{V_{i}} ,{U}_{l_{2}} \in \hat{\boldsymbol{U}}_{k} \cap {\boldsymbol{U}}_{V_{j}}$, then 
\begin{equation*}
V_{i} \in I({U}_{l_{1}}) \subset D_{k} ,V_{j} \in I({U}_{l_{2}}) \subset D_{k} .
\end{equation*}Since $ D_{k}$ is a $ C^{2}$-component, there exist a bi-directed arrow between node $ V_{i} ,V_{j}$ in $ \mathcal{G}$. Therefore, $\mathcal{G} = \mathcal{G}_{\hat{\mathcal{M}}}$.

Finally, we verify that $\mathcal{M}$ and $\hat{\mathcal{M}}$ produce the same intervention results. The intervention distribution can be viewed as the push-forward of the latent distribution, i.e., $ \boldsymbol{V} = F(\boldsymbol{U}) $, and the intervention operation only changes the function $F$.  In our construction, we only merge the latent variables and leave the functions in structure equations being the same (except that they may ignore some coordinates in input). For any intervention $T=t$, suppose in $\mathcal{M}$ we have  $\boldsymbol{V}(t) = F_t(\boldsymbol{U})$. Then, in $\hat{\mathcal{M}}$, we get $\hat{\boldsymbol{V}}(t) = F_t(\hat{\boldsymbol{U}})$. Since $\boldsymbol{U} $ and $\hat{\boldsymbol{U}}$ has the same distribution, the distribution of ${\boldsymbol{V}}(t), \hat{\boldsymbol{V}}(t)$ are the same.
\end{proof}

\begin{proof}[Proof of \cref{thm:approximation}]\label{proof:approximation}
 The structure equations of $ \mathcal{M}$ have the form 
\begin{equation*}
V_{i} =\begin{cases}
f_{i}\left(\text{Pa} (V_{i} ),\boldsymbol{U}_{V_{i}}\right) , & V_{i} \ \text{is continuous} ,\\
\arg\max_{k\in [ n_{i}]}\left\{g_{k}^{V_{i}} +\log\left( f_{i}\left(\text{Pa}( V_{i}) ,\boldsymbol{U}_{V_{i}}\right)\right)_{k}\right\} , & V_{i} \ \text{is categorical} , \|f_i\|_1 = 1,
\end{cases}
\end{equation*}where $ g_{k}^{V_{i}}$ are i.i.d. standard Gumbel variables and $ \hat{\mathcal{M}} \in \mathcal{M}(\mathcal{G} ,\hat{\mathcal{F}} ,\hat{\boldsymbol{U}})$ has structure equations 
\begin{equation*}
\hat{V}_{i} =\begin{cases}
\hat{f}_{i}\left(\text{Pa} (\hat{V}_{i} ),\hat{\boldsymbol{U}}_{V_{i}}\right) , & V_{i} \ \text{is continuous} ,\\
\arg\max_{k\in [ n_{i}]}\left\{\hat{g}_{k}^{V_{i}} +\log\left(\hat{f}_{i}\left(\text{Pa}(\hat{V}_{i}) ,\hat{\boldsymbol{U}}_{V_{i}}\right)\right)_{k}\right\} , & V_{i} \ \text{is categorical} , \|\hat{f}_1\|_1 = 1,
\end{cases}
\end{equation*}
where $ \hat{g}_{k}^{V_{i}}$ are i.i.d. standard Gumbel variables. Let the treatment variables set be $ \boldsymbol{T} =\{T_{1} ,\cdots ,T_{n_{T}}\} $. We may give all vertices and $ \boldsymbol{U}_0 = \{ U_1,\cdots,U_{n_U}\}$, an topology ordering (rearrange the subscript if necessary) 
  \begin{equation*}
  U_{1} ,\cdots ,U_{n_{U}} ,T_{1} ,\cdots ,T_{n_{T}} ,V_{1}(\boldsymbol{t}) ,\cdots ,V_{n_{V} - n_{T}}(\boldsymbol{t})
  \end{equation*}
  such that for each directed edge $(V_{i}(\boldsymbol{t}) ,V_{j}(\boldsymbol{t}) ) $, vertex $V_{i}(\boldsymbol{t})$ lies before vertex $V_{j}(\boldsymbol{t})$ in the ordering, ensuring that all edges start from vertices that appear early in the order and end at vertices that appear later in the order. We put $ \boldsymbol{U}_0 $ and $\boldsymbol{T}$ at the beginning because they are the root nodes of the intervened model. We denote $ \mu _{\boldsymbol{U}_0 ,\boldsymbol{T} ,1:k}^{\boldsymbol{T}}$ (resp. $ \hat{\mu }_{\boldsymbol{U}_0 ,\boldsymbol{T} ,1:k}^{\boldsymbol{T}}$) to be the distribution of $ \boldsymbol{U}_0 ,\boldsymbol{T} ,V_{1}(\boldsymbol{t}) ,\cdots ,V_{k}(\boldsymbol{t})$ (resp. $ \hat{\boldsymbol{U}}_0 ,\boldsymbol{T} ,\hat{V}_{1}(\boldsymbol{t}) ,\cdots ,\hat{V}_{k}(\boldsymbol{t})$), $ \mu _{V_{k} |\boldsymbol{U}_0 ,\boldsymbol{T} ,1:k-1}^{\boldsymbol{T}}$ the distribution of $ V_{k}$ given $ \boldsymbol{U}_0 ,\boldsymbol{T} ,V_{1}(\boldsymbol{t}) ,\cdots ,V_{k-1}(\boldsymbol{t})$.
  
  Let $ S = \sum_{i=1}^{n_V} \|f_i-\hat{f}_i\|_\infty $. Next, we prove that 
  \begin{equation}
  W\left( \mu _{\boldsymbol{U}_0 ,\boldsymbol{T} ,1:k}^{\boldsymbol{T}} ,\hat{\mu }_{\hat{\boldsymbol{U}}_0 ,\boldsymbol{T} ,1:k}^{\boldsymbol{T}}\right) \leqslant ( L+1) W\left( \mu _{\boldsymbol{U}_0 ,\boldsymbol{T} ,1:k-1}^{\boldsymbol{T}} ,\hat{\mu }_{\hat{\boldsymbol{U}}_0 ,\boldsymbol{T} ,1:k-1}^{\boldsymbol{T}}\right) + 2K^2S. \label{eq:approximation_middle_step}
  \end{equation}
  By definition of Wasserstein-1 distance,
  \begin{align*}
  W\left( \mu _{\boldsymbol{U}_0 ,\boldsymbol{T} ,1:k}^{\boldsymbol{T}} ,\hat{\mu }_{\hat{\boldsymbol{U}}_0 ,\boldsymbol{T} ,1:k}^{\boldsymbol{T}}\right) & =\sup _{g\in Lip( 1)}\int g(\boldsymbol{u} ,\boldsymbol{t} ,v_{1} ,\cdots ,v_{k}) d\left( \mu _{\boldsymbol{U}_0 ,\boldsymbol{T} ,1:k}^{\boldsymbol{T}} -\hat{\mu }_{\hat{\boldsymbol{U}}_0 ,\boldsymbol{T} ,1:k}^{\boldsymbol{T}}\right)\\
   & =\sup _{g\in Lip( 1)}\int g(\boldsymbol{u} ,\boldsymbol{t} ,v_{1} ,\cdots ,v_{k}) \ d\mu _{V_{k} |\boldsymbol{U}_0 ,\boldsymbol{T} ,1:k-1}^{\boldsymbol{T}} \ d\mu _{{\boldsymbol{U}}_0 ,\boldsymbol{T} ,1:k-1}^{\boldsymbol{T}}\\
   & \ \ \ \ \ \ \ \ \ \ \ \ \ \ \ \ \ -\int g(\boldsymbol{u} ,\boldsymbol{t} ,v_{1} ,\cdots ,v_{k}) \ d\hat{\mu }_{\hat{V}_{k} |\hat{\boldsymbol{U}}_0 ,\boldsymbol{T} ,1:k-1}^{\boldsymbol{T}} \ d\hat{\mu }_{\hat{\boldsymbol{U}}_0 ,\boldsymbol{T} ,1:k-1}^{\boldsymbol{T}}\\
   & = \sup _{g\in Lip( 1)}\underbrace{\int g(\boldsymbol{u} ,\boldsymbol{t} ,v_{1} ,\cdots ,v_{k}) \ d\left( \mu _{V_{k} |\boldsymbol{U}_0 ,\boldsymbol{T} ,1:k-1}^{\boldsymbol{T}} -\hat{\mu }_{\hat{V}_{k} |\hat{\boldsymbol{U}}_0 ,\boldsymbol{T} ,1:k-1}^{\boldsymbol{T}}\right) \ d\hat{\mu }_{\hat{\boldsymbol{U}}_0 ,\boldsymbol{T} ,1:k-1}^{\boldsymbol{T}}}_{(1)}\\
   & \ \ \ +\ \underbrace{\int g(\boldsymbol{u} ,\boldsymbol{t} ,v_{1} ,\cdots ,v_{k}) \ d\mu _{V_{k} |\boldsymbol{U}_0 ,\boldsymbol{T} ,1:k-1}^{\boldsymbol{T}} \ d\left( \mu _{\boldsymbol{U}_0 ,\boldsymbol{T} ,1:k-1}^{\boldsymbol{T}} -\hat{\mu }_{\hat{\boldsymbol{U}}_0 ,\boldsymbol{T} ,1:k-1}^{\boldsymbol{T}}\right)}_{( 2)} .
  \end{align*}
  For $ (1)$, if $ V_{k}$ is a continuous variable, we have 
  \begin{align}
   & \int g(\boldsymbol{u} ,\boldsymbol{t} ,v_{1} ,\cdots ,v_{k}) \ d\left( \mu _{V_{k} |\boldsymbol{U}_0 ,\boldsymbol{T} ,1:k-1}^{\boldsymbol{T}} -\hat{\mu }_{\hat{V}_{k} |\hat{\boldsymbol{U}}_0 ,\boldsymbol{T} ,1:k-1}^{\boldsymbol{T}}\right)\notag\\
   & =g\left(\boldsymbol{u} ,\boldsymbol{t} ,v_{1} ,\cdots ,v_{k-1} ,f_{k}\left(\text{pa}( v_{k}) ,\boldsymbol{u}_{V_{k}}\right)\right) -g\left(\boldsymbol{u} ,\boldsymbol{t} ,v_{1} ,\cdots ,v_{k-1} ,\hat{f}_{k}\left(\text{pa}( v_{k}) ,\boldsymbol{u}_{V_{k}}\right)\right)\notag\\
   & \leqslant \left\Vert f_{k}\left(\text{pa}( v_{k}) ,\boldsymbol{u}_{V_{k}}\right) -\hat{f}_{k}\left(\text{pa}( v_{k}) ,\boldsymbol{u}_{V_{k}}\right)\right\Vert_\infty \leqslant S ,\label{eq:approximation_1}
  \end{align}
  where we have used the Lipschitz property of $ g$ in the first inequality.
  If $ V_{k}$ is categorical, let $ \hat{p}\left(\text{pa} (v_{k} ),\boldsymbol{u}_{V_{k}}\right) =(\hat{p}_{1} ,\cdots ,\hat{p}_{n_{i}}) = {\hat{f}\left(\text{pa} (v_{k} ),\boldsymbol{u}_{V_{k}}\right)}$ and $ p\left(\text{pa} (v_{k} ),\boldsymbol{u}_{V_{k}}\right) =( p_{1} ,\cdots ,p_{n_{i}}) = f\left(\text{pa} (v_{k} ),\boldsymbol{u}_{V_{k}}\right)$ , we get 
\begin{align}
 & \int g(\boldsymbol{u} ,\boldsymbol{t} ,v_{1} ,\cdots ,v_{k} )\ d\left( \mu _{V_{k} |\boldsymbol{U}_{0} ,\boldsymbol{T} ,1:k-1}^{\boldsymbol{T}} -\hat{\mu }_{\hat{V}_{k} |\widehat{\boldsymbol{U}}_{0} ,\boldsymbol{T} ,1:k-1}^{\boldsymbol{T}}\right)\notag \notag \\
 & =\sum _{k=1}^{n_{i} -1}( g(\boldsymbol{u} ,\boldsymbol{t} ,v_{1} ,\cdots ,k)\ -g(\boldsymbol{u} ,\boldsymbol{t} ,v_{1} ,\cdots ,n_{i} ))( p_{k} -\hat{p}_{k}) \leqslant K\sum _{k=1}^{n_{i} -1} |p_{k} -\hat{p}_{k} | \leqslant K^2 S, \label{eq:cate_1}
\end{align}
where we use $ \| V_{k} \|_\infty \leqslant K, n_i \leqslant K $. 

  For (2), if $V_i$ is continuous, 
  \begin{align}
   & \int g(\boldsymbol{u} ,\boldsymbol{t} ,v_{1} ,\cdots ,v_{k}) \ d\mu _{V_{k} |\boldsymbol{U}_0 ,\boldsymbol{T} ,1:k-1}^{\boldsymbol{T}} \ d\left( \mu _{\boldsymbol{U}_0 ,\boldsymbol{T} ,1:k-1}^{\boldsymbol{T}} -\hat{\mu }_{\hat{\boldsymbol{U}}_0 ,\boldsymbol{T} ,1:k-1}^{\boldsymbol{T}}\right)\notag\\
   & =\int g\left(\boldsymbol{u} ,\boldsymbol{t} ,v_{1} ,\cdots ,v_{k-1} ,{f}_{k}\left(\text{pa}( v_{k}) ,\boldsymbol{u}_{V_{k}}\right)\right) \ d\left( \mu _{\boldsymbol{U}_0 ,\boldsymbol{T} ,1:k-1}^{\boldsymbol{T}} -\hat{\mu }_{\hat{\boldsymbol{U}}_0 ,\boldsymbol{T} ,1:k-1}^{\boldsymbol{T}}\right) .\label{eq:approximation_2}
  \end{align}
  Since $ g,{f}_{k}$ are Lipschitz continuous functions, $ g\left(\boldsymbol{u} ,\boldsymbol{t} ,v_{1} ,\cdots ,v_{k-1} ,{f}_{k}\left(\text{pa}( v_{k}) ,\boldsymbol{u}_{V_{k}}\right)\right)$ is $(L+1)$-Lipschitz continuous with respect to $ (\boldsymbol{u} ,\boldsymbol{t} ,v_{1} ,\cdots ,v_{k-1}) $. We have 
  \begin{align}
  \int g\left(\boldsymbol{u} ,\boldsymbol{t} ,v_{1} ,\cdots ,v_{k-1} ,{f}_{k}\left(\text{pa}( v_{k}) ,\boldsymbol{u}_{V_{k}}\right)\right) & d\left( \mu _{\boldsymbol{U}_0 ,\boldsymbol{T} ,1:k-1}^{\boldsymbol{T}} -\hat{\mu }_{\boldsymbol{U}_0 ,\boldsymbol{T} ,1:k-1}^{\boldsymbol{T}}\right) \notag\\
  &\leqslant ( L+1) W\left( \mu _{\boldsymbol{U}_0 ,\boldsymbol{T} ,1:k-1}^{\boldsymbol{T}} ,\hat{\mu }_{\hat{\boldsymbol{U}}_0 ,\boldsymbol{T} ,1:k-1}^{\boldsymbol{T}}\right). \label{eq:approximation_3}
  \end{align}
  If $ V_{i}$ is categorical, 
\begin{align*}
 & \int g(\boldsymbol{u} ,\boldsymbol{t} ,x_{1} ,\cdots ,x_{k}) \ d{\mu }_{X_{k} |{\boldsymbol{U}} ,\boldsymbol{T} ,1:k-1}^{\boldsymbol{T}} \ d\left( \mu _{\boldsymbol{U} ,\boldsymbol{T} ,1:k-1}^{\boldsymbol{T}} -\hat{\mu }_{\hat{\boldsymbol{U}} ,\boldsymbol{T} ,1:k-1}^{\boldsymbol{T}}\right)\\
 &\quad  =\sum _{k=1}^{n_{i}}{p}_{k}\int g(\boldsymbol{u} ,\boldsymbol{t} ,x_{1} ,\cdots ,k) \ d\left( \mu _{\boldsymbol{U} ,\boldsymbol{T} ,1:k-1}^{\boldsymbol{T}} -\hat{\mu }_{\hat{\boldsymbol{U}} ,\boldsymbol{T} ,1:k-1}^{\boldsymbol{T}}\right) .
\end{align*}Since $g,f_{k}$ are $ L$-Lipschitz continuous functions, $g(\boldsymbol{u} ,\boldsymbol{t} ,v_{1} ,\cdots ,v_{k-1} ,i)$ is $L$-Lipschitz continuous with respect to $(\boldsymbol{u} ,\boldsymbol{t} ,v_{1} ,\cdots ,v_{k-1} )$. We have 
\begin{align}\label{eq:cate_2}
\sum _{k=1}^{n_{i}}\int {p}_{k} g(\boldsymbol{u} ,\boldsymbol{t} ,x_{1} ,\cdots ,k) \ d\left( \mu _{\boldsymbol{U}_{0} ,\boldsymbol{T} ,1:k-1}^{\boldsymbol{T}} -\hat{\mu }_{\boldsymbol{U}_{0} ,\boldsymbol{T} ,1:k-1}^{\boldsymbol{T}}\right) & \leqslant LW\left( \mu _{\boldsymbol{U}_{0} ,\boldsymbol{T} ,1:k-1}^{\boldsymbol{T}} ,\hat{\mu }_{\hat{\boldsymbol{U}}_{0} ,\boldsymbol{T} ,1:k-1}^{\boldsymbol{T}}\right) .
\end{align}

  Combine (\ref{eq:approximation_1})-(\ref{eq:cate_2}), we prove \cref{eq:approximation_middle_step}. By induction, one can easily get 
  \begin{align*}
  W\left( \mu _{\boldsymbol{U}_0 ,\boldsymbol{T} ,1:n_{V} -n_{T}}^{\boldsymbol{T}} ,\hat{\mu }_{\hat{\boldsymbol{U}}_0 ,\boldsymbol{T} ,1:n_{V} -n_{T}}^{\boldsymbol{T}}\right) & \leqslant ( L+1)^{n_{V} -n_{T}}\left( W\left( \mu _{\boldsymbol{U}_0 ,\boldsymbol{T}}^{\boldsymbol{T}} ,\hat{\mu }_{\hat{\boldsymbol{U}}_0 ,\boldsymbol{T}}^{\boldsymbol{T}}\right) +2K^2 S /L\right) -2K^2 S /L \\
  & =\frac{( L+1)^{n_{V} -n_{T}} - 1}{L} 2K^2 S +( L+1)^{n_{V} -n_{T}} W\left( \mu _{\boldsymbol{U}_0 ,\boldsymbol{T}}^{\boldsymbol{T}} ,\hat{\mu }_{\hat{\boldsymbol{U}}_0 ,\boldsymbol{T}}^{\boldsymbol{T}}\right)
  \end{align*}
  Let $ C_\mathcal{G}(L,K) = 2K^2\cdot\max\{\frac{( L+1)^{n_{V} -n_{T}} - 1}{L}, ( L+1)^{n_{V} -n_{T}} \} $ and notice that $ W\left( \mu _{\boldsymbol{U}_0 ,\boldsymbol{T}}^{\boldsymbol{T}} ,\hat{\mu }_{\hat{\boldsymbol{U}}_0 ,\boldsymbol{T}}^{\boldsymbol{T}}\right) = W\left( P^{\mathcal{M}}(\boldsymbol{U}), P^{\hat{\mathcal{M}}}(\hat{\boldsymbol{U}})\right) $ because interventions $\boldsymbol{T}=\boldsymbol{t}$ are the same, we get 
  \begin{align}
\label{eq:conclu}
    W\left( P^{\mathcal{M}}(\boldsymbol{V}(\boldsymbol{t})),P^{\hat{\mathcal{M}}}(\boldsymbol{V}(\boldsymbol{t}))\right) &\leqslant  W\left( \mu _{\boldsymbol{U}_0 ,\boldsymbol{T} ,1:n_{V} -n_{T}}^{\boldsymbol{T}} ,\hat{\mu }_{\hat{\boldsymbol{U}}_0 ,\boldsymbol{T} ,1:n_{V} -n_{T}}^{\boldsymbol{T}}\right) \notag\\
    &\leqslant C_{\mathcal{G}}(L,K)\left(S +W\left( P^{\mathcal{M}}(\boldsymbol{U}), P^{\hat{\mathcal{M}}}(\hat{\boldsymbol{U}})\right)\right).
  \end{align}
\end{proof}

\subsection{Proof of results in \cref{sec:wide_nn}: Approximating with Wide NN}

\begin{proof}[Proof of \cref{thm:wide_nn} Part I]
    Recall that the neural network consists of three parts $ \hat{g} =\hat{g}_{3}^{\tau } \circ \hat{g}_{2} \circ \hat{g}_{1}$ and $ \hat{g}_{2} ,\hat{g}_{1}$ have a separable form, dealing with each coordinate of the input individually. As mentioned before, each coordinate approximates the distribution of one connected component of the support. We first construct $ \hat{g}_{1}$ and $ \hat{g}_{2}$, then use Gumbel-Softmax layer to combine each component together.
    
    To construct the first two parts, we only need to consider each coordinate individually. For the $ i$-th component $ C_{i}$, let $ \mu _{i} =\mathbb{P}_{C_{i}} /\mathbb{P}( C_{i})$, where $ \mathbb{P}_{C_{i}}$ is measure $ \mathbb{P}$ restricted to component $ C_{i}$. Under \cref{assump:support}, there exists a Lipschitz map $ g_{2}^{i}\in \mathcal{H}([0,1]^{d^C_i},C_i) \cap \mathcal{F}_L([0,1]^{d^C_i},C_i)$. By \cite[Theorem VIII.1]{perekrestenkoHighDimensionalDistributionGeneration2022a}, there exists quantized ReLu network $ \hat{g}_{1}^{i}$ of width $ W_{1}$ and depth $ \Theta( d_i^C)$ (let $ s=\frac{1}{\lceil n( n-1)\rceil }$ in \cite[Theorem VIII.1]{perekrestenkoHighDimensionalDistributionGeneration2022a}), such that \begin{equation*}
    W\left(\left( (g_{2}^{i})^{-1}\right)_{\#} \mu _{i} ,P(\hat{g}_{1}^{i}(U_{i}))\right) \leqslant O\left( W_{1}^{-1/d_i^C}\right) ,
    \end{equation*}
    where $ U_{i}$ are i.i.d. uniform random variables on $ [ 0,1]$.  By \cite[Theorem 2]{Yarotsky2018OptimalAO}, there exist a deep ReLu network $ \hat{g}_{2}^{i,j}$ of width $ \Theta\left( d_{i}^{C}\right)$ and depth $ L_{2}$ such that     
    \begin{equation*}
    \| \left(g_{2}^{i}\right)_{j} -\hat{g}_{2}^{i,j} \| _{\infty } \leqslant O\left( L_{2}^{-2/d_{i}^{C}}\right) ,
    \end{equation*}
    Let $ \hat{g}_{2}^{i}( x) =\left(\hat{g}_{2}^{i,1}( x) ,\cdots ,\hat{g}_{2}^{i,d}( x)\right)$, we get 
    \begin{equation*}
    \|  g_{2}^{i} -\hat{g}_{2}^{i} \|_\infty \leqslant O\left( L_{2}^{-2/d_{i}^{C}}\right) .
    \end{equation*}
    Thus, the width of $ \hat{g}_{2}^{i}$ is $ \Theta\left( d\cdotp d_{i}^{C}\right)$. Let 
    \begin{equation*}
    \hat{g}_{i}( x) =\left(\hat{g}_{i}^{1}( x_{1}) ,\cdots ,\hat{g}_{i}^{N_{C}}( x_{N_{C}})\right) ,i=1,2,x_{k} \in \mathbb{R}^{d_{i}} ,d_{1} =1,d_{2} =d, k =1,\cdots,N_C,
    \end{equation*}
    where $ N_{C}$ is the number of connected components. By Lemma \ref{lemma:push-forward_err}, we get 
    \begin{align*}
    W\left( \mu _{i} ,P(\hat{g}_{2}^{i} \circ \hat{g}_{1}^{i} (U_{i}))\right) & = W\left(\left( g_{2}^{i}\right)_{\#}\left( g_{2}^{i}\right)^{-1}_{\#} \mu _{i} ,\left(\hat{g}_{2}^{i}\right)_{\#} P(\hat{g}_{1}^{i}(U_{i}))\right)\\
     & \leqslant LW\left(\left( g_{2}^{i}\right)^{-1}_{\#} \mu _{i} ,P(\hat{g}_{1}^{i}(U_{i}))\right) +\| g_{2}^{i} -\hat{g}_{2}^{i} \| _{\infty }\\
     & =O\left( W_{1}^{-1/d_i^C} +L_{2}^{-2/d_{i}^{C}}\right) .
    \end{align*}
    Next, let $ p_{i} =\mathbb{P}( C_{i})$ and the distribution of $\hat{g}_{2}^{k} \circ\hat{g}_{1}^{k}(U_{k})$ be $\hat{\mu}_k$, then we have $ \mathbb{P} =\sum _{k=1}^{N_{C}} p_{k} \mu _{k}$. By Lemma \ref{lemma:combin_err}, we have     
    \begin{align*}
    W\left(\mathbb{P} ,\sum _{k=1}^{N_{C}} p_{k}\hat{\mu}_k \right) & \leqslant \sum _{k=1}^{N_{C}} p_{k} W\left( \mu _{i} ,\hat{\mu}_k\right) = O\left( W_{1}^{-1/\max\{d_i^C\}} +L_{2}^{-2/\max\{d_i^C\}}\right) .
    \end{align*}
    Finally, we analyze the error caused by the Gumbel-Softmax layer. Note that as temperature parameter $ \tau \rightarrow 0$,     
    \begin{equation*}
    w^\tau = \left(\frac{\exp ((\log p_{i} +G_{i} )/\tau )}{\sum _{k=1}^{N_{C}}\exp ((\log p_{k} +G_{k} )/\tau )}\right)_{i=1,\cdots ,N_{C}}\xrightarrow{a.s.}\text{One-hot}(\arg\max_{i}\{G_{i} +\log p_{i}\})
    \end{equation*}
    where $ \text{One-hot}( k)$ is $ N_{C}$-dimensional vector with the $ k$-th coordinate equals to $ 1$ and the remaining coordinates are $ 0$ and $ G_{i} \sim \text{Gumbel}( 0,1)$ are i.i.d. Gumbel distribution.  Let $ \nu ^{\tau }$ be the distribution of $ w^{\tau }$.  By Lemma \ref{lemma:gumbel_err}, for $ 0< \tau < 1$, we have 
\begin{equation*}
W\left( \nu ^{\tau } ,\nu ^{0}\right) \leqslant O( \tau -\tau \log \tau ) .
\end{equation*}
By Lemma \ref{lemma:product_err}, 
\begin{equation}
W\left(\left( \nu ^{\tau } ,P(\left(\hat{g}_{2}^{i} \circ\hat{g}_{1}^{i}\right) (U_{i}))\right) ,\left( \nu ^{0} , P(\left(\hat{g}_{2}^{i} \circ\hat{g}_{1}^{i}\right) (U_{i}))\right)\right) \leqslant O\left( \tau -\tau \log \tau\right) .\label{eq:joint_dist_gumbel}
\end{equation}
Let 
\begin{equation*}
\hat{g}_{3}^{\tau }( x_{1} ,\cdots ,x_{N_{C}}) =\sum _{k=1}^{N_{C}} w_{k}^{\tau } \cdotp x_{k} ,\quad x_{k} \in \mathbb{R}^{d} ,\ w^{\tau } =\left( w_{1}^{\tau } ,\cdots ,w_{N_{C}}^{\tau }\right) .
\end{equation*}
We denote $ g^0_3 = \lim_{\tau \rightarrow 0}g^\tau_3 $. By \cref{assump:support}, the support is bounded. Thus, $ g_{2}^{i}$ and  $ \hat{g}_{2}^{i}$ are bounded, i.e., $ \| \hat{g}_{2}^{i} \ ( x) \| _{\infty } \leqslant K$. Since functions $ h( w,X) =w^{\mathsf{T}} X$ is Lipschitz continuous in the region $ \| w\| _{1} =1,\| X\| _{\infty } \leqslant K$, we get 
\begin{equation}\label{eq:gumbel_err}
W\left(P(\hat{g}_{3}^{0} \circ \hat{g}_{2} \circ\hat{g}_{1} (U_{k})) ,P(\hat{g}_{3}^{\tau } \circ \hat{g}_{2} \circ\hat{g}_{1}(U_{k}))\right) \leqslant O\left( \tau -\tau \log \tau\right)
\end{equation}
from (\ref{eq:joint_dist_gumbel}). Putting things together, we get 
\begin{align*}
W\left(\mathbb{P} ,P(\hat{g}_{3}^{\tau } \circ \hat{g}_{2} \circ\hat{g}_{1} (U_{k}))\right) & \leqslant W\left(\mathbb{P} ,\sum _{k=1}^{N_{C}} p_{k}\hat{\mu}_k\right) +W\left(\sum _{k=1}^{N_{C}} p_{k}\hat{\mu}_k ,P(\hat{g}_{3}^{\tau } \circ \hat{g}_{2} \circ\hat{g}_{1} (U_{k}))\right)\\
 & = W\left(\mathbb{P} ,\sum _{k=1}^{N_{C}} p_{k}\hat{\mu}_k \right) +W\left(P(\hat{g}_{3}^{0} \circ \hat{g}_{2} \circ\hat{g}_{1} (U_{k})) ,P(\hat{g}_{3}^{\tau } \circ \hat{g}_{2} \circ\hat{g}_{1} (U_{k}))\right)\\
 & \leqslant O\left( \tau -\tau \log \tau +W_{1}^{-1/\max\{d_i^C\}} +L_{2}^{-2/\max\{d_i^C\}}\right),
\end{align*}
where we use (\ref{eq:gumbel_err}). 
    \end{proof}

\subsection{Proof of results in \cref{sec:wide_nn}: Approximating with Deep NN}
 
\tikzset{
pattern size/.store in=\mcSize, 
pattern size = 5pt,
pattern thickness/.store in=\mcThickness, 
pattern thickness = 0.3pt,
pattern radius/.store in=\mcRadius, 
pattern radius = 1pt}
\makeatletter
\pgfutil@ifundefined{pgf@pattern@name@_4e5yvm3u7}{
\pgfdeclarepatternformonly[\mcThickness,\mcSize]{_4e5yvm3u7}
{\pgfqpoint{0pt}{0pt}}
{\pgfpoint{\mcSize}{\mcSize}}
{\pgfpoint{\mcSize}{\mcSize}}
{
\pgfsetcolor{\tikz@pattern@color}
\pgfsetlinewidth{\mcThickness}
\pgfpathmoveto{\pgfqpoint{0pt}{\mcSize}}
\pgfpathlineto{\pgfpoint{\mcSize+\mcThickness}{-\mcThickness}}
\pgfpathmoveto{\pgfqpoint{0pt}{0pt}}
\pgfpathlineto{\pgfpoint{\mcSize+\mcThickness}{\mcSize+\mcThickness}}
\pgfusepath{stroke}
}}
\makeatother

 
\tikzset{
pattern size/.store in=\mcSize, 
pattern size = 5pt,
pattern thickness/.store in=\mcThickness, 
pattern thickness = 0.3pt,
pattern radius/.store in=\mcRadius, 
pattern radius = 1pt}
\makeatletter
\pgfutil@ifundefined{pgf@pattern@name@_nfc477lev}{
\pgfdeclarepatternformonly[\mcThickness,\mcSize]{_nfc477lev}
{\pgfqpoint{0pt}{0pt}}
{\pgfpoint{\mcSize}{\mcSize}}
{\pgfpoint{\mcSize}{\mcSize}}
{
\pgfsetcolor{\tikz@pattern@color}
\pgfsetlinewidth{\mcThickness}
\pgfpathmoveto{\pgfqpoint{0pt}{\mcSize}}
\pgfpathlineto{\pgfpoint{\mcSize+\mcThickness}{-\mcThickness}}
\pgfpathmoveto{\pgfqpoint{0pt}{0pt}}
\pgfpathlineto{\pgfpoint{\mcSize+\mcThickness}{\mcSize+\mcThickness}}
\pgfusepath{stroke}
}}
\makeatother

 
\tikzset{
pattern size/.store in=\mcSize, 
pattern size = 5pt,
pattern thickness/.store in=\mcThickness, 
pattern thickness = 0.3pt,
pattern radius/.store in=\mcRadius, 
pattern radius = 1pt}
\makeatletter
\pgfutil@ifundefined{pgf@pattern@name@_qc3xcftyy}{
\pgfdeclarepatternformonly[\mcThickness,\mcSize]{_qc3xcftyy}
{\pgfqpoint{0pt}{0pt}}
{\pgfpoint{\mcSize}{\mcSize}}
{\pgfpoint{\mcSize}{\mcSize}}
{
\pgfsetcolor{\tikz@pattern@color}
\pgfsetlinewidth{\mcThickness}
\pgfpathmoveto{\pgfqpoint{0pt}{\mcSize}}
\pgfpathlineto{\pgfpoint{\mcSize+\mcThickness}{-\mcThickness}}
\pgfpathmoveto{\pgfqpoint{0pt}{0pt}}
\pgfpathlineto{\pgfpoint{\mcSize+\mcThickness}{\mcSize+\mcThickness}}
\pgfusepath{stroke}
}}
\makeatother

 
\tikzset{
pattern size/.store in=\mcSize, 
pattern size = 5pt,
pattern thickness/.store in=\mcThickness, 
pattern thickness = 0.3pt,
pattern radius/.store in=\mcRadius, 
pattern radius = 1pt}
\makeatletter
\pgfutil@ifundefined{pgf@pattern@name@_pgzhib07c}{
\pgfdeclarepatternformonly[\mcThickness,\mcSize]{_pgzhib07c}
{\pgfqpoint{0pt}{0pt}}
{\pgfpoint{\mcSize}{\mcSize}}
{\pgfpoint{\mcSize}{\mcSize}}
{
\pgfsetcolor{\tikz@pattern@color}
\pgfsetlinewidth{\mcThickness}
\pgfpathmoveto{\pgfqpoint{0pt}{\mcSize}}
\pgfpathlineto{\pgfpoint{\mcSize+\mcThickness}{-\mcThickness}}
\pgfpathmoveto{\pgfqpoint{0pt}{0pt}}
\pgfpathlineto{\pgfpoint{\mcSize+\mcThickness}{\mcSize+\mcThickness}}
\pgfusepath{stroke}
}}
\makeatother

 
\tikzset{
pattern size/.store in=\mcSize, 
pattern size = 5pt,
pattern thickness/.store in=\mcThickness, 
pattern thickness = 0.3pt,
pattern radius/.store in=\mcRadius, 
pattern radius = 1pt}
\makeatletter
\pgfutil@ifundefined{pgf@pattern@name@_pdf6px43e}{
\pgfdeclarepatternformonly[\mcThickness,\mcSize]{_pdf6px43e}
{\pgfqpoint{0pt}{0pt}}
{\pgfpoint{\mcSize}{\mcSize}}
{\pgfpoint{\mcSize}{\mcSize}}
{
\pgfsetcolor{\tikz@pattern@color}
\pgfsetlinewidth{\mcThickness}
\pgfpathmoveto{\pgfqpoint{0pt}{\mcSize}}
\pgfpathlineto{\pgfpoint{\mcSize+\mcThickness}{-\mcThickness}}
\pgfpathmoveto{\pgfqpoint{0pt}{0pt}}
\pgfpathlineto{\pgfpoint{\mcSize+\mcThickness}{\mcSize+\mcThickness}}
\pgfusepath{stroke}
}}
\makeatother

 
\tikzset{
pattern size/.store in=\mcSize, 
pattern size = 5pt,
pattern thickness/.store in=\mcThickness, 
pattern thickness = 0.3pt,
pattern radius/.store in=\mcRadius, 
pattern radius = 1pt}
\makeatletter
\pgfutil@ifundefined{pgf@pattern@name@_6winpceel}{
\pgfdeclarepatternformonly[\mcThickness,\mcSize]{_6winpceel}
{\pgfqpoint{0pt}{0pt}}
{\pgfpoint{\mcSize}{\mcSize}}
{\pgfpoint{\mcSize}{\mcSize}}
{
\pgfsetcolor{\tikz@pattern@color}
\pgfsetlinewidth{\mcThickness}
\pgfpathmoveto{\pgfqpoint{0pt}{\mcSize}}
\pgfpathlineto{\pgfpoint{\mcSize+\mcThickness}{-\mcThickness}}
\pgfpathmoveto{\pgfqpoint{0pt}{0pt}}
\pgfpathlineto{\pgfpoint{\mcSize+\mcThickness}{\mcSize+\mcThickness}}
\pgfusepath{stroke}
}}
\makeatother

 
\tikzset{
pattern size/.store in=\mcSize, 
pattern size = 5pt,
pattern thickness/.store in=\mcThickness, 
pattern thickness = 0.3pt,
pattern radius/.store in=\mcRadius, 
pattern radius = 1pt}
\makeatletter
\pgfutil@ifundefined{pgf@pattern@name@_j78rmd2bu}{
\pgfdeclarepatternformonly[\mcThickness,\mcSize]{_j78rmd2bu}
{\pgfqpoint{0pt}{0pt}}
{\pgfpoint{\mcSize}{\mcSize}}
{\pgfpoint{\mcSize}{\mcSize}}
{
\pgfsetcolor{\tikz@pattern@color}
\pgfsetlinewidth{\mcThickness}
\pgfpathmoveto{\pgfqpoint{0pt}{\mcSize}}
\pgfpathlineto{\pgfpoint{\mcSize+\mcThickness}{-\mcThickness}}
\pgfpathmoveto{\pgfqpoint{0pt}{0pt}}
\pgfpathlineto{\pgfpoint{\mcSize+\mcThickness}{\mcSize+\mcThickness}}
\pgfusepath{stroke}
}}
\makeatother

 
\tikzset{
pattern size/.store in=\mcSize, 
pattern size = 5pt,
pattern thickness/.store in=\mcThickness, 
pattern thickness = 0.3pt,
pattern radius/.store in=\mcRadius, 
pattern radius = 1pt}
\makeatletter
\pgfutil@ifundefined{pgf@pattern@name@_aqyuu1mou}{
\pgfdeclarepatternformonly[\mcThickness,\mcSize]{_aqyuu1mou}
{\pgfqpoint{0pt}{0pt}}
{\pgfpoint{\mcSize}{\mcSize}}
{\pgfpoint{\mcSize}{\mcSize}}
{
\pgfsetcolor{\tikz@pattern@color}
\pgfsetlinewidth{\mcThickness}
\pgfpathmoveto{\pgfqpoint{0pt}{\mcSize}}
\pgfpathlineto{\pgfpoint{\mcSize+\mcThickness}{-\mcThickness}}
\pgfpathmoveto{\pgfqpoint{0pt}{0pt}}
\pgfpathlineto{\pgfpoint{\mcSize+\mcThickness}{\mcSize+\mcThickness}}
\pgfusepath{stroke}
}}
\makeatother

 
\tikzset{
pattern size/.store in=\mcSize, 
pattern size = 5pt,
pattern thickness/.store in=\mcThickness, 
pattern thickness = 0.3pt,
pattern radius/.store in=\mcRadius, 
pattern radius = 1pt}
\makeatletter
\pgfutil@ifundefined{pgf@pattern@name@_4gsj7fgeo}{
\pgfdeclarepatternformonly[\mcThickness,\mcSize]{_4gsj7fgeo}
{\pgfqpoint{0pt}{0pt}}
{\pgfpoint{\mcSize}{\mcSize}}
{\pgfpoint{\mcSize}{\mcSize}}
{
\pgfsetcolor{\tikz@pattern@color}
\pgfsetlinewidth{\mcThickness}
\pgfpathmoveto{\pgfqpoint{0pt}{\mcSize}}
\pgfpathlineto{\pgfpoint{\mcSize+\mcThickness}{-\mcThickness}}
\pgfpathmoveto{\pgfqpoint{0pt}{0pt}}
\pgfpathlineto{\pgfpoint{\mcSize+\mcThickness}{\mcSize+\mcThickness}}
\pgfusepath{stroke}
}}
\makeatother

 
\tikzset{
pattern size/.store in=\mcSize, 
pattern size = 5pt,
pattern thickness/.store in=\mcThickness, 
pattern thickness = 0.3pt,
pattern radius/.store in=\mcRadius, 
pattern radius = 1pt}
\makeatletter
\pgfutil@ifundefined{pgf@pattern@name@_a03zstf57}{
\pgfdeclarepatternformonly[\mcThickness,\mcSize]{_a03zstf57}
{\pgfqpoint{0pt}{0pt}}
{\pgfpoint{\mcSize}{\mcSize}}
{\pgfpoint{\mcSize}{\mcSize}}
{
\pgfsetcolor{\tikz@pattern@color}
\pgfsetlinewidth{\mcThickness}
\pgfpathmoveto{\pgfqpoint{0pt}{\mcSize}}
\pgfpathlineto{\pgfpoint{\mcSize+\mcThickness}{-\mcThickness}}
\pgfpathmoveto{\pgfqpoint{0pt}{0pt}}
\pgfpathlineto{\pgfpoint{\mcSize+\mcThickness}{\mcSize+\mcThickness}}
\pgfusepath{stroke}
}}
\makeatother

 
\tikzset{
pattern size/.store in=\mcSize, 
pattern size = 5pt,
pattern thickness/.store in=\mcThickness, 
pattern thickness = 0.3pt,
pattern radius/.store in=\mcRadius, 
pattern radius = 1pt}
\makeatletter
\pgfutil@ifundefined{pgf@pattern@name@_cnv9yiemg}{
\pgfdeclarepatternformonly[\mcThickness,\mcSize]{_cnv9yiemg}
{\pgfqpoint{0pt}{0pt}}
{\pgfpoint{\mcSize}{\mcSize}}
{\pgfpoint{\mcSize}{\mcSize}}
{
\pgfsetcolor{\tikz@pattern@color}
\pgfsetlinewidth{\mcThickness}
\pgfpathmoveto{\pgfqpoint{0pt}{\mcSize}}
\pgfpathlineto{\pgfpoint{\mcSize+\mcThickness}{-\mcThickness}}
\pgfpathmoveto{\pgfqpoint{0pt}{0pt}}
\pgfpathlineto{\pgfpoint{\mcSize+\mcThickness}{\mcSize+\mcThickness}}
\pgfusepath{stroke}
}}
\makeatother

 
\tikzset{
pattern size/.store in=\mcSize, 
pattern size = 5pt,
pattern thickness/.store in=\mcThickness, 
pattern thickness = 0.3pt,
pattern radius/.store in=\mcRadius, 
pattern radius = 1pt}
\makeatletter
\pgfutil@ifundefined{pgf@pattern@name@_jby2stucn}{
\pgfdeclarepatternformonly[\mcThickness,\mcSize]{_jby2stucn}
{\pgfqpoint{0pt}{0pt}}
{\pgfpoint{\mcSize}{\mcSize}}
{\pgfpoint{\mcSize}{\mcSize}}
{
\pgfsetcolor{\tikz@pattern@color}
\pgfsetlinewidth{\mcThickness}
\pgfpathmoveto{\pgfqpoint{0pt}{\mcSize}}
\pgfpathlineto{\pgfpoint{\mcSize+\mcThickness}{-\mcThickness}}
\pgfpathmoveto{\pgfqpoint{0pt}{0pt}}
\pgfpathlineto{\pgfpoint{\mcSize+\mcThickness}{\mcSize+\mcThickness}}
\pgfusepath{stroke}
}}
\makeatother
\tikzset{every picture/.style={line width=0.75pt}} 
\begin{figure}[!htb]
    \centering
\resizebox{1.1\textwidth}{0.45\textwidth}{
\begin{tikzpicture}[x=0.75pt,y=0.75pt,yscale=-1,xscale=1]

\draw   (8,66.79) .. controls (8,61.25) and (13.17,56.77) .. (19.55,56.77) .. controls (25.94,56.77) and (31.11,61.25) .. (31.11,66.79) .. controls (31.11,72.32) and (25.94,76.81) .. (19.55,76.81) .. controls (13.17,76.81) and (8,72.32) .. (8,66.79) -- cycle ;
\draw    (27.87,73.95) -- (81.03,108.41) ;
\draw    (29.39,71.04) -- (81.03,83.97) ;
\draw    (29.58,61.9) -- (80.56,51.1) ;
\draw    (24.64,57.17) -- (81.49,26.25) ;
\draw  [color={rgb, 255:red, 0; green, 0; blue, 0 }  ,draw opacity=1 ][pattern=_qc3xcftyy,pattern size=6pt,pattern thickness=0.75pt,pattern radius=0pt, pattern color={rgb, 255:red, 0; green, 0; blue, 0}] (81.49,17.49) -- (246.49,17.49) -- (246.49,33.52) -- (81.49,33.52) -- cycle ;
\draw  [pattern=_pgzhib07c,pattern size=6pt,pattern thickness=0.75pt,pattern radius=0pt, pattern color={rgb, 255:red, 0; green, 0; blue, 0}] (81.03,42.34) -- (246.03,42.34) -- (246.03,58.37) -- (81.03,58.37) -- cycle ;
\draw  [pattern=_pdf6px43e,pattern size=6pt,pattern thickness=0.75pt,pattern radius=0pt, pattern color={rgb, 255:red, 0; green, 0; blue, 0}] (81.95,73.2) -- (246.95,73.2) -- (246.95,89.23) -- (81.95,89.23) -- cycle ;
\draw  [pattern=_6winpceel,pattern size=6pt,pattern thickness=0.75pt,pattern radius=0pt, pattern color={rgb, 255:red, 0; green, 0; blue, 0}] (81.95,98.85) -- (246.95,98.85) -- (246.95,114.88) -- (81.95,114.88) -- cycle ;
\draw   (8.89,200.79) .. controls (8.89,195.25) and (14.07,190.77) .. (20.45,190.77) .. controls (26.83,190.77) and (32,195.25) .. (32,200.79) .. controls (32,206.32) and (26.83,210.81) .. (20.45,210.81) .. controls (14.07,210.81) and (8.89,206.32) .. (8.89,200.79) -- cycle ;
\draw    (28.77,207.95) -- (81.92,242.41) ;
\draw    (30.28,205.04) -- (81.92,217.97) ;
\draw    (30.47,195.9) -- (81.46,185.1) ;
\draw    (25.53,191.17) -- (82.38,160.25) ;
\draw  [color={rgb, 255:red, 0; green, 0; blue, 0 }  ,draw opacity=1 ][pattern=_j78rmd2bu,pattern size=6pt,pattern thickness=0.75pt,pattern radius=0pt, pattern color={rgb, 255:red, 0; green, 0; blue, 0}] (82.38,151.49) -- (247.38,151.49) -- (247.38,167.52) -- (82.38,167.52) -- cycle ;
\draw  [pattern=_aqyuu1mou,pattern size=6pt,pattern thickness=0.75pt,pattern radius=0pt, pattern color={rgb, 255:red, 0; green, 0; blue, 0}] (81.92,176.34) -- (246.92,176.34) -- (246.92,192.37) -- (81.92,192.37) -- cycle ;
\draw  [pattern=_4gsj7fgeo,pattern size=6pt,pattern thickness=0.75pt,pattern radius=0pt, pattern color={rgb, 255:red, 0; green, 0; blue, 0}] (82.84,207.2) -- (247.84,207.2) -- (247.84,223.23) -- (82.84,223.23) -- cycle ;
\draw  [pattern=_a03zstf57,pattern size=6pt,pattern thickness=0.75pt,pattern radius=0pt, pattern color={rgb, 255:red, 0; green, 0; blue, 0}] (82.84,232.85) -- (247.84,232.85) -- (247.84,248.88) -- (82.84,248.88) -- cycle ;
\draw   (614.84,63.12) .. controls (614.84,57.58) and (620.01,53.1) .. (626.39,53.1) .. controls (632.77,53.1) and (637.95,57.58) .. (637.95,63.12) .. controls (637.95,68.65) and (632.77,73.14) .. (626.39,73.14) .. controls (620.01,73.14) and (614.84,68.65) .. (614.84,63.12) -- cycle ;
\draw   (615.75,111.12) .. controls (615.75,105.58) and (620.92,101.1) .. (627.3,101.1) .. controls (633.69,101.1) and (638.86,105.58) .. (638.86,111.12) .. controls (638.86,116.65) and (633.69,121.14) .. (627.3,121.14) .. controls (620.92,121.14) and (615.75,116.65) .. (615.75,111.12) -- cycle ;
\draw   (615.75,166.12) .. controls (615.75,160.58) and (620.92,156.1) .. (627.3,156.1) .. controls (633.69,156.1) and (638.86,160.58) .. (638.86,166.12) .. controls (638.86,171.65) and (633.69,176.14) .. (627.3,176.14) .. controls (620.92,176.14) and (615.75,171.65) .. (615.75,166.12) -- cycle ;
\draw   (616.66,212.12) .. controls (616.66,206.58) and (621.84,202.1) .. (628.22,202.1) .. controls (634.6,202.1) and (639.77,206.58) .. (639.77,212.12) .. controls (639.77,217.65) and (634.6,222.14) .. (628.22,222.14) .. controls (621.84,222.14) and (616.66,217.65) .. (616.66,212.12) -- cycle ;
\draw    (517.44,26.16) -- (614.84,63.12) ;
\draw    (517.8,51.61) -- (615.75,111.12) ;
\draw    (518.87,81.11) -- (615.75,166.12) ;
\draw    (519.23,108.12) -- (616.66,212.12) ;
\draw    (518.34,160.16) -- (614.84,63.12) ;
\draw    (518.69,185.61) -- (615.75,111.12) ;
\draw    (519.76,215.11) -- (615.75,166.12) ;
\draw    (520.12,242.12) -- (616.66,212.12) ;
\draw  [fill={rgb, 255:red, 248; green, 231; blue, 28 }  ,fill opacity=0.26 ][dash pattern={on 4.5pt off 4.5pt}] (6.68,5.14) -- (255,5.14) -- (255,302.14) -- (6.68,302.14) -- cycle ;
\draw  [fill={rgb, 255:red, 126; green, 211; blue, 33 }  ,fill opacity=0.27 ][dash pattern={on 4.5pt off 4.5pt}] (478.39,5.01) -- (657.38,5.01) -- (657.38,302.01) -- (478.39,302.01) -- cycle ;
\draw  [pattern=_cnv9yiemg,pattern size=6pt,pattern thickness=0.75pt,pattern radius=0pt, pattern color={rgb, 255:red, 0; green, 0; blue, 0}] (266,48.95) -- (459,48.95) -- (459,94.95) -- (266,94.95) -- cycle ;
\draw  [fill={rgb, 255:red, 80; green, 227; blue, 194 }  ,fill opacity=0.31 ][dash pattern={on 4.5pt off 4.5pt}] (259.68,5) -- (473.68,5) -- (473.68,302) -- (259.68,302) -- cycle ;
\draw   (493.11,26.16) .. controls (493.11,20.62) and (498.28,16.14) .. (504.66,16.14) .. controls (511.04,16.14) and (516.22,20.62) .. (516.22,26.16) .. controls (516.22,31.69) and (511.04,36.18) .. (504.66,36.18) .. controls (498.28,36.18) and (493.11,31.69) .. (493.11,26.16) -- cycle ;
\draw    (459,48.95) -- (493.11,26.16) ;
\draw   (493.46,51.61) .. controls (493.46,46.08) and (498.64,41.59) .. (505.02,41.59) .. controls (511.4,41.59) and (516.57,46.08) .. (516.57,51.61) .. controls (516.57,57.15) and (511.4,61.63) .. (505.02,61.63) .. controls (498.64,61.63) and (493.46,57.15) .. (493.46,51.61) -- cycle ;
\draw    (461,60.95) -- (493.46,51.61) ;
\draw   (494.53,81.11) .. controls (494.53,75.57) and (499.71,71.09) .. (506.09,71.09) .. controls (512.47,71.09) and (517.64,75.57) .. (517.64,81.11) .. controls (517.64,86.64) and (512.47,91.13) .. (506.09,91.13) .. controls (499.71,91.13) and (494.53,86.64) .. (494.53,81.11) -- cycle ;
\draw    (458,75.95) -- (494.53,81.11) ;
\draw   (494.89,108.12) .. controls (494.89,102.58) and (500.06,98.1) .. (506.45,98.1) .. controls (512.83,98.1) and (518,102.58) .. (518,108.12) .. controls (518,113.65) and (512.83,118.14) .. (506.45,118.14) .. controls (500.06,118.14) and (494.89,113.65) .. (494.89,108.12) -- cycle ;
\draw    (459,94.95) -- (494.89,108.12) ;
\draw    (247,26.95) -- (266,48.95) ;
\draw    (246,50.95) -- (266,63.95) ;
\draw    (246,82.95) -- (267,77.95) ;
\draw    (248,104.95) -- (266,94.95) ;
\draw  [pattern=_jby2stucn,pattern size=6pt,pattern thickness=0.75pt,pattern radius=0pt, pattern color={rgb, 255:red, 0; green, 0; blue, 0}] (267,182.95) -- (460,182.95) -- (460,228.95) -- (267,228.95) -- cycle ;
\draw   (494.11,160.16) .. controls (494.11,154.62) and (499.28,150.14) .. (505.66,150.14) .. controls (512.04,150.14) and (517.22,154.62) .. (517.22,160.16) .. controls (517.22,165.69) and (512.04,170.18) .. (505.66,170.18) .. controls (499.28,170.18) and (494.11,165.69) .. (494.11,160.16) -- cycle ;
\draw    (460,182.95) -- (494.11,160.16) ;
\draw   (494.46,185.61) .. controls (494.46,180.08) and (499.64,175.59) .. (506.02,175.59) .. controls (512.4,175.59) and (517.57,180.08) .. (517.57,185.61) .. controls (517.57,191.15) and (512.4,195.63) .. (506.02,195.63) .. controls (499.64,195.63) and (494.46,191.15) .. (494.46,185.61) -- cycle ;
\draw    (462,194.95) -- (494.46,185.61) ;
\draw   (495.53,215.11) .. controls (495.53,209.57) and (500.71,205.09) .. (507.09,205.09) .. controls (513.47,205.09) and (518.64,209.57) .. (518.64,215.11) .. controls (518.64,220.64) and (513.47,225.13) .. (507.09,225.13) .. controls (500.71,225.13) and (495.53,220.64) .. (495.53,215.11) -- cycle ;
\draw    (459,209.95) -- (495.53,215.11) ;
\draw   (495.89,242.12) .. controls (495.89,236.58) and (501.06,232.1) .. (507.45,232.1) .. controls (513.83,232.1) and (519,236.58) .. (519,242.12) .. controls (519,247.65) and (513.83,252.14) .. (507.45,252.14) .. controls (501.06,252.14) and (495.89,247.65) .. (495.89,242.12) -- cycle ;
\draw    (460,228.95) -- (495.89,242.12) ;
\draw    (248,160.95) -- (267,182.95) ;
\draw    (247,184.95) -- (267,197.95) ;
\draw    (247,216.95) -- (268,211.95) ;
\draw    (249,238.95) -- (267,228.95) ;

\draw (615.68,278.07) node   [align=left] {\begin{minipage}[lt]{165.92pt}\setlength\topsep{0pt}
{\fontfamily{ptm}\selectfont {\large Gumbel-Softmax }}
\end{minipage}};
\draw (160,280.07) node   [align=left] {\begin{minipage}[lt]{140.08pt}\setlength\topsep{0pt}
{\fontfamily{ptm}\selectfont {\large Space-filling Curve }}
\end{minipage}};
\draw (344.39,133.51) node [anchor=north west][inner sep=0.75pt]   [align=left] {\textbf{{\Huge ...}}};
\draw (400,279) node   [align=left] {\begin{minipage}[lt]{191.08pt}\setlength\topsep{0pt}
{\fontfamily{ptm}\selectfont {\large Lipschitz Homeomorphism}}
\end{minipage}};
\draw (48.39,133.51) node [anchor=north west][inner sep=0.75pt]   [align=left] {\textbf{{\Huge ...}}};

\end{tikzpicture}
}

\caption{Architecture of the deep neural network for $ 4-$dimensional output. The first (yellow) part approximates the distribution on different connected components using deep ReLu Networks. The remaining two parts are similar to the wide neural network in \cref{fig:wide_nn}.}
\label{fig:deep_nn}
\end{figure}
We first start with an important proposition. 
\begin{proposition}\label{prop:holder_curve}
Let $ \lambda $ be the Lebesgue measure on $ [ 0,1]$. Given any probability measure $ \mathbb{P}$ that satisfies \cref{assumption:lower_bound}, there exists a continuous curve $ \gamma : [0,1]\rightarrow \text{supp}(\mathbb{P})$ such that $\gamma _{\#} \lambda =\mathbb{P}$. Furthermore, if $d \geqslant 1$, $\gamma$ is $ 1/d$-Hölder continuous.
\end{proposition}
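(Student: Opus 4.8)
The plan is to build the curve $\gamma$ as a limit of a nested sequence of piecewise-defined maps obtained by repeatedly subdividing both $[0,1]$ (the domain) and $[0,1]^d$ (the target), in the spirit of the classical construction of space-filling curves, but carrying the measure $\mathbb{P}$ along so that mass is distributed correctly at every stage. Concretely, by \cref{assumption:lower_bound} it suffices to produce a curve $\eta:[0,1]\to[0,1]^d$ with $\eta_\#\lambda = f^{-1}_\#\mathbb{P} =: \mathbb{Q}$ and then set $\gamma = f\circ\eta$; since $f$ is $L$-Lipschitz, Hölder continuity of $\eta$ transfers to $\gamma$ (with the same exponent, up to the constant $L$), and $f(\mathrm{supp}(\mathbb{Q}))\subset\mathrm{supp}(\mathbb{P})$. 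So the work is entirely on the cube $[0,1]^d$ with the measure $\mathbb{Q}$, which by \cref{assumption:lower_bound} satisfies $\mathbb{Q}(B)\geqslant C_f\lambda(B)$ for all measurable $B$ and, when $d\geqslant 1$, assigns zero mass to $\partial[0,1]^d$; in particular $\mathbb{Q}$ is non-atomic and has full support.

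First I would handle $d=0$ separately (trivial: the cube is a point, $\mathbb{Q}=\mathbb{P}$ is a point mass, take $\gamma$ constant) and then assume $d\geqslant 1$. For the main construction, at level $n$ partition $[0,1]^d$ into the $2^{nd}$ dyadic subcubes $\{Q^n_j\}$ of side $2^{-n}$, and order them along a Hilbert-type (adjacency-preserving) enumeration so that consecutive subcubes share a face; correspondingly partition $[0,1]$ into $2^{nd}$ intervals $I^n_j$ with $\lambda(I^n_j) = \mathbb{Q}(Q^n_j)$, laid out in the same order. The key structural fact is the nesting/compatibility: the intervals at level $n+1$ refine those at level $n$ because the subcubes do, and the measures of the pieces add up. Define $\eta_n$ to be, say, the map that sends $I^n_j$ affinely onto a polygonal path through $Q^n_j$; the adjacency ordering guarantees $\eta_n$ is continuous, and $|\eta_n(t)-\eta_{n+1}(t)|\leqslant \mathrm{diam}(Q^n_{j(t)}) = \sqrt d\,2^{-n}$, so $(\eta_n)$ is uniformly Cauchy and converges uniformly to a continuous $\eta$.

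Next I would verify $\eta_\#\lambda = \mathbb{Q}$. For any dyadic subcube $Q$ at level $n$, one shows $\eta^{-1}(Q)$ differs from the corresponding union of level-$n$ intervals only on a $\lambda$-null set (the preimages of the faces, which have $\mathbb{Q}$-measure zero by the boundary-vanishing hypothesis — this is exactly where that hypothesis is used), hence $\lambda(\eta^{-1}(Q)) = \sum \lambda(I^n_j) = \mathbb{Q}(Q)$; since dyadic cubes generate the Borel $\sigma$-algebra, $\eta_\#\lambda=\mathbb{Q}$. Finally, for Hölder continuity: given $s,t\in[0,1]$ with $|s-t|$ small, choose $n$ with $2^{-n(d+?)}$... more precisely, the lower bound $\mathbb{Q}(Q^n_j)\geqslant C_f\lambda(Q^n_j) = C_f 2^{-nd}$ forces every level-$n$ interval to have length $\geqslant C_f 2^{-nd}$, so if $|s-t|< C_f 2^{-nd}$ then $s,t$ lie in the same or adjacent level-$n$ intervals, whence $\eta(s),\eta(t)$ lie in the same or adjacent level-$n$ subcubes and $|\eta(s)-\eta(t)|\leqslant 2\sqrt d\, 2^{-n}\leqslant 2\sqrt d\,(|s-t|/C_f)^{1/d}$, giving the $1/d$-Hölder bound. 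Composing with $f$ yields $\gamma$ with the claimed properties.

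The main obstacle I expect is making the adjacency-ordering-plus-measure-matched-partition construction genuinely consistent across levels: one must choose the Hilbert-type enumeration so that the level-$(n+1)$ order refines the level-$n$ order (each parent subcube's children form a contiguous, correctly-oriented block), and simultaneously keep the interval partition nested — this is routine for the standard Hilbert curve on the uniform measure but needs care here because the interval lengths are dictated by $\mathbb{Q}$ rather than being equal, so "adjacent intervals" and "adjacent cubes" must be kept in lock-step. A secondary subtlety is the null-set bookkeeping in the pushforward argument: the lower bound $\mathbb{Q}\geqslant C_f\lambda$ guarantees no interval degenerates, but one still must argue the limit map doesn't smear positive mass onto the shared faces, which is where $f_\#\mathbb{P}$ vanishing on $\partial[0,1]^d$ is essential, and I would isolate that as a short lemma.
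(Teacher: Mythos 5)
Your overall strategy coincides with the paper's: reduce to the cube via the Lipschitz homeomorphism $f$ from Assumption~\ref{assumption:lower_bound}, build a measure-adapted Hilbert-type curve by partitioning $[0,1]^d$ into subcubes with an adjacency-preserving enumeration, match interval lengths to $\mathbb{Q}$-masses, pass to a uniform limit, and extract H\"older continuity from the lower bound $\mathbb{Q}\geqslant C_f\lambda$. However, there is a genuine gap in your treatment of the subdivision step. You assert that $\mathbb{Q}$ is non-atomic and that the dyadic faces ``have $\mathbb{Q}$-measure zero by the boundary-vanishing hypothesis.'' Assumption~\ref{assumption:lower_bound} only says that $\mathbb{Q}$ vanishes on the \emph{outer} boundary $\partial[0,1]^d$; it says nothing about interior hyperplanes, and the lower bound $\mathbb{Q}(B)\geqslant C_f\lambda(B)$ is one-sided and certainly permits atoms or mass concentrated on interior hyperplanes (e.g.\ $\mathbb{Q}=C_f\lambda+(1-C_f)\delta_{x_0}$ for an interior $x_0$ satisfies both clauses). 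If a dyadic hyperplane carries positive $\mathbb{Q}$-mass, the measures of the two adjacent closed dyadic children overcount, the additivity you need for the interval partition fails, and the identity $\lambda(\eta^{-1}(Q))=\sum_j\lambda(I^n_j)$ breaks down.

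The paper's proof addresses exactly this point: rather than using the dyadic grid, it notes that for each coordinate direction the set of cutting heights $r$ with $\mathbb{P}(\{x_i=r\})>0$ is at most countable, and therefore perturbs the cutting hyperplanes at every level so that all cell boundaries are $\mathbb{P}$-null. This is not a cosmetic change: once the cuts are no longer dyadic, the cells are no longer exact cubes, so the paper must also control the aspect ratio $R(\mathcal{C}_n)$ of the boxes (keeping $\prod_k(1+2^{-k})$ bounded) to retain the diameter-versus-measure comparison that underlies the $1/d$-H\"older estimate, and it appeals to a general criterion from~\cite{Shchepin2010OnHM} rather than arguing H\"older continuity directly from the dyadic side lengths as you do. Your direct H\"older computation is fine \emph{given} the dyadic grid, but once you repair the atomicity problem by perturbing, you would need an analogue of the paper's aspect-ratio control to make it go through. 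I would also flag that your nesting-of-orderings concern (that the Hilbert enumeration at level $n+1$ must refine level $n$) is not really in doubt; the paper cites \cite{hamiltonCompactHilbertIndices2007} for the compatible family of orderings, so that part is standard.
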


Before we prove \cref{prop:holder_curve}, we need to introduce some important notions. Let $ \{\mathcal{C}_{n}\}$ be a sequence of partitions of the unit cube $ [ 0,1]^{d}$. We say $ \{\mathcal{C}_{n}\}$ has property (*) if the following properties are satisfied.
\begin{enumerate}
\item $ \mathcal{C}_{0} =\{C_{-1}\} ,C_{-1} =[ 0,1]^{d}$
\item $ \mathcal{C}_{n+1} =\{C_{i_{1} ,i_{2} \cdots ,i_{n+1}}\}_{i_{j} =0,\cdots ,2^{d} -1}$ is a refinement of $ \mathcal{C}_{n} =\{C_{i_{1} ,i_{2} \cdots ,i_{n}}\}_{i_{j} =0,\cdots ,2^{d} -1}$, i.e., $ C_{i_{1} ,i_{2} \cdots ,i_{n}} =\bigcup _{i_{n+1} =0}^{2^{d} -1} C_{i_{1} ,i_{2} \cdots ,i_{n+1}}$. Besides, $ \mathcal{C}_n$ is obtained by cutting the unit cube $[0,1]^d$ by planes that are paralleled to the coordinate planes. 
\end{enumerate}

The following lemma is important in the construction of the Hilbert curve.
\begin{lemma}[Algorithm 3 in \cite{hamiltonCompactHilbertIndices2007}] \label{lemma:hilbert}
Given a sequence of partitions of the unit cube $ \{\mathcal{C}_{n}\}$ that has property (*), there exist a ordering for each partition $ \mathcal{C}_{n}$, i.e., $$ O_{n} :\left\{1,\cdots ,2^{nd}\right\}\rightarrow \left\{\overline{i_{1} i_{2} \cdots i_{n}} :i_{j} =0,\cdots ,2^{d} -1\right\},$$ such that the following two conclusions hold.
    \begin{enumerate}
        \item If $ ( j-1) \cdotp 2^{d} < i\leqslant j\cdotp 2^{d}$, the first $ n$ digits of $ O_{n+1}( i)$ are the same as  $ O_{n}( j)$. 
        \item  Adjacent cubes in the ordering intersect, i.e., $ C_{O_{n}( i)} \cap C_{O_{n}( i+1)} \neq \emptyset $. Furthermore, $ C_{O_{n}( i)} \cap C_{O_{n}( i+1)}$ is a $d-1$ dimensional cube. 
    \end{enumerate}
\end{lemma}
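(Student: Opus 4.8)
The plan is to prove the lemma by induction on $n$, mirroring the recursive structure of the classical Hilbert curve. For the base case (at $n=0$ the statement is vacuous; at $n=1$) we only need to order the $2^{d}$ subcubes of a single box: since each subcube is cut off by one axis-parallel plane per coordinate, two of them share a $(d-1)$-dimensional face exactly when their indices in $\{0,1\}^{d}$ differ in a single coordinate, i.e. exactly when they are adjacent in the hypercube graph $Q_{d}$. A binary reflected Gray code is a Hamiltonian path in $Q_{d}$, and it yields an ordering of the subcubes in which consecutive ones share a $(d-1)$-cube, which is conclusion 2 at the first level; conclusion 1 is vacuous there.

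For the inductive step, assume $O_{n}$ has been built with both properties. I would construct $O_{n+1}$ by visiting, for $j=1,\dots,2^{nd}$ in order, the $2^{d}$ children of $C_{O_{n}(j)}$ and placing them in the slots $(j-1)2^{d}+1,\dots,j\,2^{d}$; this makes conclusion 1 automatic. Inside the block for $j$ we must (i) traverse the children of $C_{O_{n}(j)}$ along a path whose consecutive cubes share a $(d-1)$-face, i.e. a Hamiltonian path in $Q_{d}$, starting at a chosen entry child $S_{j}$ and ending at a chosen exit child $E_{j}$; and (ii) arrange that the exit child $E_{j}$ of block $j$ and the entry child $S_{j+1}$ of block $j+1$ also share a $(d-1)$-face. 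By the inductive hypothesis the wall $F_{j}:=C_{O_{n}(j)}\cap C_{O_{n}(j+1)}$ is a $(d-1)$-dimensional box; fix one of its vertices $p_{j}$ and let $E_{j}$ (resp. $S_{j+1}$) be the child of $C_{O_{n}(j)}$ (resp. of $C_{O_{n}(j+1)}$) whose closure contains $p_{j}$. These two children meet $F_{j}$ in their respective corner sub-boxes at $p_{j}$, and those sub-boxes overlap in a genuinely $(d-1)$-dimensional corner box around $p_{j}$, so $E_{j}\cap S_{j+1}$ is a $(d-1)$-cube, giving (ii).

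It remains to guarantee that the Hamiltonian path inside each block exists. Here I would invoke the classical fact that $Q_{d}$ is Hamilton-laceable for $d\geq 1$: any two vertices in opposite classes of the natural bipartition (even versus odd coordinate sum) are joined by a Hamiltonian path ($d=0$ is trivial). So I need $S_{j}$ and $E_{j}$ to have opposite parity. The key observation is that $C_{O_{n}(j)}$ and $C_{O_{n}(j+1)}$ lie on opposite sides of $F_{j}$, so the index of $E_{j}$ and the index of $S_{j+1}$ agree in every coordinate except the one normal to $F_{j}$, where they differ; hence their parities are always opposite. Choosing the vertices $p_{j}$ so that every exit child $E_{j}$ has, say, even parity (possible for $d\geq 2$, since within the face $F_{j}$ there is still a coordinate direction free to toggle parity; $d\leq 1$ is checked by hand) then forces every entry child $S_{j}$ to be odd, so $S_{j}$ and $E_{j}$ always lie in opposite parts and the required path exists. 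Concatenating the blocks, consecutive children inside a block share a $(d-1)$-face because they are $Q_{d}$-adjacent, and consecutive children across a block boundary share one by the corner-box argument, which is conclusion 2 at level $n+1$.

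I expect the \textbf{main obstacle} to be the bookkeeping at block boundaries when the partitions $\mathcal{C}_{n}$ are not uniform dyadic grids: one must verify that the ``corner child at $p_{j}$'' is well defined (the vertex $p_{j}$ of $F_{j}$ can be chosen off the internal cutting planes of both $C_{O_{n}(j)}$ and $C_{O_{n}(j+1)}$), that the two neighbouring corner sub-boxes overlap in full $(d-1)$-dimensional measure rather than in a degenerate slab, and that the parity prescriptions can be imposed consistently for all $j$ at once, together with the two free endpoints $j=1$ and $j=2^{nd}$. The remaining ingredients — the Gray-code/Hamilton-laceability input and the fact that concatenating the per-block orderings preserves property 1 — are routine.
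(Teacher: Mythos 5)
Your argument is correct in substance and takes a genuinely different route from the paper, which does not reprove the lemma at all but simply cites it as Algorithm~3 of Hamilton's compact Hilbert indices. Hamilton's construction is the classical $d$-dimensional Hilbert curve: it produces the ordering explicitly by composing, at each recursion depth, a fixed binary reflected Gray code with a rotation and reflection determined by the entry/exit directions of the parent cell. You instead prove pure existence: you observe that the children of a cell form a copy of the hypercube graph $Q_{d}$ under the ``shares a $(d-1)$-face'' relation, reduce the problem to finding a Hamiltonian path with prescribed endpoints of opposite parity, and invoke Hamilton-laceability of $Q_{d}$. This is cleaner at the top level and avoids the bookkeeping of Gray-code orientations, at the cost of importing a nontrivial (though classical) graph-theoretic input. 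The trade-off is roughly explicitness for brevity; both give Lemma~\ref{lemma:hilbert}.

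Two points you flag as obstacles are actually smoother than you fear, and are worth noting since they rely on property~(*). First, property~(*) says each $\mathcal{C}_{n}$ is a \emph{global} axis-aligned grid (the cutting planes run across the whole cube), so the refinement cuts agree on the two sides of the shared wall $F_{j}$; hence $E_{j}\cap F_{j}$ and $S_{j+1}\cap F_{j}$ are literally the same $(d-1)$-box, not two corner boxes that merely overlap. No genericity argument about $p_{j}$ avoiding internal cuts is needed. Second, the parity bookkeeping closes without tension: fixing $S_{1}$ odd, the wall-crossing parity flip forces $S_{j}$ odd whenever $E_{j-1}$ is even, and for $d\geq 2$ the $2^{d-1}$ corners of $F_{j}$ realize both parities of the corner child (the coordinate normal to $F_{j}$ is pinned, the rest are free), so an even $E_{j}$ can always be chosen; $d=1$ is the left-to-right order. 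What does deserve a sentence in a full write-up is the explicit invocation and reference for Hamilton-laceability of $Q_{d}$, since that is the one input you use that is not elementary.
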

\begin{proof}[Proof of \cref{prop:holder_curve}]\label{proof:holder_curve}
Without loss of generality, we can assume that $ \mathbb{P}$ is supported on $[0,1]^d$,  vanishes at the boundary and $\mathbb{P}(B) \geqslant C_1\lambda(B)$ for some constant $C_1>0$ and all measurable sets $B\in[0,1]^d$. Suppose we have proven the conclusion for this case. By \cref{assumption:lower_bound}, there exists $ f\in \mathcal{H}([0,1]^{d},K) \cap \mathcal{F}_L([0,1]^{d},K)$ such that $f^{-1}_{\#} \mathbb{P}$ satisfies these conditions. There exists continuous function $\gamma$ such that $f^{-1}_{\#}\mathbb{P}=\gamma_{\#}\lambda$, which implies $\mathbb{P} = (f\circ\gamma)_{\#}\lambda$. In particular, if $\gamma$ is $1/d$-Holder continuous, $f\circ\gamma$ is also $1/d$-Holder continuous because $f$ is Lipschitz continuous. 

When $d = 0$, the constant map satisfies the requirement. We focus on the case $d\geqslant 1$ in the following. The proof is to modify the construction of the Hilbert space-filling curve. By changing the velocity of the curve, the push-forward measure of the Hilbert space-filling curve can simulate a great number of distributions. We use $ \lambda $ to represent the Lebesgue measure $ [0,1] $. \par 

  \textbf{Proof Sketch:} The construction of $ f$ is inspired by the famous Hilbert space-filling curve \cite{hilbert1935stetige}. To illustrate the idea, let us first assume that $ \mathbb{P}$ is absolutely continuous with respect to the Lebesgue measure $ \lambda $ and consider $ d=2$. In the $ k$-th step, we divide the unit cube into $ 2^{2k}$ evenly closed cubes $ \mathcal{C}_{n} =\{C_{1,n} ,\cdots ,C_{2^{2k} ,n}\}$ such that $ \mathcal{C}_{n}$ is a refinement of $ \mathcal{C}_{n-1}$. \par 
  The construction of a standard Hilbert curve is to find a sequence of curves $ \gamma _{n}$ that go through all the cubes in $ \mathcal{C}_{n}$. The curve $ \gamma _{n}$ has one special property. If $ \gamma _{n-1}( t) \in C_{k,n-1}$, then $ \gamma _{m}( t) \in C_{k,n-1} ,\forall m\geqslant n$. For example, in \cref{fig:hilbert_curve}, the points on the curve in the lower-left cubes will stay in the lower-left cubes. Note that $ ( \gamma _{n})_{\#} \lambda ( C_{k,n}) =\lambda \left( \gamma _{n}^{-1}( C_{k,n})\right)$ is the time curve $ \gamma _{n}$ stays in cubes $ C_{k,n}$. The idea is to change the speed of the curve so that $ \mathbb{P}$ and $ ( \gamma _{n})_{\#} \lambda $ agree on cubes in $ \mathcal{C}_{n}$, i.e., $ \mathbb{P}( C_{k,n}) =\lambda \left( \gamma _{n}^{-1}( C_{k,n})\right)$. Since we assume that $ \mathbb{P}$ is absolutely continuous, $ \mathbb{P}( \partial C_{k,n}) =0$ and we don't need to worry about how to divide the mass on the boundary.  For example, let the green cubes in the \cref{fig:hilbert_curve} to be $ C_{0}$ and suppose that $ \gamma _{1}$ starts from $ C_{0}$. We will change the speed so that $ \gamma _{1}$ spends $ \mathbb{P}( C_{0})$ time in this region. In the next step, we divide $ C_{0}$ into four colored cubes $ C_{1} ,\cdots ,C_{4}$ on the right. We change the speed again to let the time spent in each cube equal to $ \mathbb{P}( C_{i})$. Note that this construction preserves the aforementioned property, i.e.,  for $ t\in [ 0,\mathbb{P}( C_{0})]$, $ \gamma _{n}( t) \in C_{0} ,\forall n\geqslant 1$. 
As $ n\rightarrow \infty $, it can be proven that $ \gamma _{n}$ converges uniformly to a curve $ \gamma $. We can also prove that $ \mathbb{P}$ and $ \gamma _{\#} \lambda $ agree on $ \cup _{n=1}^{\infty }\mathcal{C}_{n}$. Given that $ \cup _{n=1}^{\infty }\mathcal{C}_{n}$ generate the standard Borel algebra on $ [ 0,1]^{2}$, we can conclude that $ \mathbb{P} =\gamma _{\#} \lambda $. \par

In the general case, $ \mathbb{P}$ may not be absolutely continuous. As a result, the boundary may not be $ \mathbb{P}$-measure zero. We will need to perturb the cutting planes to ensure their boundaries are measured zero. 

\begin{figure}[!htb]
    \centering

\tikzset{every picture/.style={line width=0.75pt}} 

\begin{tikzpicture}[x=0.75pt,y=0.75pt,yscale=-1,xscale=1]

\draw  [fill={rgb, 255:red, 245; green, 166; blue, 35 }  ,fill opacity=0.51 ] (431,196.61) -- (480.25,196.61) -- (480.25,245.86) -- (431,245.86) -- cycle ;
\draw  [fill={rgb, 255:red, 80; green, 227; blue, 194 }  ,fill opacity=0.36 ] (381.01,196.62) -- (430.68,196.62) -- (430.68,246.3) -- (381.01,246.3) -- cycle ;
\draw  [fill={rgb, 255:red, 144; green, 19; blue, 254 }  ,fill opacity=0.36 ] (380.68,146.61) -- (430.68,146.61) -- (430.68,196.62) -- (380.68,196.62) -- cycle ;
\draw  [fill={rgb, 255:red, 208; green, 2; blue, 27 }  ,fill opacity=0.39 ] (430.3,146.66) -- (480.25,146.66) -- (480.25,196.61) -- (430.3,196.61) -- cycle ;
\draw  [fill={rgb, 255:red, 126; green, 211; blue, 33 }  ,fill opacity=0.33 ] (70,146.86) -- (169.14,146.86) -- (169.14,246) -- (70,246) -- cycle ;
\draw   (70,47) -- (269,47) -- (269,246) -- (70,246) -- cycle ;
\draw  [dash pattern={on 0.84pt off 2.51pt}]  (72,145.86) -- (269,145.86) ;
\draw  [dash pattern={on 0.84pt off 2.51pt}]  (169,48.86) -- (169,245.86) ;
\draw   (381,47) -- (580,47) -- (580,246) -- (381,246) -- cycle ;
\draw  [dash pattern={on 0.84pt off 2.51pt}]  (383,145.86) -- (580,145.86) ;
\draw  [dash pattern={on 0.84pt off 2.51pt}]  (480,48.86) -- (480,245.86) ;
\draw  [dash pattern={on 0.84pt off 2.51pt}]  (382.01,96.62) -- (582,96.86) ;
\draw  [dash pattern={on 0.84pt off 2.51pt}]  (430.75,48.92) -- (431,245.86) ;
\draw  [dash pattern={on 0.84pt off 2.51pt}]  (528.75,47.92) -- (529,244.86) ;
\draw  [dash pattern={on 0.84pt off 2.51pt}]  (381.01,196.62) -- (581,196.86) ;
\draw    (116,194.86) -- (171.16,195.17) ;
\draw [shift={(148.58,195.04)}, rotate = 180.32] [fill={rgb, 255:red, 0; green, 0; blue, 0 }  ][line width=0.08]  [draw opacity=0] (8.93,-4.29) -- (0,0) -- (8.93,4.29) -- cycle    ;
\draw    (171.16,195.17) -- (222.16,147.17) ;
\draw [shift={(200.3,167.74)}, rotate = 136.74] [fill={rgb, 255:red, 0; green, 0; blue, 0 }  ][line width=0.08]  [draw opacity=0] (8.93,-4.29) -- (0,0) -- (8.93,4.29) -- cycle    ;
\draw    (168.16,93.17) -- (222.16,147.17) ;
\draw [shift={(190.56,115.57)}, rotate = 45] [fill={rgb, 255:red, 0; green, 0; blue, 0 }  ][line width=0.08]  [draw opacity=0] (8.93,-4.29) -- (0,0) -- (8.93,4.29) -- cycle    ;
\draw    (404.02,197) -- (431.02,220) ;
\draw [shift={(421.33,211.74)}, rotate = 220.43] [fill={rgb, 255:red, 0; green, 0; blue, 0 }  ][line width=0.08]  [draw opacity=0] (8.93,-4.29) -- (0,0) -- (8.93,4.29) -- cycle    ;
\draw    (404.02,197) -- (431.02,174) ;
\draw [shift={(412.57,189.71)}, rotate = 319.57] [fill={rgb, 255:red, 0; green, 0; blue, 0 }  ][line width=0.08]  [draw opacity=0] (8.93,-4.29) -- (0,0) -- (8.93,4.29) -- cycle    ;
\draw    (460,173.86) -- (431.02,174) ;
\draw [shift={(440.51,173.95)}, rotate = 359.74] [fill={rgb, 255:red, 0; green, 0; blue, 0 }  ][line width=0.08]  [draw opacity=0] (8.93,-4.29) -- (0,0) -- (8.93,4.29) -- cycle    ;
\draw    (556.02,197) -- (506.02,146) ;
\draw [shift={(527.52,167.92)}, rotate = 45.57] [fill={rgb, 255:red, 0; green, 0; blue, 0 }  ][line width=0.08]  [draw opacity=0] (8.93,-4.29) -- (0,0) -- (8.93,4.29) -- cycle    ;
\draw    (556.02,197) -- (530.02,221) ;
\draw [shift={(547.8,204.59)}, rotate = 137.29] [fill={rgb, 255:red, 0; green, 0; blue, 0 }  ][line width=0.08]  [draw opacity=0] (8.93,-4.29) -- (0,0) -- (8.93,4.29) -- cycle    ;
\draw    (431.02,220) -- (481.02,221) ;
\draw [shift={(461.02,220.6)}, rotate = 181.15] [fill={rgb, 255:red, 0; green, 0; blue, 0 }  ][line width=0.08]  [draw opacity=0] (8.93,-4.29) -- (0,0) -- (8.93,4.29) -- cycle    ;
\draw    (532.02,76) -- (432.02,75) ;
\draw [shift={(477.02,75.45)}, rotate = 0.57] [fill={rgb, 255:red, 0; green, 0; blue, 0 }  ][line width=0.08]  [draw opacity=0] (8.93,-4.29) -- (0,0) -- (8.93,4.29) -- cycle    ;
\draw    (532.02,76) -- (553.02,97) ;
\draw [shift={(537.92,81.9)}, rotate = 45] [fill={rgb, 255:red, 0; green, 0; blue, 0 }  ][line width=0.08]  [draw opacity=0] (8.93,-4.29) -- (0,0) -- (8.93,4.29) -- cycle    ;
\draw    (530.02,121) -- (553.02,97) ;
\draw [shift={(544.98,105.39)}, rotate = 133.78] [fill={rgb, 255:red, 0; green, 0; blue, 0 }  ][line width=0.08]  [draw opacity=0] (8.93,-4.29) -- (0,0) -- (8.93,4.29) -- cycle    ;
\draw    (432.02,120) -- (464.05,119.8) ;
\draw [shift={(453.03,119.87)}, rotate = 179.65] [fill={rgb, 255:red, 0; green, 0; blue, 0 }  ][line width=0.08]  [draw opacity=0] (8.93,-4.29) -- (0,0) -- (8.93,4.29) -- cycle    ;
\draw    (432.02,120) -- (404.02,98) ;
\draw [shift={(423.13,113.01)}, rotate = 218.16] [fill={rgb, 255:red, 0; green, 0; blue, 0 }  ][line width=0.08]  [draw opacity=0] (8.93,-4.29) -- (0,0) -- (8.93,4.29) -- cycle    ;
\draw    (432.02,75) -- (404.02,98) ;
\draw [shift={(414.16,89.67)}, rotate = 320.6] [fill={rgb, 255:red, 0; green, 0; blue, 0 }  ][line width=0.08]  [draw opacity=0] (8.93,-4.29) -- (0,0) -- (8.93,4.29) -- cycle    ;
\draw    (530.02,121) -- (506.02,146) ;
\draw [shift={(522.52,128.81)}, rotate = 133.83] [fill={rgb, 255:red, 0; green, 0; blue, 0 }  ][line width=0.08]  [draw opacity=0] (8.93,-4.29) -- (0,0) -- (8.93,4.29) -- cycle    ;
\draw    (120.02,94) -- (168.16,93.17) ;
\draw [shift={(137.59,93.69)}, rotate = 359.02] [fill={rgb, 255:red, 0; green, 0; blue, 0 }  ][line width=0.08]  [draw opacity=0] (8.93,-4.29) -- (0,0) -- (8.93,4.29) -- cycle    ;
\draw    (481.02,221) -- (530.02,221) ;
\draw [shift={(510.52,221)}, rotate = 180] [fill={rgb, 255:red, 0; green, 0; blue, 0 }  ][line width=0.08]  [draw opacity=0] (8.93,-4.29) -- (0,0) -- (8.93,4.29) -- cycle    ;

\end{tikzpicture}

    \caption{The construction of Hilbert curve.}
    \label{fig:hilbert_curve}
\end{figure}
\textbf{Preparation.} For a $ n$-dimensional cube $ C=\prod _{i=1}^{n}[ a_{i} ,b_{i}]$, the diameter of $ C$ is $ \text{diam}( C) =\sqrt{\sum _{i=1}^{n}( b_{i} -a_{i})^{2}}$. We also define the function $ R$ that measures the ratio between maximum edge and minimum edge. For a collection of cubes $ \mathcal{C} =\{C_{1} ,\cdots ,C_{m}\} ,C_{k} =\prod _{i=1}^{n}\left[ a_{i}^{k} ,b_{i}^{k}\right]$, $ R(\mathcal{C})$ is defined to be 
\begin{equation*}
R(\mathcal{C}) =\frac{\max_{i,k} |a_{i}^{k} -b_{i}^{k} |}{\min_{i,k} |a_{i}^{k} -b_{i}^{k} |} .
\end{equation*}
$ R(\mathcal{C}) $ measures the shape of the cubes in a collection. We need to control $ R(\mathcal{C})$ in the construction to obtain Holder continuity in \textbf{Step 2}.\par 

\textbf{Step 1: Define partition.}  
 First, we construct a sequence of partitions that has the property (*) recursively. Let $ \mathcal{C}_{0} =\left\{[ 0,1]^{d}\right\}$. Suppose we have defined close cubes collection  $ \mathcal{C}_{n} =\{C_{i_{1} ,i_{2} \cdots ,i_{n}}\}_{i_{j} =0,\cdots ,2^{d} -1}$, , such that
  \begin{enumerate}
    \item $ \mathcal{C}_n$ is obtained by cutting the unit cube $[0,1]^d$ by planes that are paralleled to the coordinate planes.
    \item $ C_{i_{1} ,i_{2} \cdots ,i_{n-1}} =\bigcup _{i_{n}} C_{i_{1} ,i_{2} \cdots ,i_{n}}$ and $ \text{diam}( C_{i_{1} ,i_{2} \cdots ,i_{n-1}}) \leqslant 2/3\cdot \text{diam}( C_{i_{1} ,i_{2} \cdots ,i_{n}})$.
    \item $ \mathbb{P}( \partial C_{i_{1} ,i_{2} \cdots ,i_{n}}) =0$ and $ R(\mathcal{C}_{n+1}) \leqslant \left( 1+2^{-n}\right) R(\mathcal{C}_{n})$.
  
  \end{enumerate}
 Next, we construct $ \mathcal{C}_{n+1}$ from $ \mathcal{C}_{n}$ that preserves these properties. We refine the division $ \mathcal{C}_{n} =\{C_{i_{1} ,i_{2} \cdots ,i_{n}}\}_{i_{j} =0,\cdots ,2^{d} -1}$ by union of hyperplane $ P_{i,r} =\{y:y_{i} =r\}$ to get $ \{C_{i_{1} ,i_{2} \cdots ,i_{n+1}}\}$ such that  $ C_{i_{1} ,i_{2} \cdots ,i_{n}} =\bigcup _{i_{n+1}} C_{i_{1} ,i_{2} \cdots ,i_{n+1}}$ and  $ \text{diam}( C_{i_{1} ,i_{2} \cdots ,i_{n-1}}) \leqslant 2/3\cdot \text{diam}( C_{i_{1} ,i_{2} \cdots ,i_{n}})$. By property 1,  $ \mathcal{C}_{n}$ is obtained by dividing $ [ 0,1]^{d}$ using 
\begin{equation*}
P_{1,r_{0}^{1}} ,\cdots ,P_{1,r_{2^{n} -1}^{1}} ,\cdots ,P_{i,r_{j}^{i}} ,\cdots ,P_{d,r_{2^{n}}^{d}} ,r_{0}^{i} =0,r_{2^{n}}^{i} =1,
\end{equation*}
we can further cut the unit cube using hyperplane $ \{P_{i,\left( r_{j}^{i} +r_{j+1}^{i}\right) /2)}\}_{i=1,\cdots d,j=1,\cdots 2^{n} -1}$. However, this construction may not satisfy property 3, $ \mathbb{P}( \partial C_{i_{1} ,i_{2} \cdots ,i_{n+1}}) =0$. We need to perturb each hyperplane to ensure a measure-zero boundary. Since 
\begin{equation*}
Z_{i} =\left\{r\in \mathbb{R} :\mathbb{P}\left(\left\{x\in [ 0,1]^{d} ,[ x]_{i} =r \right\}\right) \neq 0\right\}
\end{equation*}is at most countable, we can perturb the hyperplanes a little bit, i.e., $ \{P_{i,\left( r_{j}^{i} +r_{j+1}^{i}\right) /2+\epsilon _{i,j})}\}_{i=1,\cdots d,j=1,\cdots 2^{n} -1}$ to ensure property 3. In this way, we can choose $ |\epsilon _{i,j} |$ to be sufficiently small such that
\begin{align*}
\ \text{diam}( C_{i_{1} ,i_{2} \cdots ,i_{n+1}}) & \leqslant 2/3\cdot \text{diam}( C_{i_{1} ,i_{2} \cdots ,i_{n}}) ,
\end{align*}and 
\begin{equation*}
\begin{aligned}
\frac{\max_{i,j}\left(\left( r_{j+1}^{i} -r_{j}^{i}\right) /2+|\epsilon _{i,j} |\right)}{\min_{i,j}\left(\left( r_{j+1}^{i} -r_{j}^{i}\right) /2-|\epsilon _{i,j} |\right)} \leqslant  & \frac{\max_{i,j}\left( r_{j+1}^{i} -r_{j}^{i}\right)}{\min_{i,j}\left( r_{j+1}^{i} -r_{j}^{i}\right)} \cdotp \left( 1+2^{-n}\right)
\end{aligned} .
\end{equation*}
Therefore, $ R(\mathcal{C}_{n+1}) \leqslant \left( 1+2^{-n}\right) R(\mathcal{C}_{n})$ and it is easy to see that properties 1,2,3 are satisfied. \par 
\textbf{Step 2: Construct Hilbert Curve.} By \cref{lemma:hilbert}, there exist an sequence of ordering $ \{O_{n}\}$ of $ \{\mathcal{C}_{n}\}$ that satisfies
  \begin{itemize}
    \item The first $ n$ digits of $ O_{n+1}( i)$  with $ O_{n}( j) ,\ ( j-1) \cdotp 2^{d} < i \leqslant j\cdotp 2^{d}$.
    \item $ C_{O_{n+1}( i)} \cap C_{O_{n+1}( i+1)} $ are $d-1$ dimensional cubes.
  \end{itemize}
Let $ \mathcal{I}_{0} =\{[ 0,1]\}$, we define $ \mathcal{I}_{n}$ recursively. Suppose we have define interval collection $ \mathcal{I}_{n} =\{\{I_{i_{1} ,i_{2} \cdots ,i_{n}}\}_{i_{j} =0,\cdots ,2^{d} -1}\}$ such that
$ I_{i_{1} ,i_{2} \cdots ,i_{n-1}} =\bigcup _{i_{n}} I_{i_{1} ,i_{2} \cdots ,i_{n}}$ and  $ \mathbb{P}( C_{i_{1} ,i_{2} \cdots ,i_{n}}) =|I_{i_{1} ,i_{2} \cdots ,i_{n}} |$. Since $ \mathbb{P}( \partial C_{i_{1} ,i_{2} \cdots ,i_{n+1}}) =0$, we have $ \mathbb{P}( C_{i_{1} ,i_{2} \cdots ,i_{n}}) =\sum _{i_{n+1}}\mathbb{P}( C_{i_{1} ,i_{2} \cdots ,i_{n+1}})$. With the ordering, we divide each $ I_{O_n(j)}, 0\leqslant j\leqslant 2^{nd} - 1 $ into $ 2^{d}$ closed sub-interval $ I_{O_{n+1}(i)}, j2^d \leqslant i < (j+1)2^d$ such that $I_{O_{n+1}(i)} = [\sum_{k=1}^{i-1} p_k,\sum_{k=1}^{i} p_k]$ , where $p_i = \mathbb{P}(C_{O_{n+1}(i)})$. This construction satisfies $ |I_{O_{n+1}(i)}| = \mathbb{P}(C_{O_{n+1}(i)}) $. Note that by condition 3 in step 1, $\mathbb{P}$ vanishes at the boundary of the cubes. Therefore, 
\begin{equation}
    |I_{O_{n}(i)}| = \mathbb{P}(C_{O_{n}(i)}) = \sum_{j = (i-1)2^d  + 1} ^{i2^d}\mathbb{P}(C_{O_{n+1}(j)})= \sum_{j = (i-1)2^d  + 1} ^{i2^d} |I_{O_{n+1}(j)}|.\label{eq:sum_I}
\end{equation}\par 

 Now, we can construct a piecewise linear function $ \gamma _{n+1}( t)$ such that  
 \begin{equation}
     (\gamma_{n+1})_{\#}\lambda(C_{O_{n+1}(i)}) = \mathbb{P}(C_{O_{n+1}(i)}), (\gamma_{n+1})_{\#}\lambda(\partial C_{O_{n+1}(i)}) = 0, \forall i. \label{eq:gamma_1}
 \end{equation}
 The idea is to construct a piecewise linear curve going through all cubes in $\mathcal{C}_{n+1}$ exactly once and modify its speed. Take $ v_0 \in \text{Center}(C_{O_{n+1}(1)}), v_i \in \text{Center} (C_{O_{n+1}(i)}\cap C_{O_{n+1}(i + 1)}), v_{2^{d(n+1)}} \in \text{Center}(C_{O_{n+1}(2^{d(n+1)})})$, where $\text{Center}(\prod_{i=1}^k [a_i,b_i]) = ((a_i + b_i)/2)_{i=1,\cdots,k}$ is the center of a cube. $v_i$ is well-defined since $C_{O_{n+1}(i)}\cap C_{O_{n+1}(i+1)}$ are cubes of dimension $d-1$ by \cref{lemma:hilbert}. One possible choice of $ \gamma _{n+1}( t)$ is 
 \begin{equation}\label{eq:gamma_n}
     \gamma _{n+1}( t) =v_{0} +\sum _{i=1}^{2^{( n+1) d}}\frac{1}{p_{i}}( v_{i} -v_{i-1})\left(\left( t-\sum _{k=1}^{i-1} p_{k}\right)^{+} -\left( t-\sum _{k=1}^{i} p_{k}\right)^{+}\right).
 \end{equation}
 The first two curves $\gamma_1,\gamma_2$ are shown in \cref{fig:hilbert_curve}. It is straightforward to verify that $\gamma_{n+1}(\sum_{k=1}^{i} p_k) = v_{i}$
 and 
\begin{equation*}
    \gamma_{n+1}(I_{O_{n+1}(i)}) =  \gamma_{n+1}([\sum_{k=1}^{i-1} p_k,\sum_{k=1}^{i} p_k])\subset C_{O_{n+1}(i)}.
\end{equation*}
In fact, since $v_{i-1},v_{i}$ are on different surfaces of the cube $C_{O_{n+1}(i)}$ , the line segment between $v_{i-1},v_{i}$ lies inside the interior of $C_{O_{n+1}(i)}$ and $\gamma_{n+1}$ goes through all the cubes $ C_{O_{n+1}(i)} $ exactly once. We have  $\gamma_{n+1}^{-1}(\text{Int}(C_{O_{n+1}(i)})) \subset [\sum_{k=1}^{i-1} p_k,\sum_{k=1}^{i} p_k]$ and $ (\gamma_{n+1})_{\#}\lambda(C_{O_{n+1}(i)}) = \mathbb{P}(C_{O_{n+1}(i)}) $, where $\text{Int}(\cdot)$ is the interior of a set. Besides, we also have $ (\gamma_{n+1})_{\#}\lambda(\partial C_{O_{n+1}(i)}) \leqslant \lambda (\gamma^{-1}_{n+1}(\{v_j\}_{j=1\cdots2^{d(n+1)}})) = 0 $. \par 
  
 Finally, for any $ k_{1} \geqslant k_{2} \geqslant n,t\in [ 0,1]$, by property 2 in step 1, (\ref{eq:sum_I}) and (\ref{eq:gamma_n}),  $ \gamma _{k_{1}}( t) ,\gamma _{k_{2}}( t)$ are in one cube $ C_{i_{1} ,i_{2} \cdots ,i_{n}}$. and $ \text{diam}(C_{i_{1} ,i_{2} \cdots ,i_{n}}) \leqslant  \left(\frac{2}{3}\right)^{n}\sqrt{d} $. Thus, $ |\gamma _{k_{1}}( t) -\gamma _{k_{2}}( t) |\leqslant \left(\frac{2}{3}\right)^{n}\sqrt{d}$ which implies $ \{\gamma _{n}( t)\}$ converges uniformly to one continuous function $ \gamma ( t)$. 
 
  \textbf{Step 3: Verify Conclusion $\mathbb{P} = \gamma_{\#}\lambda$.} By $ W(( \gamma _{k})_{\#} \lambda ,\gamma _{\#} \lambda ) \leqslant \| \gamma _{k} -\gamma \| _{\infty }\rightarrow 0$, $ ( \gamma _{k})_{\#} \lambda $ converges weakly to $ \gamma _{\#} \lambda $.  By construction, $ \mathbb{P}( \partial C_{i_{1} ,i_{2} \cdots ,i_{n}}) =0$ and $ (\gamma_{n})_{\#}\lambda(C_{i_{1} ,i_{2} \cdots ,i_{n}}) = \mathbb{P}(C_{i_{1} ,i_{2} \cdots ,i_{n}}) $. Therefore, by condition 2 in step 1, for any $k\geqslant n$,
\begin{align}
( \gamma _{k})_{\#} \lambda ( C_{i_{1} ,i_{2} \cdots ,i_{n}}) &= ( \gamma _{k})_{\#} \lambda (\bigcup_{i_{n+1},\cdots,i_k} C_{i_{1} ,i_{2} \cdots ,i_{k}})\notag \\
&= \sum_{i_{n+1},\cdots,i_k} ( \gamma _{k})_{\#} \lambda ( C_{i_{1} ,i_{2} \cdots ,i_{k}})\notag \\
&= \sum_{i_{n+1},\cdots,i_k} \lambda \left( \gamma _{k}^{-1}( C_{i_{1} ,i_{2} \cdots ,i_{k}})\right)\notag \\
&= \sum_{i_{n+1},\cdots,i_k}\mathbb{P}( C_{i_{1} ,i_{2} \cdots ,i_{k}})  =\mathbb{P}( C_{i_{1} ,i_{2} \cdots ,i_{n}}) ,\label{eq:meausre_gamma_k}
\end{align}
where we use $\mathbb{P}( \partial C_{i_{1} ,i_{2} \cdots ,i_{n}}) =0$ and $(\gamma_{k})_{\#}\lambda( \partial C_{i_{1} ,i_{2} \cdots ,i_{n}}) =0$ in the second and the last equation.  We claim that $ \gamma _{\#} \lambda ( \partial C_{i_{1} ,i_{2} \cdots ,i_{n}}) =0$. For $ k>n$, we define 
\begin{equation*}
B_{k} =\bigcup _{ \begin{array}{l}
\partial C_{i_{1} ,i_{2} \cdots ,i_{k}} \cap \partial C_{i_{1} ,i_{2} \cdots ,i_{n}} \neq \emptyset 
\end{array}} C_{i_{1} ,i_{2} \cdots ,i_{k}} .
\end{equation*}Let $ k >k_{1} \geqslant n$ and $ B_{n}^{\epsilon } =\{x:d( x,\partial C_{i_{1} ,i_{2} \cdots ,i_{n}}) < \epsilon \}$, we have 
\begin{align*}
( \gamma _{k})_{\#} \lambda \left(\text{Int}( B_{k_{1}})\right) \leqslant ( \gamma _{k})_{\#} \lambda ( B_{k_{1}}) & =\mathbb{P}( B_{k_{1}}) \leqslant \mathbb{P}\left( B_{n}^{( 2/3)^{k_{1}}\sqrt{d}}\right), \quad \forall k >k_{1},
\end{align*}
where we use (\ref{eq:meausre_gamma_k}) and the fact that $ (\gamma_k)_{\#}\lambda $ vanishes on $\partial C_{i_{1} ,i_{2} \cdots ,i_{k}} $ in the first equality and $ \text{diam}(C_{i_{1} ,i_{2} \cdots ,i_{k_1}}) \leqslant  \left(\frac{2}{3}\right)^{k_1}\sqrt{d} $ in the last inequality. Let $ k\rightarrow \infty $, by Portmanteau Theorem, we get 
\begin{align*}
\gamma _{\#} \lambda ( \partial C_{i_{1} ,i_{2} \cdots ,i_{n}}) \leqslant \gamma _{\#} \lambda \left(\text{Int}( B_{k_{1}})\right)\leqslant \liminf _{k\rightarrow \infty } ( \gamma _{k})_{\#} \lambda ( B_{k_{1}}) & \leqslant \liminf _{k\rightarrow \infty }\mathbb{P}\left( B_{n}^{( 2/3)^{k_{1}}\sqrt{d}}\right) =\mathbb{P}\left( B_{n}^{( 2/3)^{k_{1}}\sqrt{d}}\right).
\end{align*}
Since $ k_{1}$ is arbitrary, let $ k_{1}\rightarrow \infty $,
\begin{equation*}
\gamma _{\#} \lambda ( \partial C_{i_{1} ,i_{2} \cdots ,i_{n}}) \leqslant \lim _{k_{1}\rightarrow \infty }\mathbb{P}\left( B_{n}^{( 2/3)^{k_{1}}\sqrt{d}}\right) =\mathbb{P}( \partial C_{i_{1} ,i_{2} \cdots ,i_{n}}) =0
\end{equation*}
and we conclude that $ \gamma _{\#} \lambda ( \partial C_{i_{1} ,i_{2} \cdots ,i_{n}}) =0$. By Portmanteau Theorem, let $ k\rightarrow \infty $ in (\ref{eq:meausre_gamma_k}), we obtain $ \gamma _{\#} \lambda ( C_{i_{1} ,i_{2} \cdots ,i_{n}}) =\mathbb{P}( C_{i_{1} ,i_{2} \cdots ,i_{n}})$.  Let $ \mathcal{C}^{*} =\bigcup _{i=1}^{\infty }\mathcal{C}_{i}$. Notice that for any open set $ U\in [ 0,1]^{d}$, $ U=\bigcup _{C\subset U,C\in \mathcal{C}^{*}} C$, which means the $ \sigma $-algebra generated by $ \mathcal{C}^{*}$ contains all Borel sets. Besides, $ \mathcal{C}^{*}$ is a $ \pi $-system. Hence, $ \lambda \left( \gamma ^{-1}( U)\right) =\mathbb{P}( U)$ for all Lebesgue measurable set $ U$.\par 
  \textbf{Step 4: Holder Continuity of $ \gamma $.} We verify the condition of \cite[Theorem 1]{Shchepin2010OnHM}. $ \mathcal{I}_{n}$ and $ \mathcal{C}_{n}$ in the above construction correspond to developments $ \alpha _{k} ,\beta _{k}$ in \cite[Theorem 1]{Shchepin2010OnHM}. Define $ f_{k}( I_{O_{k}( i)}) =C_{O_{k}( i)} ,\ i=1,\cdots ,2^{kd}$ as a map from $ \mathcal{I}_{n}$ to $ \mathcal{C}_{n}$.
  \begin{enumerate}
  \item By construction, if $ I_{1} \in \mathcal{I}_{k} ,I_{2} \in \mathcal{I}_{k-1} ,I_{1} \subset I_{2}$, $ f_{k}( I_{1}) \subset f_{k-1}( I_{2})$.
  \item If $ I_{1} ,I_{2} \in \mathcal{I}_{k}$ and $ I_{1} \cap I_{2} \neq \emptyset $, $ I_{1}$, $ I_{2}$ are adjacent. By \cref{lemma:hilbert}, $ f_{k}( I_{1}) ,f_{k}( I_{2})$ share a $ ( d-1)$-dimensional boundary. Therefore, $ f_{k}( I_{1}) \cap f_{k}( I_{2}) \neq \emptyset $.
  \item We have $ |\mathcal{C}_{k-1} |\leqslant 2^{d} |\mathcal{C}_{k} |$.
  \item By property 3, 
  \begin{equation*}
    R(\mathcal{C}_{n}) \leqslant \prod _{k=1}^{n-1}\left( 1+2^{-k}\right) < \prod _{k=1}^{\infty }\left( 1+2^{-k}\right) < \infty .
    \end{equation*}Let $ A=\prod _{k=1}^{\infty }\left( 1+2^{-k}\right)$. We have for any $ C\in \mathcal{C}_{n}$, by \cref{assumption:lower_bound}
    \begin{equation*}
    \text{diam}^{d}( C) \leqslant R(\mathcal{C}_{n}) \lambda ( C) \leqslant AC_{1}^{-1}\mathbb{P}( C) \leqslant AC_{1}^{-1}\max_{I\in \mathcal{I}}\left(\text{diam}( I)\right) ,
    \end{equation*}where $ C_{1}$ is the constant in \cref{assumption:lower_bound}.
  \item Since $ \mathcal{I}_{k}$ consists of one-dimensional intervals, we have $ \text{diam}(\mathcal{I}_{k}) =\text{gap}(\mathcal{I}_{k})$.
  \end{enumerate}
  Therefore, $ \gamma $ is $ 1/d$-Holder continuous. 
\end{proof}

\begin{proof}[Proof of \cref{thm:wide_nn} Part II]
    The proof is similar to the proof of \cref{thm:wide_nn} Part I. The difference is that in the first part, we use a deep ReLU network to approximate the Holder continuous curve $ \gamma $ instead of wide ReLU network.\par 
    Let $ \mu _{i} =\mathbb{P}_{C_{i}} /\mathbb{P}( C_{i})$. By \cref{assump:support} and \cref{assumption:lower_bound}, for each connected component $ C_i $ of the support, there exists Lipschitz maps $ g^i_2 \in \mathcal{H}([0,1]^{d^C_i},C_i) \cap \mathcal{F}_L([0,1]^{d^C_i},C_i) $ such that 
    \begin{equation*}
        (g^i_2)^{-1}_{\#} \mu_i(B) \geqslant C_g \lambda(B)/\mathbb{P}(C_i),
    \end{equation*}
    for any measurable set $ B\subset [0,1]^{d_i^C} $. By \cref{prop:holder_curve}, there exist a $ 1/d_i^C $-Holder continuous curve $ \gamma_i:[0,1] \rightarrow [0,1]^{d_i^C} $ such that $ (\gamma_{i})_{\#}\lambda = (g^i_2)^{-1}_{\#}\mu_i$.\par 
    By \cite[Theorem 2]{Yarotsky2018OptimalAO} (take $ \omega(x) = C\|x\|^{1/d}$ where $ C $ is the Holder continuity constant factor), there exists deep ReLu network $ \hat{g}_{1}^{i,j}$ of width $ 12$ and depth $ L_{1}$ such that \begin{equation*}
        \| \hat{g}_{1}^{i,j} -( \gamma _{i})_{j} \| _{\infty } \leqslant O\left( L_{1}^{-2/d_{i}^{C}}\right) ,
        \end{equation*}where $ ( \gamma _{i})_{j}$ is the $ j$-coordinate of $ \gamma _{i}$. Let $ \hat{g}_{1}^{i}( x) =\left(\hat{g}_{1}^{i,1}( x) ,\cdots ,\hat{g}_{1}^{i,d}( x)\right)$, we get 
        \begin{equation*}
        \| \hat{g}_{1}^{i} -\gamma _{i} \| _{\infty } \leqslant O\left( L_{1}^{-2/d_{i}^{C}}\right) .
        \end{equation*}Thus, the width of $ \hat{g}_{1}^{i}$ is $ \Theta\left( d_{i}^{C}\right)$. By Lemma \ref{lemma:push-forward_err}
        \begin{align*}
        W\left(\left( g_{2}^{i}\right)^{-1}_{\#} \mu _{i} ,P( \hat{g}_{1}^{i} (U_{i}))\right) & =W\left(( \gamma _{i})_{\#} U_{i} ,P( \hat{g}_{1}^{i} (U_{i}))\right) \leqslant \| \hat{g}_{1}^{i} -\gamma _{i} \| _{\infty } \leqslant O\left( L_{1}^{-2/d_{i}^{C}}\right) .
        \end{align*}
        The rest are the same as proof of \cref{thm:wide_nn}.
\end{proof}
\subsection{Proof of results in \cref{sec:ncm_const}}
\begin{proof}[Proof of Corollary \ref{corollary:deep_nn}]
    Suppose that structure equations of $ \mathcal{M}$ are
    \begin{equation*}
V_{i} =\begin{cases}
f_{i}\left(\text{Pa} (V_{i} ),\boldsymbol{U}_{V_{i}}\right) , & V_{i} \ \text{is continuous} ,\\
\arg\max_{k\in [ n_{i}]}\left\{g_{k}^{V_{i}} +\log\left( f_{i}\left(\text{Pa}( V_{i}) ,\boldsymbol{U}_{V_{i}}\right)\right)_{k}\right\} , & V_{i} \ \text{is categorical} , \|f_i\|_1=  1
\end{cases}, \quad f_{i} \in \mathcal{F}_L. 
\end{equation*}
      Let $ d_{i}^{\text{in}}$ be the input dimension of $ f_{i}$, $ d_{i}^{\text{out}}$ be the output dimension, 
      \begin{equation*}
      \hat{f}_{i} =
      \begin{cases}
          \arg\min_{\hat{f}_{i} \in \mathcal{NN}_{d_{i}^{\text{in}} ,d_{i}^{\text{out}}} (d_{i}^{\text{out}} (2d_{i}^{\text{in}} +10),L_{0} )} \| \hat{f}_{i} -f_{i} \| _{\infty },  & V_i \ \text{continuous}, \\
          \max\{0,\arg\min_{\hat{f}_{i} \in \mathcal{NN}_{d_{i}^{\text{in}} ,d_{i}^{\text{out}}} (d_{i}^{\text{out}} (2d_{i}^{\text{in}} +10),L_{0} )} \| \hat{f}_{i} -f_{i} \| _{\infty }\}, &V_i \ \text{categorical}.
      \end{cases}
      \end{equation*}
      We truncate the neural network at $0$ for categorical variables because the propensity functions are required to be non-negative. This truncate operation will not influence the approximation error since $f_i$ are non-negative. According to \cref{assump:Lip_and_bounded}, $f_i$ are Lipschitz continuous. By \cite[Theorem 2]{Yarotsky2018OptimalAO}, we have, if $V_i$ is continuous, $ \| \hat{f}_{i} -f_{i} \| _{\infty } \leqslant O(L_0^{-1/d_{i}^{\text{in}}}) \leqslant O(L_0^{-1/d_{\max}^{\text{in}}}) $.
      
      If $V_i$ is categorical, let $ S = \| \hat{f}\left(\text{pa} (v_{k} ),\boldsymbol{u}_{V_{k}}\right) -f\left(\text{pa} (v_{k} ),\boldsymbol{u}_{V_{k}}\right) \|_\infty$, $\boldsymbol{p} = {f}\left(\text{pa} (v_{k} ),\boldsymbol{u}_{V_{k}}\right) ,\hat{\boldsymbol{p}} = \hat{f}\left(\text{pa} (v_{k} ),\boldsymbol{u}_{V_{k}}\right)/\|\hat{f}\|_1$. If $ S  >1/(2K)$, $ \|\boldsymbol{p} -\hat{\boldsymbol{p}} \|_\infty \leqslant 2KS $. 
If $ S \leqslant 1/(2K)$,
\begin{align*}
\|\boldsymbol{p} -\hat{\boldsymbol{p}} \|_\infty & \leqslant \frac{1}{\| \hat{f} \| _{1}} \| \hat{f}\left(\text{pa} (v_{k} ),\boldsymbol{u}_{V_{k}}\right) - f\left(\text{pa} (v_{k} ),\boldsymbol{u}_{V_{k}}\right) \| _{\infty } +\frac{1}{\| \hat{f} \| _{1}} |\| \hat{f} \| _{1} -1|\|f \|_\infty\\
 & \leqslant \frac{( n_{i} +1)}{\| \hat{f} \| _{1}} S\leqslant \frac{( n_{i} +1)}{1-n_{i} S} S \leqslant 2(K + 1) S
\end{align*}
where we use \cref{assump:Lip_and_bounded} that $ n_{i} \leqslant K$ and $ S \leqslant 1/(2K)$.  Thus, 
$$  \|f_i - \hat{f}_i/\|\hat{f}_i\|_1\|_\infty \leqslant O( \|f_i - \hat{f}_i\|_\infty) \leqslant O(L_0^{-1/d_{\max}^{\text{in}}}). $$
By \cref{thm:wide_nn}, there exists a neural network $ \hat{g}_{j}$ with architecture in \cref{thm:wide_nn} such that 
      \begin{equation*}
      W( P(U_{j}) ,P(\hat{g}_{j} (Z_{j}))) \leqslant O\left( L_{1}^{-2/d_j^U} +L_{2}^{-2/d_j^U}+(\tau - \tau\log\tau)\right),
      \end{equation*}
       where $Z_{j} \sim U([0,1]^{N_{C,j}} )$ i.i.d., $ N_{C,j}$ is the number of connected components of support of $ U_{j}$, $ d_j^U $ is the dimension of latent variables $ U_j $ and $L_1,L_2,\tau$ are hyperparameters of the neural network defined in \cref{thm:wide_nn}. Let $ \hat{U}_{j} =\hat{g}_{j} (Z_{j} )$, By \cref{lemma:product_err},
      \begin{align*}
      W(P( U_{1},\cdots,U_{n_U}) ,P(\hat{U}_{1},\cdots,\hat{U}_{n_U})) &\leqslant \sum^{n_U} _{j = 1} W( P(U_{j}) ,P(\hat{g}_{j} (Z_{j} ))) \\
      &\leqslant  O\left(\sum^{n_U} _{j = 1} (L_{1}^{-2/d_j^U} +L_{2}^{-2/d_j^U})+(\tau - \tau\log\tau)\right).
      \end{align*}
      Let $ \hat{\mathcal{M}}$ be the casual model with the following structure equations
     \begin{equation*}
\hat{V}_{i} =\begin{cases}
\hat{f}_{i}\left(\text{Pa} (\hat{V}_{i} ),(\hat{g}_{j} (Z_{C_{j}} ))_{U_{C_{j} }\in \boldsymbol{U}_{V_{i}}}\right), & V_{i}  \text{ is continuous} ,\\
\arg\max_{k\in [ n_{i}]}\left\{g_{k} +\log\left(\hat{f}_{i}\left(\text{Pa} (\hat{V}_{i} ),(\hat{g}_{j} (Z_{C_{j}} ))_{U_{C_{j} } \in \boldsymbol{U}_{V_{i}}}\right)  \right)_{k}\right\} , & V_{i}  \text{ is categorical} .
\end{cases} 
\end{equation*}
      By \cref{thm:approximation}, we get 
\begin{align*}
      W\left( P^{\mathcal{M}^*}(\boldsymbol{V}(\boldsymbol{t})),P^{\hat{\mathcal{M}}}(\boldsymbol{V}(\boldsymbol{t}))\right) & \leqslant O(\sum_{V_i \ \text{continuous}} \|f_i - \hat{f}_i\|_\infty + \sum_{V_i \ \text{categorical}} \|f_i - \hat{f}_i/\|\hat{f}_i\|_1\|_\infty \\
      & \quad\quad + W(P( U_{1},\cdots,U_{n_U}) ,P(\hat{U}_{1},\cdots,\hat{U}_{n_U})))\\ 
      &\leqslant O(L_{0}^{-2/d^\text{in}_{\max}} +L_{1}^{-2/d^U_{\max}} +L_{2}^{-2/d^U_{\max}} +(\tau - \tau\log\tau)),
\end{align*}
      for any intervention $\boldsymbol{T} = \boldsymbol{t}$. 
\end{proof}
\section{Proof of Consistency}
\subsection{Inconsistent Counterexample (\cref{prop:counterexample})}\label{subsec:counter}

\begin{proposition}\label{prop:counterexample_foral}
    There exists a constant $c>0$ and an SCM $\mathcal{M}^*$ satisfying Assumptions \ref{assump:independence_u}-\ref{assumption:lower_bound} with a backdoor causal graph such that there is no unobserved confounding in $\mathcal{M}^*$. For any $\epsilon >0$, there exists an SCM $\mathcal{M}_\epsilon$  with the same causal graph such that $W(P^{\mathcal{M}^*}(\boldsymbol{V}),P^{\mathcal{M}_\epsilon}(\boldsymbol{V})) \leq \epsilon $ and $|\text{ATE}_{\mathcal{M}^*} - \text{ATE}_{\mathcal{M}_\epsilon}| > c $.
\end{proposition}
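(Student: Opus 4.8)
The plan is to realize $\mathcal{M}^*$ on a minimal backdoor graph and then perturb the structural function of the outcome only on a vanishingly thin slab around the intervention value $t_1$, producing a ``bump'' that is hidden in the observational law but fully visible under the intervention $T=t_1$.

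\textbf{The base model.} Let $\mathcal{G}$ have an observed pretreatment covariate $Z$, a continuous treatment $T$, an outcome $Y$, edges $Z\to T$, $Z\to Y$, $T\to Y$ and no bidirected edges. Take all latents independent uniform on $[0,1]$ and set $Z=U_Z$, $T=\tfrac{1}{2}(Z+U_T)\in[0,1]$, $Y=f^{*}(Z,T)$ for a fixed bounded $L_f$-Lipschitz $f^{*}$ (e.g.\ $f^{*}(z,t)=(z+t)/4$). Then Assumptions~\ref{assump:independence_u} and \ref{assump:categorical_rv} hold (no categorical nodes), Assumption~\ref{assump:Lip_and_bounded} holds with any $K\ge 1$, and since each $C^{2}$-component is a singleton carrying one uniform latent, the identity homeomorphism $[0,1]\to[0,1]$ verifies Assumptions~\ref{assump:support} and \ref{assumption:lower_bound} (lower-bound constant $1$; the uniform law has no atoms, so it vanishes on $\{0,1\}$). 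Fix $t_1=1/2$, $t_0=1/4$, both interior points at which $T$ has positive density; since there is no unobserved confounding, backdoor adjustment identifies $\text{ATE}_{\mathcal{M}^{*}}(t_1;t_0)=\mathbb{E}_Z[f^{*}(Z,t_1)-f^{*}(Z,t_0)]$, so $\mathcal{M}^{*}$ is identifiable.

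\textbf{The perturbation.} Keep $\mathcal{G}$, the latents, and the equations for $Z,T$, but replace $f^{*}$ by $g_\epsilon(z,t)=f^{*}(z,t)+M\,\bigl(1-|t-t_1|/\delta\bigr)^{+}$: a piecewise-linear tent of a fixed height $M\in(0,K/2)$ and half-width $\delta$, which is a small ReLU network (so $\mathcal{M}_\epsilon$ even lies in the unregularized NCM class) with $\|g_\epsilon\|_\infty\le K$ but Lipschitz constant $\asymp M/\delta\to\infty$ — hence $\mathcal{M}_\epsilon$ deliberately violates the Lipschitz part of Assumption~\ref{assump:Lip_and_bounded}. For the observational distance, couple $P^{\mathcal{M}^{*}}(\boldsymbol{V})$ and $P^{\mathcal{M}_\epsilon}(\boldsymbol{V})$ by pushing a single latent draw through both SCMs: the $(Z,T)$-coordinates coincide exactly, and the $Y$-coordinates differ only on $\{T\in[t_1-\delta,t_1+\delta]\}$, an event of probability at most $4\delta$ (the density of $T$ is $\le 2$), and there by at most $M\le K$; thus $W\bigl(P^{\mathcal{M}^{*}}(\boldsymbol{V}),P^{\mathcal{M}_\epsilon}(\boldsymbol{V})\bigr)\le 4K\delta$, and $\delta:=\epsilon/(4K)$ makes this $\le\epsilon$. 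On the other hand $t_0\notin[t_1-\delta,t_1+\delta]$, so $g_\epsilon(\cdot,t_0)=f^{*}(\cdot,t_0)$ while $g_\epsilon(\cdot,t_1)=f^{*}(\cdot,t_1)+M$, giving $\text{ATE}_{\mathcal{M}_\epsilon}(t_1;t_0)=\text{ATE}_{\mathcal{M}^{*}}(t_1;t_0)+M$, i.e.\ $|\text{ATE}_{\mathcal{M}^{*}}-\text{ATE}_{\mathcal{M}_\epsilon}|=M=:c$, a positive constant independent of $\epsilon$.

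\textbf{Main difficulty.} This is a construction, so the effort is bookkeeping: verifying that $\mathcal{M}^{*}$ meets each of Assumptions~\ref{assump:independence_u}--\ref{assumption:lower_bound} (the canonical-form/$C^{2}$-component and lower-bound conditions are the fussy ones, but all are immediate for independent uniform latents), keeping the coupling estimate clean with explicit constants, and recording that the tent is a ReLU network, which is what turns the example into a statement about the unregularized neural estimator. The one genuine conceptual point, worth flagging, is that the phenomenon needs a continuous treatment and a point intervention: with overlap mass near $t_1$ that is positive but can be made as thin as we like, a fixed-height bump is $W$-negligible yet $\text{ATE}$-nonnegligible, whereas a discrete treatment (seen directly in the observational law) or an integrated objective would average the bump away — which is precisely why the Lipschitz truncation in \cref{thm:consistency} and the overlap hypothesis of \cref{prop:Lip_ATE} restore consistency.
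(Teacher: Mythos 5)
Your construction is correct and proves the proposition as stated, but it takes a genuinely different route from the paper's. The paper works with a \emph{binary} treatment and a discrete covariate $X\in\{-1,0,\delta\}$: the structural function $Y = X/\delta + U_y$ (for $X\ge 0$) has exploding Lipschitz constant, and as $\delta\to 0$ the atom at $X=\delta$ collapses onto the atom at $X=0$. In the limit $\mathcal{M}^*$ the two merged strata have different conditional treatment-response functions, and the computed ATEs are a fixed $19/30$ vs.\ $1/2$. You instead keep everything continuous and add a fixed-height, vanishing-width tent $M(1-|t-t_1|/\delta)^+$ to the outcome map — a different source of Lipschitz blow-up that is concentrated at the intervention point rather than at a collapsing covariate atom. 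Your Wasserstein bound via coupling identical latents is clean and elementary, and the ATE gap is just the bump height. Both are valid proofs of the displayed statement.

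The reason I think the paper's choice is worth noting is robustness to the estimator it is trying to indict. You flag this yourself at the end, and it is worth making explicit: the optimization objective (12) in \cref{thm:consistency} integrates $\mathbb{E}_{t\sim\mu_T}[\cdot]$, and \cref{cor:rate} takes $\mu_T$ uniform on an interval. Your perturbation contributes only $O(\delta)$ to that integrated objective, so it does not in itself exhibit inconsistency of the estimator the paper actually analyzes, only of a point-intervention ATE. The paper's construction has binary $T$, so any reference measure $\mu_T$ on $\{0,1\}$ is a convex combination of the two point interventions and the gap $19/30 - 1/2$ survives integration. If you want your continuous example to also break the integrated estimator, you would need to replace the single tent by a family of tents whose total mass under $\mu_T$ stays bounded below as their $W$-footprint shrinks, which requires a little more care. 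As a proof of the proposition as literally written, however, your argument is fine; minor cosmetic fixes are that your gap is $=M$ rather than $>c$ (take $c = M/2$), and that $M< K/2$ only gives $\|g_\epsilon\|_\infty\le K$ for $K>1$, so you should fix a concrete $K$ (say $K=2$) or write $M\le K-\|f^*\|_\infty$.
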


\begin{proof}

Let $ \boldsymbol{V} = \{ X,T,Y\} $ be the covariate, treatment and outcome respectively. $ T$ is a binary variable. Let structure equations of causal model $ \mathcal{M}_{\delta }$ be 
\begin{align*}
X & =\begin{cases}
-1 & ,\text{w.p.} \ 1/2\\
0 & ,\text{w.p.} \ 1/4\\
\delta  & ,\text{w.p.} \ 1/4,
\end{cases}\\
P( T=1|X=x) & =\begin{cases}
1/2 & ,x=-1\ \text{or} \ 0,\\
3/4 & ,x=\delta ,
\end{cases}\\
Y & =\begin{cases}
T+U_{y} & ,X< 0\\
X/\delta +U_{y} & ,X\geqslant 0
\end{cases}
\end{align*}where all latent variables are independent and $ U_{y}$ is mean-zero noise. The distribution of $ \mathcal{M}_{\delta }$ is 
\begin{align*}
P_{X,T,Y}( 0,1,y) =p_{U_{y}}( y) /8, & \ P_{X,T,Y}( 0,0,y) =p_{U_{y}}( y) /8\\
P_{X,T,Y}( \delta ,1,y) =3p_{U_{y}}( y-1) /16, & \ P_{X,T,Y}( \delta ,0,y) =p_{U_{y}}( y-1) /16.\\
P_{X,T,Y}( -1,1,y) =p_{U_{y}}( y-1) /4, & P_{X,T,Y}( -1,0,y) =p_{U_{y}}( y) /4.
\end{align*}
As $ \delta \rightarrow 0$, distribution of $ \mathcal{M}_{\delta }\xrightarrow{d}\mathcal{M}^*$, where structure equations of $ \mathcal{M}^*$ are 
\begin{align*}
P( U_{1} =1) =3/5,P(U_{1} =0) =2/5 & ,P( U_{2} =1) =1/3,P( U_{1} =0) =2/3,\\
X & =\begin{cases}
-1 & ,\text{w.p.} \ 1/2,\\
0 & ,\text{w.p.} \ 1/2,
\end{cases}\\
P( T=1|X=x) & \ =\begin{cases}
1/2 & ,x=-1,\\
5/8 & ,x=0,
\end{cases}\\
Y & \ =\begin{cases}
T+U_{y} & ,X=-1,\\
U_{1} +U_{y} & ,X=0,T=1,\\
U_{2} +U_{y} & ,X=0,T=0,
\end{cases}
\end{align*}
where $ U_1,U_2,U_y $ are independent. The distribution of $ \mathcal{M}^*$ is 
\begin{align*}
P_{X,T,Y}( 0,1,y) & \ =p_{U_{y}}( y) /8+3p_{U_{y}}( y-1) /16,\\
P_{X,T,Y}( 0,0,y) & =p_{U_{y}}( y) /8+p_{U_{y}}( y-1) /16,\\
P_{X,T,Y}( -1,1,y) & =p_{U_{y}}( y-1) /4,\\
P_{X,T,Y}( -1,0,y) & =p_{U_{y}}( y) /4.
\end{align*}
It is easy to see that $\mathcal{M}^* $ satisfies Assumption \ref{assump:independence_u}-\ref{assumption:lower_bound}. Some calculation gives $ W( P^{\mathcal{M}^*}(\boldsymbol{V}) ,P^{\mathcal{M}_{\delta }}(\boldsymbol{V})) \leqslant \delta /2$. It is easy to calculate the ATE of the two models. 
\begin{align*}
\text{ATE}_{\mathcal{M}^*} = 19/30, & \ \text{ATE}_{\mathcal{M}_{\delta }} = 1/2.
\end{align*}
\end{proof}
This example implies that even as the Wasserstein distance between $ P^{\mathcal{M}^*}(\boldsymbol{V}) ,P^{\mathcal{M}_{\delta }}(\boldsymbol{V}) $ converges to zero, their ATEs do not change. As mentioned in the main body, this problem is caused by the violation of Lipschitz continuity assumption for $\mathcal{M}_\delta$. Note that in $ \mathcal{M}_{\delta }$, $ \mathbb{E}[ Y|X,T] =X/\delta $. As $ \delta \rightarrow 0$, the Lipschitz constant explodes. 

This problem may also arise in (\ref{eq:opt_empiri}). If the distribution ball $ B_{n} =\{\hat{\mathcal{M}} :W( P^{\hat{\mathcal{M}}} ,P^{\mathcal{M}^*}_{n}) \leqslant \alpha _{n}\}$ includes a small ball around the true distribution $ S_{\epsilon } =\{\hat{\mathcal{M}} :W( P^{\hat{\mathcal{M}}} ,P^{\mathcal{M}^*}) \leqslant \epsilon \} \subset B_{n}$ and the NCM is expressive enough to approximate all the SCMs in the $S_\epsilon$, this example tells us the confidence interval may not shrink to one point as sample size increases to infinity even if the model is identifiable.

\subsection{Proof of \cref{thm:consistency}}
To prove \cref{thm:consistency}, we will need the following proposition. 
\begin{proposition}\label{prop:convg_opt}
Given a metric space $ (M,d) $ and sets $ \Theta_1\subset\Theta_2\cdots\subset\Theta_n\subset\cdots\subset \Theta_\infty \subset M$, where $\Theta_\infty = \overline{\cup_{n=1}^\infty \Theta_n}$ , positive sequences $ \{\epsilon_{n}\}_{n\in \mathbb{N}},\{\delta_{n}\}_{n\in \mathbb{N}}$ such that $ \lim _{n\rightarrow \infty } \epsilon _{n}=\lim _{n\rightarrow \infty }\tau_n=\lim _{n\rightarrow \infty } \delta_{n} =0$ and continuous functions $ f,g_{n} :\Theta_\infty \rightarrow \mathbb{R}$,  suppose that
\begin{enumerate}
    \item $\Theta_\infty$ is compact.
    \item $g_n$ satisfies 
    \begin{equation} \label{eq:aprox_lip}
        \|g_n(\theta_1) - g_n(\theta_2)\| \leqslant L_g d(\theta_1,\theta_2) + \tau_n, \quad \forall \theta_1,\theta_2\in\Theta
    \end{equation}
    and $ \sup_{\theta\in\Theta_\infty}\|g_{n}(\theta) -  g(\theta)\|\leqslant \delta_n, g_n \geqslant 0$. 
    \item There exists a compact subset $ \tilde{\Theta}_\infty \subset \Theta_\infty $ such that for any $\theta_0 \in \tilde{\Theta}_\infty$, there exists $\theta \in\Theta_n$ such that $ d(\theta,\theta_0 )\leqslant \epsilon_n.$
\end{enumerate}
 Consider the following optimization problems:
\begin{align}
\min_{\theta \in \Theta_n } & f( \theta ) , \notag\\
s.t. &\ g_{n}( \theta ) \leqslant L_g\epsilon_{n} + \delta_n, \label{eq:opt_n}
\end{align}
\begin{align}
\min_{\theta \in \Theta_\infty } & f( \theta ) ,\notag\\
s.t. &\ g( \theta ) = 0. \label{eq:opt*}
\end{align}
and 
\begin{align}
\min_{\theta \in \tilde{\Theta}_\infty } & f( \theta ) ,\notag\\
s.t. &\ g( \theta ) = 0. \label{eq:opt*_sub}
\end{align}
We assume that the feasible region of (\ref{eq:opt*}) and (\ref{eq:opt*_sub}) are nonempty. Let $ f_{n}^{*}, f^*, \tilde{f}^* $ be the minimal of (\ref{eq:opt_n}), (\ref{eq:opt*}) and (\ref{eq:opt*_sub}) respectively. Then, $ [\liminf _{k\rightarrow \infty } f_{n}^{*}, \limsup_{k\rightarrow \infty } f_{n}^{*}] \subset [f^{*}, \tilde{f}^*]$. 
\end{proposition}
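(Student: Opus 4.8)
The plan is to prove the two one-sided bounds $\liminf_{n} f_n^* \ge f^*$ and $\limsup_n f_n^* \le \tilde f^*$ separately; combined with the elementary observation that $\{f_n^*\}$ is a bounded real sequence — since $\Theta_n \subset \Theta_\infty$ and $f$ is continuous on the compact set $\Theta_\infty$, we have $\inf_{\Theta_\infty}f \le f_n^* \le \sup_{\Theta_\infty}f < \infty$ — these yield $[\liminf_n f_n^*,\limsup_n f_n^*] \subset [f^*,\tilde f^*]$. First I would record that the minima in \eqref{eq:opt*} and \eqref{eq:opt*_sub} are attained: the feasible sets are nonempty by hypothesis and closed, being intersections of $g^{-1}(\{0\})$ (closed, as $g$ is continuous) with the compact sets $\Theta_\infty$ and $\tilde\Theta_\infty$; hence they are compact and the continuous $f$ attains its infimum on each. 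Note also $f^* \le \tilde f^*$ since $\tilde\Theta_\infty \subset \Theta_\infty$, so the target interval is well posed.

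For the \textbf{upper bound}, let $\theta^*$ attain \eqref{eq:opt*_sub}, so $\theta^* \in \tilde\Theta_\infty$, $g(\theta^*)=0$, $f(\theta^*)=\tilde f^*$. By assumption~3 there is $\theta_n \in \Theta_n$ with $d(\theta_n,\theta^*)\le \epsilon_n$. I claim $\theta_n$ is feasible for the $n$-th problem \eqref{eq:opt_n}: by the approximate-Lipschitz bound \eqref{eq:aprox_lip} together with $\sup_{\Theta_\infty}|g_n-g|\le \delta_n$,
\[
g_n(\theta_n) \le g_n(\theta^*) + L_g\, d(\theta_n,\theta^*) + \tau_n \le g(\theta^*) + \delta_n + L_g\epsilon_n + \tau_n = L_g\epsilon_n + \delta_n + \tau_n ,
\]
which is within the constraint level of \eqref{eq:opt_n} up to the vanishing term $\tau_n$ (this is precisely why the constraint slack is calibrated to $L_g\epsilon_n$ plus the uniform/Gumbel error terms). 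Hence $f_n^* \le f(\theta_n)$, and since $d(\theta_n,\theta^*)\to 0$ with $f$ continuous, $f(\theta_n)\to f(\theta^*)=\tilde f^*$; taking $\limsup$ gives $\limsup_n f_n^* \le \tilde f^*$. The same construction (applied to any feasible point of \eqref{eq:opt*_sub}) also shows \eqref{eq:opt_n} is feasible for all large $n$, so $f_n^*$ is well defined. The appearance of $\tilde f^*$ rather than $f^*$ here is exactly the cost of assumption~3 only guaranteeing approximants for the restricted set $\tilde\Theta_\infty$.

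For the \textbf{lower bound}, pass to a subsequence $(n_k)$ with $f_{n_k}^* \to \liminf_n f_n^*$ and choose $\theta_k \in \Theta_{n_k}$ feasible for \eqref{eq:opt_n} at index $n_k$ with $f(\theta_k) \le f_{n_k}^* + 1/k$. By compactness of $\Theta_\infty$, a further subsequence (not relabeled) satisfies $\theta_k \to \theta_\infty \in \Theta_\infty$. Continuity of $f$ gives $f(\theta_\infty) = \lim_k f(\theta_k) = \liminf_n f_n^*$. For the constraint, feasibility and $g_n \ge 0$ give $0 \le g_{n_k}(\theta_k) \le L_g\epsilon_{n_k} + \delta_{n_k} \to 0$, so $g_{n_k}(\theta_k)\to 0$, and $|g(\theta_k) - g_{n_k}(\theta_k)| \le \delta_{n_k}\to 0$, hence $g(\theta_k)\to 0$; continuity of $g$ forces $g(\theta_\infty)=0$. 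Thus $\theta_\infty$ is feasible for \eqref{eq:opt*}, so $f^* \le f(\theta_\infty) = \liminf_n f_n^*$. Combining with the upper bound completes the proof.

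The \textbf{main obstacle} is the lower bound: one must extract from a sequence of (approximate) minimizers living in the growing sets $\Theta_n$ a limit that is \emph{exactly} feasible for \eqref{eq:opt*}. All three hypotheses enter together here — compactness of $\Theta_\infty$ for the convergent subsequence, the uniform estimate $\sup|g_n-g|\le \delta_n$ to transfer the $g_n$-constraint to $g$, the nonnegativity $g_n\ge 0$ to pin the limit down to $g=0$ rather than only $g\le 0$, and the fact that the constraint slack vanishes. The weaker-than-Lipschitz condition \eqref{eq:aprox_lip} (adopted to tolerate the non-smooth Gumbel-softmax layer at positive temperature) is what makes the upper-bound construction go through, and the only remaining care needed is bookkeeping the three error scales $\epsilon_n$, $\delta_n$, $\tau_n$.
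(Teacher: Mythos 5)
There is a real gap in your upper-bound step. You apply the approximate-Lipschitz property \eqref{eq:aprox_lip} directly to $g_n$ and obtain $g_n(\theta_n)\le L_g\epsilon_n+\delta_n+\tau_n$. But the constraint in \eqref{eq:opt_n} is $g_n(\theta)\le L_g\epsilon_n+\delta_n$, with no $\tau_n$ slack, and the proposition imposes no relation between $\tau_n$ and the other error scales; so your estimate does \emph{not} establish that $\theta_n$ is feasible for \eqref{eq:opt_n}, and the inequality $f_n^*\le f(\theta_n)$ does not follow. Writing ``within the constraint level up to the vanishing term $\tau_n$'' acknowledges the overshoot but does not close it.

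The fix (and what the paper does) is to exploit the exact Lipschitz continuity of the \emph{limit} $g$: passing $n\to\infty$ in \eqref{eq:aprox_lip}, with $g_n\to g$ pointwise and $\tau_n\to0$, gives $|g(\theta_1)-g(\theta_2)|\le L_g\, d(\theta_1,\theta_2)$ with no additive slack. Then route the estimate through $g$ instead of $g_n$:
\begin{align*}
g_n(\theta_n) &\le |g_n(\theta_n)-g(\theta_n)| + |g(\theta_n)-g(\tilde\theta^*)| + |g(\tilde\theta^*)| \le \delta_n + L_g\epsilon_n + 0,
\end{align*}
which is exactly at the constraint level, making $\theta_n$ feasible. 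With this repair your upper-bound conclusion, and also your remark that \eqref{eq:opt_n} is eventually feasible, go through. Your lower-bound argument is essentially the paper's (direct rather than by contradiction), and the use of approximate minimizers $f(\theta_k)\le f_{n_k}^*+1/k$ is a genuine improvement, since $\Theta_{n_k}$ is not assumed closed so the infimum in \eqref{eq:opt_n} need not be attained, a point the paper's argument glosses over.
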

\begin{proof}[Proof of Proposition \ref{prop:convg_opt}]\label{proof:convg_opt}
  By compactness of $\Theta_\infty$ and $\tilde{\Theta}_\infty $, the minimizers of (\ref{eq:opt*}) and (\ref{eq:opt*_sub}) are achievable insider these two sets.  Let $ \theta^* \in \Theta_\infty, \tilde{\theta}^* \in 
  \tilde{\Theta}_\infty$ be the minimizer of (\ref{eq:opt*}) and (\ref{eq:opt*_sub}). We first prove that $\limsup _{n\rightarrow \infty } f_{n}^{*} \leqslant \tilde{f}^{*}$. Note that by (\ref{eq:aprox_lip}), the limiting function $ g$ is $ L_{g}$-Lipschitz. By condition 3, there exist $ \theta _{n} \in \Theta _{n}$ such that $ d\left( \theta _{n} ,\tilde{\theta} ^{*}\right) \leqslant \epsilon _{n}$. Note that 
\begin{align*}
g_{n}( \theta _{n}) & \leqslant |g( \theta _{n}) -g_{n}( \theta _{n}) |+|g( \theta _{n}) -g\left( \theta ^{*}\right) |+|g\left( \theta ^{*}\right) |\\
 & \leqslant \delta _{n} +L_{g} d\left( \theta _{n} ,\theta ^{*}\right) \leqslant \delta _{n} +L_{g} \epsilon _{n} .
\end{align*}
Therefore, $ \theta _{n}$ is a feasible point of (\ref{eq:opt_n}). We have $ \limsup _{n\rightarrow \infty } f_{n}^{*} \leqslant \limsup _{n\rightarrow \infty } f( {\theta} _{n}) = f\left( \tilde{\theta} ^{*}\right) =\tilde{f}^{*}$.\par 

Next, we argue that $ \liminf _{n\rightarrow \infty } f_{n}^{*} \geqslant f^{*}$. If this equation does not hold, there exists $\epsilon  >0$ and subsequence $\left\{f_{n_{k}}^{*}\right\}_{k\in \mathbb{N}}$ such that $f_{n_{k}}^{*} < f^{*} -\epsilon ,\forall k\in \mathbb{N}$. By compactness of $\Theta _{\infty }$, for each $ k$,  there exists a subsequence of $\theta _{n_{k}}^{*} \in \Theta _{\infty }$ such that $ \theta _{n_{k}}^{*}$ satisfies constraint of (\ref{eq:opt_n}) and $ f_{n_{k}}^{*} =f\left( \theta _{n_{k}}^{*}\right)$. By compactness, $\left\{\theta _{n_{k}}^{*}\right\}_{k\in \mathbb{N}}$ has a converging subseqence. Without loss of generality, we may assume that $\left\{\theta _{n_{k}}^{*}\right\}_{k\in \mathbb{N}}$ converges to $\hat{\theta }^{*} \in \Theta _{\infty }$. Since $g_{k}$ converge uniformly to $g$, 
\begin{equation*}
\lim _{k\rightarrow \infty } g_{k}\left( \theta _{n_{k}}^{*}\right) \leqslant \lim _{k\rightarrow \infty } |g_{k}\left( \theta _{n_{k}}^{*}\right) -g\left( \theta _{n_{k}}^{*}\right) |+g\left( \theta _{n_{k}}^{*}\right) =g\left(\hat{\theta }^{*}\right) \leqslant \lim _{k\rightarrow \infty } \alpha _{k} =0.
\end{equation*}
$\hat{\theta }^{*}$ is a feasible point of \cref{eq:opt*}. Since $f$ is continuous on $\Theta $,
\begin{equation*}
f\left(\hat{\theta }^{*}\right) =\lim _{k\rightarrow \infty } f\left( \theta _{n_{k}}^{*}\right) \leqslant f^{*} -\epsilon .
\end{equation*}
which leads to contradiction.
\end{proof}

\begin{proof}[Proof of \cref{thm:consistency}]\label{proof:consistency}
We begin by defining a  proper metric space. By \cref{assump:Lip_and_bounded}, all random variables are bounded. Suppose that $ \max_{i,j}\{\| V_{i} \| _{\infty} ,\| U_{j} \| _{\infty}\} \leqslant K$. A canonical causal model $ \mathcal{M} \in \mathcal{M}(\mathcal{G} ,\mathcal{F} ,\boldsymbol{U})$ is decided by 
\begin{equation*}
\theta _{\mathcal{M}} =( f_{1} ,\cdots ,f_{n_{V}} ,{P}({U_{1}}) ,\cdots , {P}({U_{n_{U}}})) ,
\end{equation*}
where $ f_{i}$ are functions in the structure equations (\ref{eq:structure_equation}) and $ U_{j}$ are uniform parts of the latent variables. We denote $ \mathcal{M}^{\theta }$ to be the SCM represented by $ \theta $, the underlying SCM be $ \mathcal{M}^{\theta^*}$ and $P^\theta$  to be the distribution of $ \mathcal{M}^\theta$. We consider the space
\begin{align*}
    M= \mathcal{F}_{V_1} &\times \cdots \times \mathcal{F}_{V_{n_V}} \times \mathcal{P}\left([ -K,K]^{d_{1}^{U}}\right) \cdots \times \mathcal{P}\left([ -K,K]^{d_{n_{U}}^{U}}\right) ,
\end{align*}
where 
\begin{equation*}
    \mathcal{F}_{V_i} = \begin{cases}
    \mathcal{F}^K_{L}\left([ -K,K]^{d_{i,\text{in}}^{V}} ,[ -K,K]^{d_{i,\text{out}}^{V}}\right), & V_i \ \text{continuous}, \\
    \{f:\|f\|_1 = 1, f\in\mathcal{F}^K_{L}\left([ -K,K]^{d_{i,\text{in}}^{V}} ,[ -K,K]^{d_{i,\text{out}}^{V}}\right)\}& V_i \ \text{categorical}, 
    \end{cases}
\end{equation*}
$ d_{i,\text{in}}^{V} ,d_{i,\text{out}}^{V}$ are the input and output dimensions of $ f_{i}$, $$\mathcal{F}_L^K = \{f: \|f\|_\infty \leqslant K, f\  \text{Lipschitz continuous}\}$$ and $ \mathcal{P}( K)$ is the probability space on $ K$. For $ \theta  =\left( f_{1} ,\cdots ,f_{n_{V}} ,{P}({U_{1}}) ,\cdots , {P}({U_{n_{U}}})\right) ,\theta' =\left( f_{1}' ,\cdots ,f_{n_{V}}' ,{P}({U_{1}'}) ,\cdots ,{P}({U_{n_{U}}'})\right)$, we define a metric on $ M$
\begin{equation*}
d( \theta  ,\theta') =\sum _{k=1}^{n_{V}} \| f_{k} -f_{k}' \| _{\infty } +\sum _{k=1}^{n_{U}} W(P(U_{k}),{P}({U_{k}'})) .
\end{equation*}
\cref{thm:approximation} states that the Wasserstein distance between two causal models is Lipschitz with respect to metric $d$. Now, we define $ \Theta _{n}$. Let 
\begin{equation*}
\mathcal{P}_{n} =\left\{{P}({\boldsymbol{U}}):\boldsymbol{U} \ \text{is the latent distribution of} \  \hat{\mathcal{M}} \in \text{NCM}_{\mathcal{G}} (\mathcal{F}_{0,n} ,\mathcal{F}_{1,n})\right\}.
\end{equation*}
In other words, $\mathcal{P}_n$ contains all the push-forward measures of the uniform distribution by neural networks. We denote $ \Theta _{n} =\hat{\mathcal{F}}_{V_1,n} \times \cdots \times \hat{\mathcal{F}}_{V_{n_V},n} \times \mathcal{P}_{n}$, where 
\begin{equation*}
    \hat{\mathcal{F}}_{V_i,n} = \begin{cases}
    \mathcal{F}_{0,n}, & V_i \ \text{continuous}, \\
    \{f/\|f\|_1: f\in\mathcal{F}_{0,n}\}& V_i \ \text{categorical}, 
    \end{cases}
\end{equation*}
and $\mathcal{F}_{0,n}$ is defined in \cref{thm:consistency}. Note that by construction, latent variables are independent and $ \mathcal{P}_{n}$ can be decomposed into direct produce $ \mathcal{P}_{n,1} \times \cdots \times \mathcal{P}_{n,n_{U}}$. Let 
\begin{equation*}
\Theta _{\infty } =\overline{\cup _{n=1}^{\infty } \Theta _{n}} , g_{n}( \theta ) =S_{\lambda _{n}}\left( P^{\theta^*}_{n}(\boldsymbol{V}) ,P_{m_{n}}^{\theta }(\boldsymbol{V})\right) , f( \theta ) =\mathbb{E}_{t\sim \mu _{T}}\mathbb{E}_{\mathcal{M}^{\theta }} [F(V_{1} (t),\cdots ,V_{n_{V}} (t))]
\end{equation*}
and $\tilde{\Theta}_\infty = \{\theta^*\} $. Note that $f(\theta)$ is a  continuous function since by \cref{thm:approximation} and the fact that $F$ is Lispchitz continuous,
\begin{align*}
|f( \theta ) -f( \theta') | & \leqslant \mathbb{E}_{t\sim \mu _{T}} |W({P}^{\theta }(\boldsymbol{V}( t)) ,{P}^{\theta'}(\boldsymbol{V}( t)) |\\
 & \leqslant \mathbb{E}_{t\sim \mu _{T}}[ O( d( \theta  ,\theta'))] =O( d( \theta  ,\theta')) .
\end{align*}
Now, we verify the conditions in Proposition \ref{prop:convg_opt}.
\begin{enumerate}
    \item By Arzelà–Ascoli theorem, $ \mathcal{F}^K_{L}\left([ -K,K]^{d_{i,\text{in}}^{V}} ,[ -K,K]^{d_{i,\text{out}}^{V}}\right)$ are precompact set with respect to the infinity norm in space of continuous functions. And thus $\mathcal{F}_{V_i}$ are compact sets. 
    
    Since measures in $ \mathcal{P}\left([ -K,K]^{d_{j}^{U}}\right)$ are tight , $ \mathcal{P}\left([ -K,K]^{d_{j}^{U}}\right)$ are compact with respect to weak topology by Prokhorov's theorem. And the Wasserstein distance metricizes the weak topology. Thus, $ \mathcal{P}\left([ -K,K]^{d_{j}^{U}}\right)$ are compact and the space $ ( M,d)$ is compact space. Closed set $ \Theta _{\infty } \subset M$ is also compact.
    \item By definition of $ g_{n}$, 
\begin{align*}
|g_{n}( \theta ) -g_{n}( \theta') | & =|S_{\lambda _{n}}\left( P^{\theta^*}_{n}(\boldsymbol{V}) ,P_{m_{n}}^{\theta }(\boldsymbol{V})\right) -S_{\lambda _{n}}\left( P^{\theta^*}_{n}(\boldsymbol{V}) ,P_{m_{n}}^{\theta'}(\boldsymbol{V})\right) |\\
 & \leqslant |W\left( P^{\theta^*}_{n}(\boldsymbol{V}) ,P_{m_{n}}^{\theta }(\boldsymbol{V})\right) -W\left( P^{\theta^*}_{n}(\boldsymbol{V}) ,P_{m_{n}}^{\theta'}(\boldsymbol{V})\right) |+ 4(\log( mn) +1) \lambda _{n}\\
 & \leqslant W\left( P_{m_{n}}^{\theta }(\boldsymbol{V}) ,P_{m_{n}}^{\theta'}(\boldsymbol{V})\right) + 4(\log( mn) +1) \lambda _{n}\\
 & \leqslant O( d( \theta  ,\theta')) +4(1+\log( nm_{n})) \lambda _{n} ,
\end{align*}
where we use  Lemma \ref{lemma:sinkhorn_approx} in the second inequality and triangle inequality and \cref{thm:approximation} in the third inequality. By the condition in \cref{thm:consistency}, the second term $ (1+\log( nm_{n})) \lambda _{n} \rightarrow 0 $
 as $n \rightarrow \infty$. Therefore, (\ref{eq:aprox_lip}) is verified with $\tau_n = 4(1+\log( nm_{n})) \lambda _{n} $. 
 
We then verify the uniform convergence of $g_n$. By triangle inequality, 
\begin{align*}
|g_{n}( \theta ) -g( \theta ) | & =|S_{\lambda _{n}}\left( P^{\theta^*}_{n}(\boldsymbol{V}) ,P_{m_{n}}^{\theta }(\boldsymbol{V})\right) -W\left( P^{\theta^*}(\boldsymbol{V}) ,P^{\theta }(\boldsymbol{V})\right) |\\
 & \leqslant |S_{\lambda _{n}}\left( P^{\theta^*}_{n}(\boldsymbol{V}) ,P_{m_{n}}^{\theta }(\boldsymbol{V})\right) -W\left( P^{\theta^*}_{n}(\boldsymbol{V}) ,P_{m_{n}}^{\theta }(\boldsymbol{V})\right) |\\
 & \ \ \ \ \ \ \ \ +|W\left( P^{\theta^*}_{n}(\boldsymbol{V}) ,P_{m_{n}}^{\theta }(\boldsymbol{V})\right) -W\left( P^{\theta^*}(\boldsymbol{V}) ,P^{\theta }(\boldsymbol{V})\right) |.
\end{align*}
By Lemma \ref{lemma:sinkhorn_approx},  $ |S_{\lambda _{n}}\left( P^{\theta^*}_{n}(\boldsymbol{V}) ,P_{m_{n}}^{\theta }(\boldsymbol{V})\right) -W\left( P^{\theta^*}_{n}(\boldsymbol{V}) ,P_{m_{n}}^{\theta }(\boldsymbol{V})\right) |\leqslant  2(\log( mn) +1) \lambda _{n}$ and we get 
\begin{align}
|g_{n}( \theta ) -g( \theta ) | & \leqslant  2(\log( mn) +1) \lambda _{n} +|W\left( P^{\theta^*}_{n}(\boldsymbol{V}) ,P_{m_{n}}^{\theta }(\boldsymbol{V})\right) -W\left( P^{\theta^*}_{n}(\boldsymbol{V}) ,P^{\theta }(\boldsymbol{V})\right) |\notag\\
 & \quad \ \ \ \ +|W\left( P^{\theta^*}_{n}(\boldsymbol{V}) ,P^{\theta }(\boldsymbol{V})\right) -W\left( P^{\theta^*}(\boldsymbol{V}) ,P^{\theta }(\boldsymbol{V})\right) |\notag\\
 & \leqslant  2(\log( mn) +1) \lambda _{n} +W\left( P_{m_{n}}^{\theta }(\boldsymbol{V}) ,P^{\theta }(\boldsymbol{V})\right) +W( P^{\theta^*}_{n}(\boldsymbol{V}) ,P^{\theta^*}(\boldsymbol{V}))\notag\\
 & \leqslant  2(\log( mn) +1) \lambda _{n} +O\left(W( P^{\theta^*}_{n}(\boldsymbol{U}) ,P^{\theta^*}(\boldsymbol{U})) + W\left( P^{\theta }(\boldsymbol{U}) ,P^{\theta}_{m_n}(\boldsymbol{U})\right)\right) ,\label{eq:uniform_app_err}
\end{align}
where we use \cref{thm:approximation} in the last inequality.

We first bound $ \sup _{\theta } W\left( P^{\theta }(\boldsymbol{U}) ,P_{m_{n}}^{\theta }(\boldsymbol{U})\right)$ using standard VC dimension argument. Note that $ \{U_{1} ,\cdots ,U_{n_{U}}\}$ are independent. By \cref{lemma:product_err},
\begin{equation}\label{eq:decomp_u}
    W\left( P^{\theta }(\boldsymbol{U}) ,P_{m_{n}}^{\theta }(\boldsymbol{U})\right) \leqslant \sum _{i=1}^{n_{U}} W\left( P^{\theta }( U_{i}) ,P_{m_{n}}^{\theta }( U_{i})\right).
\end{equation}
By the construction in \cref{thm:wide_nn}, $ P^{\theta }( U_{i}) \sim \sum _{j\in[N_{C,i}]} p_{j}P( f_{\theta _{i,j}}( Z_{i,j})) ,Z_{i,j} \sim \text{Unif}( 0,1)$, where $ f_{\theta _{i,j}}$ are neural networks with constant width and depth $ L_{1,n} + L_{2,n}$, $N_{C,i}$ is the number of connected components of $\text{supp}(P(U_i))$ and $\theta_{i,j} $ are the parameters of the neural networks. By \cref{lemma:combin_err},
\begin{align*}
\sup _{\theta } W\left( P^{\theta }( U_{i}) ,P_{m_{n}}^{\theta }( U_{i})\right) & \leqslant O(\max_{j\in[N_{C,j}]}\sup _{\theta _{i,j}} W(P( f_{\theta _{i,j}}( Z_{i,j})) ,P_{m_{n}}( f_{\theta _{i,j}}( Z_{i,j}))) .
\end{align*}
By \cite{bartlett2019nearly}, the pseudo-dimension of $ \mathcal{NN}(\Theta(1) ,L)$ is $ O\left( L^{2}\log L\right)$. By boundness of all the neural networks and \cref{lemma:sup_wasserstein} \footnote{Note that we truncate the neural network to ensure boundness in \cref{thm:consistency}. This extra truncate operation $\tilde{f} = \max\{-K,\min\{K,f\}\} $ can be viewed as two extra ReLu layers, so \cref{lemma:sup_wasserstein} is still applicable. }, with probability at least $$ 1-\exp\left( O\left( (\epsilon^{'}_{n})^{-d_{\max}^{U}}\log (\epsilon^{'}_{n})^{-1} + (L_{n,1}+L_{n,2})^{2}\log(L_{n,1}+L_{n,2})\log \epsilon _{n}^{-1} -m_{n} \epsilon _{n}^{2}\right)\right) ,$$
the following event happens. 
\begin{align*}
\sup _{\theta _{i,j}} W(P( f_{\theta _{i,j}}( Z_{i,j})) ,P_{m_{n}}( f_{\theta _{i,j}}( Z_{i,j})) & \leqslant \epsilon _{n} +\epsilon^{'}_{n} .
\end{align*}
Let $$ \epsilon^{'}_{n} =m_{n}^{-1/\left( d_{\max}^{U} +2\right)}\log m_{n} ,L_{n,i} =m_{n}^{d_{\max}^{U} /\left( 2d_{\max}^{U} +4\right)}\log m_{n} ,\epsilon _{n} =Cm_{n}^{-1/\left( d_{\max}^{U} +2\right)}\log m_{n}$$ 
with constant $ C >0$ sufficiently large, we get 
\begin{align*}
P\left(\sup _{\theta _{i,j}} W(P( f_{\theta _{i,j}}( Z_{i,j})) ,P_{m_{n}}( f_{\theta _{i,j}}( Z_{i,j}))) \leqslant O\left( m_{n}^{-1/\left( d_{\max}^{U} +2\right)}\log m_{n}\right)\right) & \geqslant 1-O\left( m_{n}^{-2}\right) .
\end{align*}
As long as $ m_{n} = \Omega(n)$, the Borel-Cantelli lemma implies that almost surely, there exists $ N > 0 $ such that when $ n > N$, $ \sup _{\theta _{i,j}} W(P( f_{\theta _{i,j}}( Z_{i,j})) ,P_{m_{n}}( f_{\theta _{i,j}}( Z_{i,j}))) \leqslant O\left( m_{n}^{-1/\left( d_{\max}^{U} +2\right)}\log m_{n}\right)$ for all $ i,j$. Therefore, almost surely, for sufficiently large $ n$,
\begin{align}
\sup _{\theta } W\left( P^{\theta }( U_{i}) ,P_{m_{n}}^{\theta }( U_{i})\right) & \leqslant O\left( m_{n}^{-1/\left( d_{\max}^{U} +2\right)}\log m_{n}\right),\label{eq:uniform_app_err2}
\end{align}

\cite[Theorem 2]{fournier2015rate} shows that for sufficiently large $ n $, if $\delta_n = C_1 n^{-1/\max\{ d^U_{\max},2\}}\log^2(n) $, where $ C_1 >0$ is a constant, 
\begin{align*}
P( W( P^{\theta^*}({U}_i) ,P^{\theta^*}_{n}({U}_i))  >\delta _{n}) & \leqslant \begin{cases}
\exp( -CC_{1}\log^2( n)) & d^U_{\max} =1,\\
\exp\left(\frac{-CC_{1}\log^{4} (n)}{\log^2\left( 2+C_{1}^{-1} n^{1/2}\log^{-2} (n)\right)}\right) & d^U_{\max} = 2,\\
\exp( -CC_{1}\log^{2/d^U_{\max}} (n)) & d^U_{\max} >2,
\end{cases}
\end{align*} 
where $C > 0$ is a constant. Let $ C_{1}$ sufficiently large such that 
\begin{equation*}
P( W( P^{\theta^*}({U}_{i}) ,P^{\theta^*}_{n}({U}_{i}))  >\delta _{n}) \leqslant O\left( n^{-2}\right) .
\end{equation*}
Therefore, $ \sum _{n=1}^{\infty }P( W( P^{\theta^*}({U}_{i}) ,P^{\theta^*}_{n}({U}_{i}))  >\delta _{n}) < \infty .$ By Borel-Cantelli lemma, with probability 1, there exist $ N >0$ such that $ W( P^{\theta^*}({U}_{i}) ,P^{\theta^*}_{n}({U}_{i})) \leqslant \delta _{n} ,\forall n >N$. By \cref{eq:decomp_u}, with probability 1, there exist $ N >0$ such that
\begin{equation}
    W( P^{\theta^*}(\boldsymbol{U}) ,P^{\theta^*}_{n}(\boldsymbol{U})) \leqslant O(\delta _{n}) ,\forall n >N. \label{eq:uniform_app_err3}
\end{equation}

Therefore, by \cref{eq:uniform_app_err,eq:uniform_app_err2,eq:uniform_app_err3}, almost surely, the following inequality holds eventually, 
\begin{equation*}
\sup_{\theta \in \Theta_\infty}|g_{n}( \theta ) -g( \theta ) |\leqslant O\left(\log( nm_{n})\lambda _{n} +\delta _{n} +m_{n}^{-1/\left( d_{\max}^{U} +2\right)}\log m_{n}\right) = s_n,
\end{equation*}
where $s_n$ is defined in \cref{thm:consistency}.

    \item \cite[Proposition 1]{Yarotsky2018OptimalAO} shows that given a $ L$-Lipschitz continuous function $ f:[ 0,1]^{m}\rightarrow \mathbb{R}$, there exist $ \hat{f} \in \mathcal{NN}_{m,1}( W ,\Theta(\log(m))$ such that $ \| f   -\hat{f} \| _{\infty } \leqslant O\left( W^{-1/d}\right)$ and $ \hat{f}$ is $ \sqrt{m} L$-Lipschitz continuous. \footnote{The original proof does not specify the depth of the neural network. The authors show that the network architectures can be chosen as consisting of $ O( W)$ parallel blocks each having the same architecture. Each block is used to realize the function 
\begin{align*}
\phi ( x) & =(\min(\min_{k\neq s}( x_{k} -x_{s} +1) ,\min_{k}( 1+x_{k}) ,\min_{k}( 1-x_{k})))_{+} ,\quad x\in \mathbb{R}^{m} .
\end{align*}This function can be realized by feed-forward network with width $ O\left( m^{2}\right)$ and depth $ O(\log m)$. }
    For a multivariate function with output dimension $m'$ , we can approximate each coordinate individually and get a neural network approximation $\hat{f}$ that is $ \sqrt{mm'} L$-Lipschitz continuous and $ \| f-\hat{f} \| _{\infty } \leqslant O\left( W^{-1/m}\right)$. \par 
    Next, we define the truncate operator $ T_{K} f=\min\{K,\max\{-K,f\}\}$, which is a contraction mapping, and prove that applying the truncate operator will not increase approximation error. If $ |f|\leqslant K$, we have 
\begin{align*}
\| f-T_{K}\hat{f} \| _{\infty } & =\| T_{K} f-T_{K}\hat{f} \| _{\infty } \leqslant \| f-\hat{f} \| _{\infty } \leqslant O\left( W^{-1/m}\right) .
\end{align*}
Therefore, we get that 
\begin{equation}
    \sup_{f\in\mathcal{F}_L^K([-K,K]^m,[-K,K]^{m'})}\inf_{\hat{f}\in\mathcal{NN}_{m,m'}^{\sqrt{mm'}{L},K}\left( W_{0,n} , \Theta\left(\log m\right)\right)} \| f - \hat{f} \|_\infty \leqslant O\left( W_{0,n}^{-1/m}\right).\label{eq:appro_lip}
\end{equation}

\cref{thm:wide_nn} implies that for each latent variable $U_i$, there exist a neural network $g_i$ with architecture in \cref{corollary:deep_nn} such that $$W(P(U_i),P(g_i(Z_i))) \leqslant O(\sum_{i=1}^n L^{-2/d_{\max}}_{i,n} + \tau _{n} (1-\log \tau _{n})).$$
By \cref{assump:Lip_and_bounded}, we know $\|U_i\|_\infty\leqslant K$, which implies 
\begin{align}
    W(P(U_i),P((T_K g_i)(Z_i))) \leqslant W(P(U_i),P(g_i(Z_i)))\leqslant O(\sum_{i=1}^n L^{-2/d^U_{\max}}_{i,n} + \tau _{n} (1-\log \tau _{n})). \label{eq:appro_dis}
\end{align}

Combining \cref{eq:appro_lip,eq:appro_dis} with the same proof as Corollary \ref{corollary:deep_nn}, it can be proven in the same way as \cref{corollary:deep_nn} that there exists a $ \theta _{n} \in \Theta _{n}$ satisfying 
    \begin{align*}
d( \theta^* ,\theta _{n}) & \leqslant \epsilon _{n} = O\left( W_{0,n}^{-1/{d^{\text{in}}_{\max}}} +L_{1,n}^{-2/d^U_{\max}} +L_{2,n}^{-2/d^U_{\max}} +\tau _{n} (1-\log \tau _{n} )\right) .
\end{align*}
Let $\tilde{\Theta}_\infty = \{\theta^*\}$, we have verified the third assumption in Proposition \ref{prop:convg_opt}.
\end{enumerate}
Take the Wasserstein radius to be $\alpha_n = O(s_n + \epsilon_n)$, \cref{prop:convg_opt} implies the conclusion. 
\end{proof}

\subsection{Proof of Proposition \ref{prop:Lip_ATE}}
\begin{lemma}\label{lemma:Lip_condition}
Let $ (\hat{T} ,\hat{Y}) \sim \mu ,( T,Y) \sim \nu $ and suppose that $ f( t) =\mathbb{E}_{\nu }[ Y|T] ,\hat{f}( t) =\mathbb{E}_{\mu }[\hat{Y} |\hat{T}]$ are $ L$-Lipschitz continuous and $|f(t)|\leqslant K, |\hat{f}(t)| \leqslant K$, then we have 
\begin{equation*}
\int ( f( t) -\hat{f}( t))^{2} d\nu ( dt) \leqslant C_{W} W( \mu ,\nu ) ,
\end{equation*}
where $ C_{W} =4LK+2K\max\{L,1\}$.
\end{lemma}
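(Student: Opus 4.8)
The plan is to reduce $\int (f-\hat f)^2\,d\nu=\mathbb{E}_\nu[(f(T)-\hat f(T))^2]$ to $W(\mu,\nu)$ by (i) fixing an optimal coupling $\pi$ of $\mu$ and $\nu$, which exists since the supports are compact and the cost is continuous, so that $\mathbb{E}_\pi[|T-\hat T|]+\mathbb{E}_\pi[|Y-\hat Y|]=W(\mu,\nu)$ for the $1$-norm cost used throughout the paper, and (ii) exploiting that $f$ and $\hat f$ are \emph{conditional expectations}, hence $L^2$-projections onto the $T$- and $\hat T$-sub-$\sigma$-algebras. Write $\phi:=f-\hat f$; it is $2L$-Lipschitz and $\|\phi\|_\infty\le 2K$. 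The projection property gives the orthogonality relations $\mathbb{E}_\nu[h(T)(Y-f(T))]=0$ and $\mathbb{E}_\mu[h(\hat T)(\hat Y-\hat f(\hat T))]=0$ for every bounded measurable $h$. Taking $h=\phi$ in the first, and noting that the two expectations involve only functions of $(T,Y)$, resp. of $T$, whose laws under $\pi$ are the relevant marginals, yields $\mathbb{E}_\nu[\phi(T)^2]=\mathbb{E}_\pi[\phi(T)Y]-\mathbb{E}_\pi[\phi(T)\hat f(T)]$.

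\textbf{Key step.} Next I would insert $\hat Y$ using the second orthogonality relation: since $\mathbb{E}_\pi[\phi(\hat T)\hat Y]=\mathbb{E}_\mu[\phi(\hat T)\hat Y]=\mathbb{E}_\mu[\phi(\hat T)\hat f(\hat T)]=\mathbb{E}_\pi[\phi(\hat T)\hat f(\hat T)]$, adding and subtracting $\mathbb{E}_\pi[\phi(\hat T)\hat Y]$ in the previous identity rearranges it to
\begin{equation*}
\mathbb{E}_\nu[\phi(T)^2]=\underbrace{\mathbb{E}_\pi\!\big[\phi(T)Y-\phi(\hat T)\hat Y\big]}_{(\mathrm{I})}+\underbrace{\mathbb{E}_\pi\!\big[\phi(\hat T)\hat f(\hat T)-\phi(T)\hat f(T)\big]}_{(\mathrm{II})}.
\end{equation*}
For $(\mathrm{I})$ I would split $\phi(T)Y-\phi(\hat T)\hat Y=\phi(T)(Y-\hat Y)+(\phi(T)-\phi(\hat T))\hat Y$ and bound the two pieces by $2K|Y-\hat Y|$ (from $\|\phi\|_\infty\le 2K$) and $2KL|T-\hat T|$ (from $\phi$ being $2L$-Lipschitz and $|\hat Y|\le K$, the latter being part of the standing boundedness hypothesis; alternatively one may keep the orthogonality identity in the form $\mathbb{E}_\pi[\phi(T)f(T)-\phi(\hat T)\hat f(\hat T)]$, which avoids $\hat Y$ altogether at the cost of a slightly larger constant), giving $|(\mathrm{I})|\le 2K\,\mathbb{E}_\pi[|Y-\hat Y|]+2KL\,\mathbb{E}_\pi[|T-\hat T|]$. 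For $(\mathrm{II})$ I would observe it equals $\mathbb{E}_\pi[\psi(\hat T)-\psi(T)]$ with $\psi:=\phi\cdot\hat f$, a $4KL$-Lipschitz function (product of the $2L$-Lipschitz, $2K$-bounded $\phi$ with the $L$-Lipschitz, $K$-bounded $\hat f$), hence $|(\mathrm{II})|\le 4KL\,\mathbb{E}_\pi[|T-\hat T|]$.

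\textbf{Conclusion.} Combining, $\mathbb{E}_\nu[\phi(T)^2]\le 2K\,\mathbb{E}_\pi[|Y-\hat Y|]+6KL\,\mathbb{E}_\pi[|T-\hat T|]$. Regrouping as $\big(2K\,\mathbb{E}_\pi[|Y-\hat Y|]+2KL\,\mathbb{E}_\pi[|T-\hat T|]\big)+4KL\,\mathbb{E}_\pi[|T-\hat T|]$, the first bracket is at most $2K\max\{1,L\}\,(\mathbb{E}_\pi[|Y-\hat Y|]+\mathbb{E}_\pi[|T-\hat T|])=2K\max\{1,L\}\,W(\mu,\nu)$ and the second term is at most $4KL\,W(\mu,\nu)$, so $\mathbb{E}_\nu[\phi(T)^2]\le (4LK+2K\max\{L,1\})\,W(\mu,\nu)=C_W\,W(\mu,\nu)$, as claimed.

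\textbf{Main obstacle.} The difficulty here is less a technical hurdle than a pitfall to avoid: the naive route of writing $|f-\hat f|=\mathrm{sign}(f-\hat f)\cdot(f-\hat f)$ and pushing the $L^1$ bound $\int(f-\hat f)^2\,d\nu\le 2K\int|f-\hat f|\,d\nu$ through the coupling fails, because the sign function is not Lipschitz, so the term $\mathbb{E}_\pi[(\mathrm{sign}\,\phi(T)-\mathrm{sign}\,\phi(\hat T))\hat Y]$ that arises cannot be controlled by $W(\mu,\nu)$. Routing instead through the $L^2$-projection/orthogonality identity is what makes the argument work, since then every quantity transported across the coupling ($\phi$, $\phi\hat f$, and — in the alternative form — $\phi f$) is Lipschitz and bounded; the remaining work is only the elementary bookkeeping that produces the precise constant $C_W$.
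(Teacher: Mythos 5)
Your proof is correct and is the paper's argument recast from the dual (Kantorovich test-function) formulation into the primal (optimal-coupling) formulation: the decomposition into $(\mathrm{I})$ and $(\mathrm{II})$ matches the paper's two transported functions $g_0(t,y)=(\hat{f}(t)-f(t))\,y$ and $h(t)=(\hat{f}(t)-f(t))\,\hat{f}(t)$, with the same Lipschitz constants $L_g=2K\max\{L,1\}$ and $L_h=4LK$, and the same use of the conditional-expectation identity to replace $Y$ by $f(T)$ and $\hat{Y}$ by $\hat{f}(\hat{T})$. The reliance on $|\hat{Y}|\leqslant K$ that you flag is also implicit in the paper's proof (it restricts to $\{(t,y):\|(t,y)\|_\infty\leqslant K\}$ when computing the Lipschitz constant of $g_0$), so there is no discrepancy in hypotheses.
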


\begin{proof}[Proof of Lemma \ref{lemma:Lip_condition}]
    By the duality formulation of Wasserstein-1 distance, we have 
\begin{equation*}
W( \mu ,\nu ) =\sup _{g\in \text{Lip}( 1)}\mathbb{E}_{\mu }[ g(\hat{T} ,\hat{Y})] -\mathbb{E}_{\nu }[ g( T,Y)] .
\end{equation*}
Let $ g_{0}( t,y) =(\hat{f}( t) -f( t)) y$, we verify $ g_{0}$ is a Lipschitz continuous function in $\{(t,y):\|(t,y)\|_\infty \leqslant K\}$.
\begin{align*}
|g_{0}( t_{1} ,y_{1}) -g_{0}( t_{2} ,y_{2}) | & \leqslant |g_{0}( t_{1} ,y_{1}) -g_{0}( t_{1} ,y_{2}) |+|g_{0}( t_{1} ,y_{2}) -g_{0}( t_{2} ,y_{2}) |\\
 & =|(\hat{f}( t_{1}) -f( t_{1}))( y_{1} -y_{2}) |+|y_{2}(\hat{f}( t_{1}) -f( t_{1}) -(\hat{f}( t_{2}) -f( t_{2}))) |\\
 & \leqslant 2K|y_{1} -y_{2} |+K( |\hat{f}( t_{1}) -\hat{f}( t_{2}) |+|f( t_{1}) -f( t_{2}) |)\\
 & \leqslant 2K|y_{1} -y_{2} |+2KL|t_{1} -t_{2} |.
\end{align*}Let $ L_{g} =2K\max\{L,1\}$, we have proven that $ g_{0}$ is $ L_{g}$-Lipschitz continuous in $\{(t,y):\|(t,y)\|_\infty \leqslant K\}$. Thus, 
\begin{align}
W( \mu ,\nu ) & \geqslant \frac{1}{L_{g}}(\mathbb{E}_{\mu }[ g_{0}(\hat{T} ,\hat{Y})] -\mathbb{E}_{\nu }[ g_{0}( T,Y)])  \notag\\
 & =\frac{1}{L_{g}}(\mathbb{E}_{\mu }[(\hat{f}(\hat{T}) -f(\hat{T}))\mathbb{E}[\hat{Y} |\hat{T} =t]] -\mathbb{E}_{\nu }[(\hat{f}( T) -f( T))\mathbb{E}[ Y|T=t]])\notag\\
 & =\frac{1}{L_{g}}(\mathbb{E}_{\mu }[(\hat{f}(\hat{T}) -f(\hat{T}))\hat{f}(\hat{T})] -\mathbb{E}_{\nu }[(\hat{f}( T) -f( T)) f( T)]) .\label{eq:lem13.1}
\end{align}
Now, let $ h( t) =(\hat{f}( t) -f( t))\hat{f}( t)$. Following the same argument, it can be proven that $ h( t)$ is Lipschitz continuous with Lipschitz constant being $ L_{h} =4LK$. Hence,
\begin{align*}
\mathbb{E}_{\mu }[ h(\hat{T})] & \geqslant \mathbb{E}_{\nu }[ h( T)] -L_{h} W( \mu ,\nu ) .
\end{align*}Plug into (\ref{eq:lem13.1}), and we get 
\begin{align*}
( L_{g} +L_{h}) W( \mu ,\nu ) & \geqslant \mathbb{E}_{\nu }[(\hat{f}( T) -f( T))\hat{f}( T) -(\hat{f}( T) -f( T)) f( T)]\\
 & =\mathbb{E}_{\nu }\left[(\hat{f}( T) -f( T))^{2}\right] .
\end{align*}
\end{proof}
\begin{proof}[Proof of Proposition \ref{prop:Lip_ATE}]
    If $ Y$ is not descendant of $ T$, $ \mathbb{E}[ Y( t)] =\mathbb{E}[ Y]$ and we have 
\begin{align*}
    \int _{t_{1}}^{t_{2}}(\mathbb{E}_{\mathcal{M} }[ Y( t)] -\mathbb{E}_{\hat{\mathcal{M}} }[\hat{Y}( t)])^{2} dt &=\int _{t_{1}}^{t_{2}}(\mathbb{E}_{{\mathcal{M}} }[ Y] -\mathbb{E}_{\hat{\mathcal{M}} }[\hat{Y}])^{2} dt\\
    & =( t_{2} -t_{1})(\mathbb{E}_{{\mathcal{M}} }[ Y] -\mathbb{E}_{\hat{\mathcal{M}} }[\hat{Y}])^{2} \leqslant ( t_{2} -t_{1}) W(P^{\mathcal{M}}(\boldsymbol{V}) ,P^{\hat{\mathcal{M}}}(\hat{\boldsymbol{V}}) )^{2}.
\end{align*}
Now, suppose that $Y$ is a descendant of $T$. Let $ X=\text{Pa}( T)$. Note that $ f_{y}( x,t) =\mathbb{E}_{{\mathcal{M}} }[ Y|X=x,T=t]$ is $ L_{y}$-Lipschitz continuous with Lipschitz constant $ L_{y} \leqslant ( L+1)^{n_{V}}$. This is because $Y$ can be written as $ Y=F_{0}( X,T,\boldsymbol{U}_{y})$ where $ \boldsymbol{U}_{y}$ are latent variables independent of $ T,X$ and $ F_{0}$ is composition of $ f_{i}$ in structure equations. The composition of Lipschitz functions is still Lipschitz and $ F_{0}$  is a composition of at most $ n_{V}$ $ L$-Lipschitz functions. $ F_{0}$ is $ ( L+1)^{n_{V}}$-Lipschitz and so is $ f_{y}( x,t) =\mathbb{E}_{\boldsymbol{U}_{y}}[ F_{0}( X,T,\boldsymbol{U}_{y})]$. Recall that $ \nu,\mu $ are the observation distributions of $\mathcal{M}$ and $\hat{\mathcal{M}}$ respectively in \cref{prop:Lip_ATE}.  Similarly, $ \hat{f}_{y}( x,t) =\mathbb{E}_{\hat{\mathcal{M}}}[\hat{Y} |X=x,T=t]$ is $ L_{y}$-Lipschitz continuous.  By Lemma \ref{lemma:Lip_condition} and the overlap assumption, we have 
\begin{align*}
C_{W} W( \mu ,\nu ) & \geqslant \mathbb{E}_{\nu }\left[( f_{y}( X,T) -\hat{f}_{y}( X,T))^{2}\right]\\
 & =\int ( f_{y}( x,t) -\hat{f}_{y}( x,t))^{2} \nu ( dt|x) \nu ( dx)\\
 & \geqslant \delta \int _{t_{1}}^{t_{2}} \int ( f_{y}( x,t) -\hat{f}_{y}( x,t))^{2} \nu ( dx) P(dt).\\
 & \geqslant \delta \int _{t_{1}}^{t_{2}}\left(\int f_{y}( x,t) \nu ( dx) -\int \hat{f}_{y}( x,t) \nu ( dx)\right)^{2} P(dt).
\end{align*}Note that $ \hat{f}_{y}( x,t)$ is Lipschitz continuous, we have 
\begin{equation*}
\Bigl|\int \hat{f}_{y}( x,t) \nu ( dx) -\int \hat{f}_{y}( x,t) \mu ( dx)\Bigl| \leqslant L_{y} W( \mu ,\nu ) .
\end{equation*}Therefore, 
\begin{align*}
\left(\int f_{y}( x,t) \nu ( dx) -\int \hat{f}_{y}( x,t) \nu ( dx)\right)^{2} & \geqslant \frac{1}{2}\left(\int f_{y}( x,t) \nu ( dx) -\int \hat{f}_{y}( x,t) \mu ( dx)\right)^{2}\\
 & \ \ \ \ \ \ \ \ \ \ \ \ -\left(\int \hat{f}_{y}( x,t) \mu ( dx) -\int \hat{f}_{y}( x,t) \nu ( dx)\right)^{2}\\
 & \geqslant \frac{1}{2}\left(\int f_{y}( x,t) \nu ( dx) -\int \hat{f}_{y}( x,t) \mu ( dx)\right)^{2} -L_{y}^{2} W^{2}( \mu ,\nu )\\
 & =\frac{1}{2}(\mathbb{E}_{\mathcal{M} }[ Y( t)] -\mathbb{E}_{\hat{\mathcal{M}} }[\hat{Y}( t)])^{2} -L_{y}^{2} W^{2}( \mu ,\nu ) .
\end{align*}
Combine all the equations, we get 
\begin{align*}
\int _{t_{1}}^{t_{2}}(\mathbb{E}_{\mathcal{M} }[ Y( t)] -\mathbb{E}_{\hat{\mathcal{M}} }[\hat{Y}( t)])^{2} P(dt) & \leqslant \frac{2C_W}{\delta } W( \mu ,\nu ) +2L_{y}^{2} W^{2}( \mu ,\nu )( t_{2} -t_{1}) .
\end{align*}
\end{proof}

\begin{proof}[Proof of \cref{cor:rate}]
    In the proof of \cref{thm:consistency}, we know with probability at least $1 - O(n^{-2})$, (\ref{eq:opt_empiri}) has feasible solutions and 
    $$ W(P_n^{\mathcal{M^*}}(\boldsymbol{V}),P^{\mathcal{M^*}}(\boldsymbol{V})) \leqslant O(\alpha_n),\quad W(P_{m_n}^{\theta_n}(\boldsymbol{V}),P^{\theta_n}(\boldsymbol{V})) \leqslant O(\alpha_n), $$ 
    where notations $\theta$ and $ P^\theta$ are defined in the proof of \cref{thm:consistency} and $\theta_n$ is one of the minimizers of (\ref{eq:opt_empiri}). By \cref{lemma:sinkhorn_approx}, we know that 
    $$ W(P_n^{\mathcal{M^*}}(\boldsymbol{V}),P_{m_n}^{\theta_n}(\boldsymbol{V})) \leqslant  S_{\lambda_n}(P_n^{\mathcal{M^*}}(\boldsymbol{V}),P_{m_n}^{\theta_n}(\boldsymbol{V})) + 2(\log(m_n n)+1)\lambda_n \leqslant O(\alpha_n).$$
    We get that 
    $$ W(P^{\mathcal{M^*}}(\boldsymbol{V}),P^{\theta_n}(\boldsymbol{V}))\leqslant   W(P_n^{\mathcal{M^*}}(\boldsymbol{V}),P^{\mathcal{M^*}}(\boldsymbol{V})) + W(P^{\mathcal{M^*}}(\boldsymbol{V}),P^{\theta_n}(\boldsymbol{V})) + W(P_{m_n}^{\theta_n}(\boldsymbol{V}),P^{\theta_n}(\boldsymbol{V})) \leqslant  O(\alpha_n). $$
    By \cref{prop:Lip_ATE}, we get $ | F_n - F_*|  \leqslant O(\sqrt{\alpha_n})$.
\end{proof}

\section{Experiments }\label{appendix:exp}

    The structure equations of the generative models are (\ref{eq:structure_equation_hat}). We use three-layer feed-forward neural networks with width 128 for each $\hat{f}_i$ and six-layer neural networks with width 128 for each $\hat{g}_j$. We use the Augmented Lagrangian Multiplier (ALM) method to solve the optimization problems as in \cite{balazadehPartialIdentificationTreatment2022}.  We run $600$ epochs and use a batch size of 2048 in each epoch. $m_n$ is set to be $ m_n = n $. The "geomloss" package \cite{feydy2019interpolating} is used to calculate the Sinkhorn distance. To impose Lipschitz regularization, we use the technique from \cite{gouk2021regularisation} to do layer-wise normalization to the weight matrices with respect to infinity norm. The upper bound of the Lipschitz constant in each layer is set to be $8$. The $\tau $ in the Gumbel-softmax layer is set to be $0$. 

For the choice of Wasserstein ball radius $\alpha_n$, we use the subsampling technique from \cite[Section 3.4]{chernozhukov2007estimation}. We can take the criterion function in \cite{chernozhukov2007estimation} to be $Q(\theta) = W(P^{\mathcal{M}^*}(\boldsymbol{V}),P^{\mathcal{M}^\theta}(\boldsymbol{V}))$ and its empirical estimation to be $Q_n(\theta) = W(P_n^{\mathcal{M}^*}(\boldsymbol{V}),P_n^{\mathcal{M}^\theta}(\boldsymbol{V}))$. In the first step, we minimize the Wasserstein distance $\hat{\theta}^*_n = \arg\min_{\theta}W(P^{\mathcal{M}^*}_{n}(\boldsymbol{V}),P^{\mathcal{M}^\theta}_{n}(\boldsymbol{V}))$. Then, \cite{chernozhukov2007estimation} propose to refine the radius $ Q_n(\hat{\theta}^*_n) $ by taking the quantile of subsample $ \{\sup_{\theta:Q_n(\hat{\theta})\leqslant \beta Q_n(\hat{\theta}^*_n)} Q_{j,b}(\theta)\} $, where $Q_{j,b}$ denotes the criterion function estimated at $j$-th subsample of size $b$. However, it is time-consuming to solve this optimization problem many times. In practice, we set the radius $\alpha_n$ to be the $95\%$ quantile of $\{W(\bar{P}_j^{\mathcal{M}^*}\}(\boldsymbol{V}),P^{\mathcal{M}^{\hat{\theta}^*_n}}_{n}(\boldsymbol{V}))$, where we use 50 subsamples $ \bar{P}_j^{\mathcal{M}^*},j=1,\cdots,50 $ from $P^{\mathcal{M}^*_{n}(\boldsymbol{V})}$ with size $b =\lfloor 15 \sqrt{n}\rfloor$.

\subsection{Counterexample} \label{subsec:counter_test}

We test our neural causal method on the counterexample in \cref{subsec:counter}. We choose the noise $U_y$ to be the normal variable. The structure equations are 
\begin{align*}
P( U_{1} =1) =3/5, P(U_{1} =0) =2/5 & ,P( U_{2} =1) =1/3,P( U_{1} =0) =2/3,\\
X & =\begin{cases}
-1 & ,\text{w.p.} \ 1/2,\\
0 & ,\text{w.p.} \ 1/2,
\end{cases}\\
P( T=1|X=x) & \ =\begin{cases}
1/2 & ,x=-1,\\
5/8 & ,x=0,
\end{cases}\\
Y & \ =\begin{cases}
T+U_{y} & ,X=-1,\\
U_{1} +U_{y} & ,X=0,T=1,\\
U_{2} +U_{y} & ,X=0,T=0,
\end{cases}
\end{align*}
We compare the confidence intervals of the unregularized neural casual method and the Lipschitz regularized one under different architectures. We choose the layerwise Lispchitz constants upper bound to be $5$ and $1.2$. As a benchmark, we also include the bound obtained by the Double Machine Learning estimator using the EconML package. The result is shown in \cref{fig:counterexample}. For this identifiable case example, the double ML estimator produces better bounds for the ATE. The intervals given by regularized and unregularized NCM are similar to the regularized one slightly better in the left figure, where we use medium-sized NNs. However, in the right figure, the obtained intervals after regularization are tighter, although slightly biased, compared with the regularized setting. From these two experiments, we conclude that the architecture of NNs will also influence the results and adding regularization during the training process can prevent extreme confidence intervals or inconsistency. 

\begin{figure}[!htb]
    \centering
    \begin{minipage}{0.45\textwidth}
    \includegraphics[width=\linewidth]{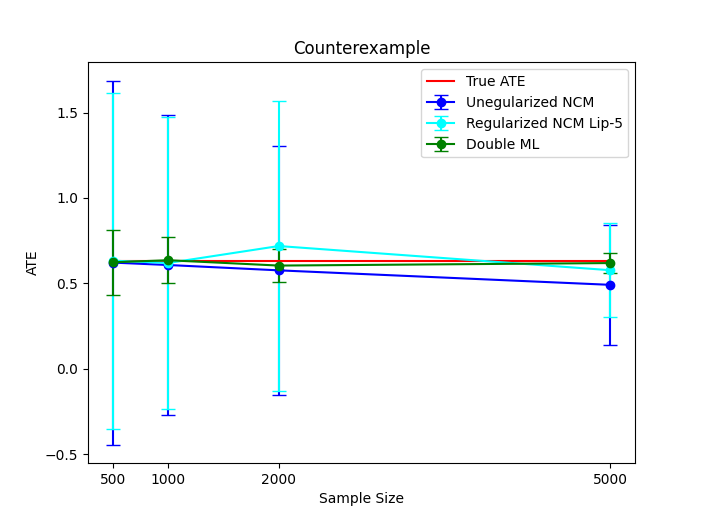}
    \end{minipage}
    \begin{minipage}{0.45\textwidth}
    \includegraphics[width=\linewidth]{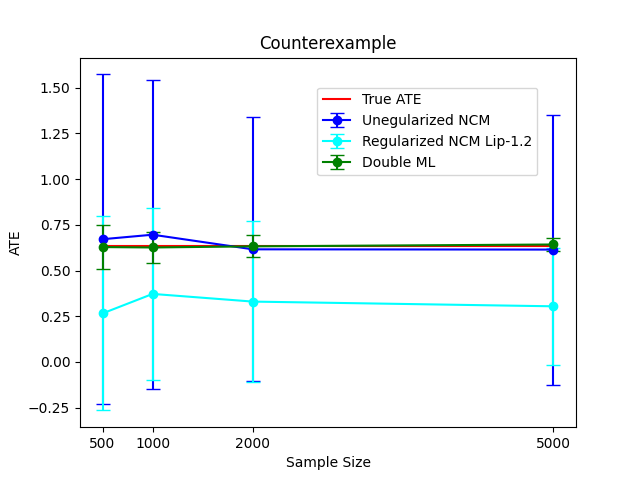}
    \end{minipage}
    \label{fig:counterexample}
    \caption{Comparison of Lipschitz regularized and unregularized neural causal algorithm. The two figures show the results of different architectures. The figure on the left side uses a medium-sized NN (width 128, depth 3) to approximate each structural function, while the right figure uses extremely small NNs (width 3, depth 1). In all experiments, we use the projected gradient to regularize the weight of the neural network. For each sample size, we repeat the experiment 5 times and take the average of the upper (lower) bound. }
\end{figure}

\section{Technical Lemmas}
\begin{lemma}\label{lemma:push-forward_err}
    Let $ \mu ,\hat{\mu }$ be two measures on compact set $ K\subset \mathbb{R}^{d_{1}}$, for any measurable functions $ F,\hat{F} :K\rightarrow \mathbb{R}^{d_{2}}$, if $ F$ is $ L$-Lipschitz continuous, then \begin{equation*}
    W( F_{\#} \mu ,\hat{F}_{\#}\hat{\mu }) \leqslant LW( \mu ,\hat{\mu }) +\| F_{1} -F_{2} \| _{\infty } .
    \end{equation*}
    \end{lemma}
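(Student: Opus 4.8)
The plan is to prove the bound by constructing an explicit coupling of $F_{\#}\mu$ and $\hat{F}_{\#}\hat{\mu}$ from an optimal (or near-optimal) coupling of $\mu$ and $\hat{\mu}$, then estimating the transport cost by splitting it into a ``push-forward mismatch'' term and a ``function mismatch'' term. First I would note that the statement as written contains a harmless typo: the term $\|F_1 - F_2\|_\infty$ should read $\|F - \hat{F}\|_\infty$, where the sup norm is over $K$; I will prove it in that form. Let $\pi$ be a coupling of $\mu$ and $\hat{\mu}$ on $K\times K$ that is optimal for $W(\mu,\hat{\mu})$ (it exists since $K$ is compact, hence the Wasserstein-$1$ problem admits a minimizer); if one is uneasy about existence, take $\pi$ to be $\varepsilon$-optimal and let $\varepsilon\to 0$ at the end.

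Next I would push $\pi$ forward through the map $(x,y)\mapsto (F(x),\hat{F}(y))$ to obtain a measure $\gamma$ on $\mathbb{R}^{d_2}\times\mathbb{R}^{d_2}$. Its marginals are $F_{\#}\mu$ and $\hat{F}_{\#}\hat{\mu}$ respectively, so $\gamma$ is a valid coupling and $W(F_{\#}\mu,\hat{F}_{\#}\hat{\mu}) \leqslant \int \|a-b\|\, d\gamma(a,b) = \int_{K\times K} \|F(x)-\hat{F}(y)\|\, d\pi(x,y)$. Then I would apply the triangle inequality pointwise inside the integral: $\|F(x)-\hat{F}(y)\| \leqslant \|F(x)-F(y)\| + \|F(y)-\hat{F}(y)\| \leqslant L\|x-y\| + \|F-\hat{F}\|_\infty$. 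Integrating against $\pi$ gives $\int \|F(x)-\hat{F}(y)\|\,d\pi \leqslant L\int \|x-y\|\,d\pi + \|F-\hat{F}\|_\infty = L\,W(\mu,\hat{\mu}) + \|F-\hat{F}\|_\infty$, which is exactly the claimed inequality. (Here I am using that $\pi$ is a probability measure so the constant term integrates to itself, and that $\int\|x-y\|\,d\pi = W(\mu,\hat{\mu})$ by optimality of $\pi$.)

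There is essentially no hard step here; the only points requiring a small amount of care are measurability of $F$ and $\hat{F}$ (given, so $\gamma$ is well-defined as a push-forward), and the verification that $\gamma$ really has the stated marginals, which is immediate from the definition of push-forward and the marginal property of $\pi$. The one genuine subtlety, if we want to be fully rigorous, is whether the Wasserstein-$1$ problem over the compact set $K$ attains its infimum; this is standard (the set of couplings is weakly compact by Prokhorov and the cost $\|x-y\|$ is continuous and bounded on $K\times K$), but if one prefers to avoid invoking it, the $\varepsilon$-optimal coupling argument followed by $\varepsilon\to 0$ works identically and incurs only a vanishing additive error. I would present the clean version with an optimal $\pi$ and add a one-line remark about the $\varepsilon$-approximate alternative.
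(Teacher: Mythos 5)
Your proof is correct, but it takes a genuinely different route from the paper's. The paper works in the Kantorovich--Rubinstein dual: it writes $W(F_\#\mu,\hat F_\#\hat\mu)$ as a supremum over $1$-Lipschitz test functions $g$ and splits $\mathbb{E}_\mu[g\circ F]-\mathbb{E}_{\hat\mu}[g\circ\hat F]$ into $(\mathbb{E}_\mu[g\circ F]-\mathbb{E}_{\hat\mu}[g\circ F])+(\mathbb{E}_{\hat\mu}[g\circ F]-\mathbb{E}_{\hat\mu}[g\circ\hat F])$, bounding the first by $LW(\mu,\hat\mu)$ via Lipschitzness of $g\circ F$ and the second by $\|F-\hat F\|_\infty$. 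You instead argue in the primal: push a (near-)optimal coupling $\pi$ of $(\mu,\hat\mu)$ forward through $(x,y)\mapsto(F(x),\hat F(y))$ to get an admissible coupling of $(F_\#\mu,\hat F_\#\hat\mu)$, then apply the triangle inequality pointwise, $\|F(x)-\hat F(y)\|\leqslant L\|x-y\|+\|F-\hat F\|_\infty$, and integrate. The two arguments are mirror images of the same two-term decomposition; the paper's dual version sidesteps any question of attainment of the optimal plan (the supremum need not be attained for the estimate to go through termwise), while your primal version is more constructive, exhibiting an explicit transport plan, at the minor cost of the $\varepsilon$-approximation remark you already supply. Both are equally short and elementary, and both require the same implicit observation that $\hat F$ is bounded (which follows from $F$ Lipschitz on compact $K$ together with $\|F-\hat F\|_\infty<\infty$). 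You also correctly flag the typo in the statement: $\|F_1-F_2\|_\infty$ should read $\|F-\hat F\|_\infty$.
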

    \begin{proof}
    For any $ 1$-Lipschitz function $ g:\mathbb{R}^{d_{2}}\rightarrow \mathbb{R}$,    
    \begin{align*}
    \mathbb{E}_{X\sim F_{\#} \mu }[ g( X)] -\mathbb{E}_{X\sim \hat{F}_{\#}\hat{\mu }}[ g( X)] & =\mathbb{E}_{X\sim \mu }[ g\circ F( X)] -\mathbb{E}_{X\sim \hat{\mu }}[ g\circ \hat{F}( X)]\\
     & =(\mathbb{E}_{X\sim \mu }[ g\circ F( X)] -\mathbb{E}_{X\sim \hat{\mu }}[ g\circ F( X)]) \\
     &\quad +(\mathbb{E}_{X\sim \hat{\mu }}[ g\circ F( X)] -\mathbb{E}_{X\sim \hat{\mu }}[ g\circ \hat{F}( X)]) .
    \end{align*}
    Note that for any $ x,y\in \mathbb{R}^{d_{1}}$,     
    \begin{align*}
    |g\circ F( x) -g\circ F( y) | & \leqslant \| F( x) -F( y) \| \leqslant L\| x-y\| .
    \end{align*}Thus, $ g\circ F$ is $ L$-Lipschitz continuous and\begin{align}
    \mathbb{E}_{X\sim \mu }[ g\circ F( X)] -\mathbb{E}_{X\sim \hat{\mu }}[ g\circ F( X)] & \leqslant LW( \mu ,\hat{\mu }) .\label{eq:first_term}
    \end{align}For the second term,
    \begin{align}
    \mathbb{E}_{X\sim \hat{\mu }}[ g\circ F( X)] -\mathbb{E}_{X\sim \hat{\mu }}[ g\circ \hat{F}( X)] & \leqslant \mathbb{E}_{X\sim \hat{\mu }}[ |g\circ F( X) -g\circ \hat{F}( X) |]\notag\\
     & \leqslant \mathbb{E}_{X\sim \hat{\mu }}[ \| F( X) -\hat{F}( X) \| ]\notag\\
     & \leqslant \| F( X) -\hat{F}( X) \| _{\infty }. \label{eq:second_term}
    \end{align}
    Combine (\ref{eq:first_term}) and (\ref{eq:second_term}), we have the conclusion.
    \end{proof}

    \begin{lemma}\label{lemma:combin_err}
    Let $ \mu _{1} ,\cdots ,\mu _{n} ,\hat{\mu }_{1} ,\cdots ,\hat{\mu }_{n}$ be measure on $ \mathbb{R}^{d}$, for any $ p_{1} ,\cdots ,p_{n} \in [ 0,1]$ such that $ \sum _{k=1}^{n} p_{k} =1$, we have \begin{equation*}
    W\left(\sum _{k=1}^{n} p_{k} \mu _{k} ,\sum _{k=1}^{n} p_{k}\hat{\mu }_{k}\right) \leqslant \sum _{k=1}^{n} p_{k} W( \mu _{k} ,\hat{\mu }_{k}) .
    \end{equation*}
    \end{lemma}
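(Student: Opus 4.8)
The plan is to prove \cref{lemma:combin_err} by exhibiting an explicit coupling of $\sum_k p_k \mu_k$ and $\sum_k p_k \hat\mu_k$ built from the optimal couplings of the individual pairs $(\mu_k,\hat\mu_k)$, and then bounding the transport cost. Concretely, for each $k$ let $\pi_k$ be an optimal coupling of $\mu_k$ and $\hat\mu_k$, so that $\int \|x-y\|\,\d\pi_k(x,y) = W(\mu_k,\hat\mu_k)$ (such an optimal coupling exists since the cost is continuous and the measures can be assumed to have finite first moment). Define $\pi = \sum_{k=1}^n p_k \pi_k$, which is a nonnegative measure of total mass $\sum_k p_k = 1$, hence a probability measure on $\mathbb{R}^d \times \mathbb{R}^d$.

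The first step is to check that $\pi$ is a valid coupling of $\sum_k p_k \mu_k$ and $\sum_k p_k \hat\mu_k$: its first marginal is $\sum_k p_k (\mu_k) = \sum_k p_k \mu_k$ because marginalization is linear, and similarly the second marginal is $\sum_k p_k \hat\mu_k$. The second step is to use this coupling in the Kantorovich (primal) formulation of $W$, which states $W(\nu_1,\nu_2) = \inf_{\pi \in \Pi(\nu_1,\nu_2)} \int \|x-y\|\,\d\pi$. Plugging in our specific $\pi$ gives
\begin{align*}
W\left(\sum_{k=1}^n p_k \mu_k, \sum_{k=1}^n p_k \hat\mu_k\right)
&\leqslant \int \|x-y\|\,\d\pi(x,y) \\
&= \sum_{k=1}^n p_k \int \|x-y\|\,\d\pi_k(x,y) = \sum_{k=1}^n p_k W(\mu_k,\hat\mu_k),
\end{align*}
which is exactly the claimed bound.

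There is essentially no hard obstacle here; the only point that requires a word of care is the existence of optimal couplings $\pi_k$ and the interchange of the (finite) sum with the integral, both of which are standard: in the paper's setting all measures are supported on compact sets, so finiteness of moments and existence of minimizers in the Kantorovich problem are automatic, and Tonelli's theorem (nonnegative integrand) justifies the linearity step. If one wished to avoid invoking existence of an exact minimizer, one could instead take $\pi_k$ to be an $\varepsilon/n$-optimal coupling for arbitrary $\varepsilon>0$, form $\pi$ as above, obtain the bound with an extra additive $\varepsilon$, and let $\varepsilon \to 0$. I would present the clean version with exact optimal couplings and merely remark on the approximation argument as a fallback.
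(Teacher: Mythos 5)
Your proof is correct, but it takes the primal (Kantorovich) route whereas the paper argues via the dual. You build a coupling $\pi = \sum_k p_k \pi_k$ from the individual optimal couplings and compute its cost, which directly upper-bounds $W$; the paper instead fixes an arbitrary $1$-Lipschitz test function $f$, writes $\mathbb{E}_{\sum p_k\mu_k}[f]-\mathbb{E}_{\sum p_k\hat\mu_k}[f]=\sum_k p_k\bigl(\mathbb{E}_{\mu_k}[f]-\mathbb{E}_{\hat\mu_k}[f]\bigr)\leqslant \sum_k p_k W(\mu_k,\hat\mu_k)$, and takes a supremum over $f$ using Kantorovich--Rubinstein duality. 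The two arguments are dual to each other and about equally short; yours is more constructive (you exhibit the transport plan explicitly) at the modest cost of having to invoke existence of optimal couplings, while the paper's avoids that existence question entirely but relies on the duality theorem for $W_1$. Your fallback remark about using $\varepsilon$-optimal couplings is exactly the right way to sidestep the existence issue if one wishes to keep the argument self-contained, and the compactness assumption in the paper makes the point moot anyway.
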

    \begin{proof}
    For any $ 1$-Lipschitz function $ f:\mathbb{R}^{d}\rightarrow \mathbb{R}$, we have 
    \begin{align*}
    \mathbb{E}_{X\sim \sum _{k=1}^{n} p_{k} \mu _{k}}[ f( X)] -\mathbb{E}_{X\sim \sum _{k=1}^{n} p_{k}\hat{\mu }_{k}}[ f( X)] & =\sum _{k=1}^{n} p_{k}(\mathbb{E}_{X_{k} \sim \mu _{k}}[ f( X_{k})] -\mathbb{E}_{X_{k} \sim \hat{\mu }_{k}}[ f( X_{k})])\\
     & \leqslant \sum _{k=1}^{n} p_{k} W( \mu _{k} ,\hat{\mu }_{k}) .
    \end{align*}
    By dual formulation of Wasserstein-1 distance, 
    \begin{align*}
    W\left(\sum _{k=1}^{n} p_{k} \mu _{k} ,\sum _{k=1}^{n} p_{k} \mu _{k}\right) & =\sup _{f\ \text{1-Lipschitz}}\mathbb{E}_{X\sim \sum _{k=1}^{n} p_{k} \mu _{k}}[ f( X)] -\mathbb{E}_{X\sim \sum _{k=1}^{n} p_{k}\hat{\mu }_{k}}[ f( X)]\\
     & \leqslant \sum _{k=1}^{n} p_{k} W( \mu _{k} ,\hat{\mu }_{k}) .
    \end{align*}
    \end{proof}

    \begin{lemma}\label{lemma:product_err}
      Given two measures $ \mu =\mu _{1} \otimes \cdots \otimes \mu _{n} ,\hat{\mu } =\hat{\mu }_{1} \otimes \cdots \otimes \hat{\mu }_{n}$, we have 
      \begin{equation*}
      W( \mu ,\hat{\mu }) \leqslant \sum _{k=1}^{n} W( \mu _{k} ,\hat{\mu }_{k}) .
      \end{equation*}
      \end{lemma}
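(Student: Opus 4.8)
The plan is to prove the tensorization (subadditivity) of the Wasserstein-1 distance under independent products by a direct coupling argument, combined with the triangle inequality. First I would recall the definition: $W(\mu,\hat\mu)$ is the infimum over couplings $\pi$ of $\mathbb{E}_{(X,Y)\sim\pi}\|X-Y\|$, and since we are on $\mathbb{R}^d$ with the $1$-norm, $\|x-y\| = \sum_k \|x_k - y_k\|$ when we split coordinates into the blocks corresponding to $\mu_1,\dots,\mu_n$. The key construction is to build a coupling of $\mu$ and $\hat\mu$ as the independent product of near-optimal couplings of each pair $(\mu_k,\hat\mu_k)$.

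Concretely, for each $k$ and each $\varepsilon>0$, pick a coupling $\pi_k$ of $(\mu_k,\hat\mu_k)$ with $\mathbb{E}_{\pi_k}\|X_k-Y_k\| \le W(\mu_k,\hat\mu_k)+\varepsilon/n$ (existence of near-optimizers; optimizers actually exist by standard lower-semicontinuity/compactness on the relevant spaces, but near-optimizers suffice and avoid the need for that). Let $\pi = \pi_1\otimes\cdots\otimes\pi_n$. Then the first marginal of $\pi$ is $\mu_1\otimes\cdots\otimes\mu_n = \mu$ and the second is $\hat\mu$, so $\pi$ is a valid coupling. Computing the cost under the $1$-norm on $\mathbb{R}^d$,
\begin{equation*}
W(\mu,\hat\mu) \le \mathbb{E}_{\pi}\|X-Y\| = \mathbb{E}_\pi \sum_{k=1}^n \|X_k-Y_k\| = \sum_{k=1}^n \mathbb{E}_{\pi_k}\|X_k-Y_k\| \le \sum_{k=1}^n W(\mu_k,\hat\mu_k) + \varepsilon.
\end{equation*}
Letting $\varepsilon\to 0$ gives the claim. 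An equivalent and perhaps cleaner route, if one wants to avoid the $\varepsilon$ bookkeeping, is to use the Kantorovich--Rubinstein dual form and a hybrid/telescoping argument: write $W(\mu,\hat\mu) \le \sum_{k=1}^n W(\nu_{k-1},\nu_k)$ where $\nu_k = \hat\mu_1\otimes\cdots\otimes\hat\mu_k\otimes\mu_{k+1}\otimes\cdots\otimes\mu_n$ interpolates between $\mu$ and $\hat\mu$ one coordinate at a time, and then bound each consecutive term by $W(\mu_k,\hat\mu_k)$ using that a $1$-Lipschitz test function restricted to varying only the $k$-th block, with the other coordinates integrated out, remains $1$-Lipschitz in that block. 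This second route leverages the dual formulation already used elsewhere in the paper (e.g.\ in the proof of Theorem~\ref{thm:approximation}).

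There is no real obstacle here; the only point requiring minor care is the compatibility of the norm used to define $W$ on the product space with the per-coordinate norms — the statement implicitly uses the $1$-norm (consistent with the paper's ``Notations'' paragraph, which fixes $\|\cdot\|$ as the $1$-norm), so that $\|x-y\|$ decomposes additively across the blocks. I would state this normalization explicitly at the start of the proof and then the coupling computation goes through verbatim. The induction on $n$ reduces everything to the case $n=2$, which is the telescoping step above, so writing it as a two-line induction after establishing $n=2$ is the most economical presentation.
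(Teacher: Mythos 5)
Your primary argument — building a coupling of $\mu$ and $\hat\mu$ as the independent product of near-optimal couplings $\pi_k$ of each $(\mu_k,\hat\mu_k)$, then using additivity of the $1$-norm across coordinate blocks and letting $\varepsilon\to 0$ — is exactly the proof given in the paper, down to the choice of $\varepsilon$-suboptimal couplings. Your explicit remark that the product-space norm must decompose additively (which the paper's fixed $1$-norm does) is a welcome clarification but not a different argument, and the telescoping/dual alternative you sketch is offered only as an aside, so no genuine divergence in approach exists here.
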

      \begin{proof}
      For any $ \epsilon  >0$, let $ \pi _{k}$ be a coupling of $ \mu _{k} ,\hat{\mu }_{k}$ such that     
      \begin{equation*}
      \mathbb{E}_{X_{k} ,Y_{k} \sim \pi _{k}}[ \| X_{k} -Y_{k} \| _{1}] \leqslant W( \mu _{k} ,\hat{\mu }_{k}) +\epsilon .
      \end{equation*}Then, 
      \begin{align*}
      W( \mu ,\hat{\mu }) & \leqslant \mathbb{E}_{\pi _{1} \otimes \cdots \otimes \pi _{n}}[ \| X-Y\| ]\\
       & =\sum _{k=1}^{n}\mathbb{E}_{X_{k} ,Y_{k} \sim \pi _{k}}[ \| X_{k} -Y_{k} \| _{1}]\\
       & \leqslant W( \mu _{k} ,\hat{\mu }_{k}) +\epsilon .
      \end{align*}Let $ \epsilon \rightarrow 0$, we get the result.
      \end{proof}

    \begin{lemma}[Approximation error of Gumbel-Softmax layer]\label{lemma:gumbel_err}
        Given $1 >\tau  >0$ and $p_{1} ,\cdots ,p_{n}  >0$, let $\mu ^{\tau } \sim X^{\tau } =\left( X_{1}^{\tau } ,\cdots ,X_{n}^{\tau }\right)$ be 
\begin{equation}
X_{k} =\frac{\exp((\log p_{k} +G_{k}) /\tau )}{\sum _{k=1}^{n}\exp((\log p_{k} +G_{k}) /\tau )} ,\label{eq:gumbel}
\end{equation}
where $G_{k} \sim \exp( -x-\exp( -x))$ are i.i.d. standard Gumbel variables. Let $\mu ^{0}$ be the distribution of $X^{0} =\lim _{\tau \rightarrow 0} X^{\tau } =\left( X_{1}^{0} ,\cdots ,X_{n}^{0}\right)$. Then,  
\begin{equation*}
W\left( \mu ^{\tau } ,\mu ^{0}\right) \leqslant 2( n-1) \tau -2n( n-1) e^{-1} \tau \log \tau .
\end{equation*}
    \end{lemma}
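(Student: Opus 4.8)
\textbf{Proof plan for Lemma~\ref{lemma:gumbel_err}.} The plan is to bound $W(\mu^\tau,\mu^0)$ by exhibiting an explicit coupling of $X^\tau$ and $X^0$ — namely the one obtained by using the \emph{same} Gumbel draws $G_1,\dots,G_n$ in both $X^\tau$ and $X^0$ — and then estimating $\mathbb{E}\,\|X^\tau - X^0\|_1$ under this coupling. Since $W$ is the Wasserstein-1 distance, this immediately gives $W(\mu^\tau,\mu^0)\le \mathbb{E}\,\|X^\tau-X^0\|_1$, so the whole problem reduces to an expectation estimate. First I would fix an ordering of the coordinates according to the values of $\log p_k + G_k$; let $k^\star=\arg\max_k\{\log p_k+G_k\}$ be the (almost surely unique) winning index, so that $X^0$ is the one-hot vector $e_{k^\star}$. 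On the event that coordinate $k^\star$ wins, write $\Delta = (\log p_{k^\star}+G_{k^\star}) - \max_{k\ne k^\star}(\log p_k + G_k) > 0$ for the gap. Then $X^\tau_{k^\star} = (1+\sum_{k\ne k^\star}\exp(-(\,\cdot\,)/\tau))^{-1}$ with each term of the sum at most $e^{-\Delta/\tau}$, so $1 - X^\tau_{k^\star}\le (n-1)e^{-\Delta/\tau}$ and $\sum_{k\ne k^\star}X^\tau_k \le (n-1)e^{-\Delta/\tau}$ as well; hence $\|X^\tau - X^0\|_1 \le 2(n-1)e^{-\Delta/\tau}$ pointwise (using also that all components lie in $[0,1]$ so the trivial bound $\|X^\tau-X^0\|_1\le 2$ always holds).

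The next step is to take expectations, which requires understanding the distribution of the gap $\Delta$. This is where the structure of the Gumbel distribution is used: conditionally on which index wins, the gap to the runner-up has a clean law, or more simply one can bound $\mathbb{P}(\Delta \le t)$ directly. I would argue that for each pair $k\ne k^\star$, $\mathbb{P}(0 < (\log p_{k^\star}+G_{k^\star}) - (\log p_k+G_k) \le t \mid k^\star \text{ wins})$ is controlled by the density of the difference of two Gumbels (a logistic-type density), which is bounded by a constant near $0$; a union bound over the $n-1$ runner-up candidates then gives $\mathbb{P}(\Delta \le t) \le C(n-1)\,t$ for small $t$, with $C$ an absolute constant (in fact one can take $C=1$ after the right normalization). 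Then
\begin{align*}
\mathbb{E}\,\|X^\tau - X^0\|_1 &\le 2(n-1)\,\mathbb{E}\big[\min\{1,\,(n-1)e^{-\Delta/\tau}\}\big]\\
&\le 2(n-1)\Big(\mathbb{P}(\Delta \le \tau\log\tfrac{1}{\tau}) + (n-1)\,\mathbb{E}\big[e^{-\Delta/\tau}\mathbf{1}\{\Delta > \tau\log\tfrac1\tau\}\big]\Big),
\end{align*}
and splitting the integral $\int e^{-s/\tau}\,d\mathbb{P}(\Delta\le s)$ at the threshold $\tau\log(1/\tau)$ — integrating by parts and using the linear bound on the CDF near zero — produces terms of order $\tau$ and $\tau\log\tfrac1\tau$. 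Carefully tracking the combinatorial factors $n-1$ and the constant $e^{-1}$ coming from the specific threshold choice should land exactly on the claimed bound $2(n-1)\tau - 2n(n-1)e^{-1}\tau\log\tau$.

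The main obstacle I anticipate is twofold. First, getting the \emph{sharp} constants — the $2(n-1)$ prefactor and especially the $2n(n-1)e^{-1}$ on the $\tau\log\tau$ term — will require choosing the split threshold and the tail estimates with some care, rather than just getting the right order $O(\tau - \tau\log\tau)$; this is really a bookkeeping exercise but an error-prone one. Second, I need the precise small-$t$ bound on $\mathbb{P}(\Delta \le t)$; the cleanest route is to note that the $n$ random variables $\log p_k + G_k$ are independent Gumbels with shifted locations, the maximum is again Gumbel-distributed, and conditionally on the arg-max the vector of gaps has an explicit exponential-type structure (this is the classical Gumbel–max identity underlying Gumbel-Softmax). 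Using that identity one can compute $\mathbb{P}(\Delta\le t)$ essentially exactly, or at least bound it by a linear function of $t$ uniformly in the $p_k$'s, and that is the key probabilistic input. Everything else — the coupling, the pointwise $\ell_1$ bound, and the final integration — is routine.
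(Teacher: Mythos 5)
Your coupling is exactly the one the paper uses: send $X^\tau$ to the one-hot vector $T(X^\tau)=e_{k^\star}$ where $k^\star=\arg\max_k\{\log p_k+G_k\}$, and bound $W(\mu^\tau,\mu^0)\le\mathbb{E}\|T(X^\tau)-X^\tau\|_1$. Your pointwise estimate $\|X^\tau-X^0\|_1\le 2(n-1)e^{-\Delta/\tau}$ is also the same as the paper's, and the split at $\delta=\tau\log(1/\tau)$ is exactly the paper's choice. So the route is essentially identical.

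Two remarks. First, there is a concrete slip in your expectation bound: you write $\mathbb{E}\|X^\tau-X^0\|_1\le 2(n-1)\,\mathbb{E}[\min\{1,(n-1)e^{-\Delta/\tau}\}]$, but the trivial ceiling is $\|X^\tau-X^0\|_1\le 2$, not $2(n-1)$; the correct display is $\mathbb{E}\|X^\tau-X^0\|_1\le\mathbb{E}[\min\{2,\,2(n-1)e^{-\Delta/\tau}\}]$. As written your prefactor $2(n-1)$ would inflate the small-gap contribution by an extra $(n-1)$ and miss the stated constant. Second, you are overcomplicating the tail: no integration by parts is needed. On the good event $\{\Delta>\delta\}$ the pointwise bound is already the constant $2(n-1)e^{-\delta/\tau}$ (monotonicity in $\Delta$), so the whole estimate is just $2\,\mathbb{P}(\text{bad})+2(n-1)e^{-\delta/\tau}$; plugging in $\delta=-\tau\log\tau$ finishes. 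For the bad-event probability, the paper does not attempt your conditional-on-winner refinement (which is delicate, since conditioning on $k^\star$ winning distorts the conditional law of $L_j$ and the resulting hazard rate is not uniformly bounded); it instead contains $\{\Delta\le\delta\}$ in the larger event $E_\delta=\{\exists\,i\neq j:|L_i-L_j|<\delta\}$ and union-bounds over all $n(n-1)$ ordered pairs using the crude Gumbel density bound $\exp(-x-e^{-x})\le e^{-1}$, giving $\mathbb{P}(E_\delta)\le \delta\,n(n-1)e^{-1}$. That cruder bound is what produces the $n(n-1)$ in the stated constant; your tighter $\Delta$-based event is a subset of $E_\delta$, so your argument still proves the lemma once you repair the prefactor, but it will not match the displayed constant exactly without reverting to the same $n(n-1)$ union bound.
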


    \begin{proof}
    Without loss of generality, we may assume that $\sum _{k=1}^{n} p_{k} =1$. Otherwise, we can divide the denominator and numerator in (\ref{eq:gumbel}) by $\sum _{k=1}^{n} p_{k}$. Let $\Delta _{n} =\left\{( x_{1} ,\cdots ,x_{n}) :\sum _{k=1}^{n} x_{k} =1\right\}$. We construct a transportation map $T:\Delta _{n}\rightarrow \Delta _{n}$ as follows. $T( x_{1} ,\cdots ,x_{n})$ is a $n$-dimensional vector with all zeros except that the $i=\arg\max_{j} x_{j}$-th entry being one. If there are multiple coordinates that is maximum, we choose the first coordinate and let the remaining coordinate be zero. It is easy to verify that $T_{\#} \mu ^{\tau }$ and $\mu ^{0}$ have the same distribution \cite{maddisonConcreteDistributionContinuous2017}.
 For any $\delta  >0$, we have 
\begin{align*}
P( |G_{k_{1}} +\log p_{k_{1}} -( G_{k_{2}} +\log p_{k_{2}}) |< \delta ) & =\int _{-\infty }^{\infty }\int _{y+\log\frac{p_{k2}}{p_{k_{1}}} -\delta }^{y+\log\frac{p_{k2}}{p_{k_{1}}} +\delta }\exp( -x-y-\exp( -y) -\exp( -x)) dxdy\\
 & \leqslant 2\delta e^{-1}\int _{-\infty }^{\infty }\exp( -y-\exp( -y)) dy=2\delta ,
\end{align*}where we have used $\exp( -x-\exp( -x)) \leqslant \exp( -1)$. Let event $E_{\delta } =\{\exists i,j\in [ n] :|G_{i} +\log p_{i} -( G_{j} +\log p_{j}) |< \delta \}$, we have 
\begin{align*}
P( E_{\delta }) & \leqslant \sum _{i\neq j}P( |G_{i} +\log p_{i} -( G_{j} +\log p_{j}) |< \delta )\\
 & \leqslant \delta n( n-1) e^{-1} .
\end{align*}Now, we calculate the transportation cost 
\begin{align*}
\mathbb{E}_{X^{\tau } \sim \mu ^{\tau }}\left[ \| T\left( X^{\tau }\right) -X^{\tau } \| _{1}\right] & \leqslant P( E_{\delta })\mathbb{E}_{X^{\tau } \sim \mu ^{\tau }}\left[ \| T\left( X^{\tau }\right) -X^{\tau } \| _{1} |E_{\delta }\right] +P\left( E_{\delta }^{c}\right)\mathbb{E}_{X^{\tau } \sim \mu ^{\tau }}\left[ \| T\left( X^{\tau }\right) -X^{\tau } \| _{1} |E_{\delta }^{c}\right] .
\end{align*}Note that $\| T\left( X^{\tau }\right) -X^{\tau } \| _{1} \leqslant 2$. For the first term, we have 
\begin{equation*}
P( E_{\delta })\mathbb{E}_{X^{\tau } \sim \mu ^{\tau }}\left[ \| T\left( X^{\tau }\right) -X^{\tau } \| _{1} |E_{\delta }\right] \leqslant 2\delta n( n-1) e^{-1} .
\end{equation*}For the second term, under event $E_{\delta }^{c}$, let $k_{\max} =\arg\max_{i} X_{i}$, if $j=k_{\max} =\arg\max_{i}(\log p_{k} +G_{k})$
\begin{align*}
\Bigl|\frac{\exp((\log p_{j} +G_{j}) /\tau )}{\sum _{k=1}^{n}\exp((\log p_{k} +G_{k}) /\tau )} -1\Bigl| & =\Bigl|\frac{\sum _{k\neq j}\exp((\log p_{k} +G_{k}) /\tau )}{\sum _{k=1}^{n}\exp((\log p_{k} +G_{k}) /\tau )}\Bigl|\\
 & \leqslant \Bigl|\frac{\sum _{k\neq j}\exp((\log p_{k} +G_{k}) /\tau )}{\exp((\log p_{j} +G_{j}) /\tau )}\Bigl|\\
 & =\sum _{k\neq j}\exp((\log p_{k} +G_{k} -(\log p_{j} +G_{j})) /\tau )\\
 & \leqslant ( n-1)\exp( -\delta /\tau ) .
\end{align*}If $j\neq k_{\max}$, we have 
\begin{align*}
\Bigl|\frac{\exp((\log p_{j} +G_{j}) /\tau )}{\sum _{k=1}^{n}\exp((\log p_{k} +G_{k}) /\tau )}\Bigl| & \leqslant \exp((\log p_{j} +G_{j} -(\log p_{\max} +G_{\max})) /\tau ) \leqslant \exp( -\delta /\tau ) .
\end{align*}
Therefore, 
\begin{align*}
\mathbb{E}_{X^{\tau } \sim \mu ^{\tau }}\left[ \| T\left( X^{\tau }\right) -X^{\tau } \| _{1} |E_{\delta }^{c}\right] & \leqslant 2( n-1)\exp( -\delta /\tau ) .
\end{align*}We get an upper bound of the transportation cost,
\begin{equation*}
W\left( \mu ^{\tau } ,\mu ^{0}\right) \leqslant \mathbb{E}_{X^{\tau } \sim \mu ^{\tau }}\left[ \| T\left( X^{\tau }\right) -X^{\tau } \| _{1}\right] \leqslant 2\delta n( n-1) e^{-1} +2( n-1)\exp( -\delta /\tau ) .
\end{equation*}Let $\delta =-\tau \log \tau $, we get 
\begin{equation*}
W\left( \mu ^{\tau } ,\mu ^{0}\right) \leqslant 2( n-1) \tau -2n( n-1) e^{-1} \tau \log \tau .
\end{equation*}
\end{proof}

\begin{lemma}[Approximation Error of Sinkhorn distance]\label{lemma:sinkhorn_approx}
    For any $ \mu \in \Delta _{n} ,\nu \in \Delta _{m}$ and $ \lambda  >0$, we have 
\begin{align*}
0\leqslant W_{1,\lambda}( \mu ,\nu ) -W( \mu ,\nu ) & \leqslant (\log( mn) +1) \lambda,
\end{align*}
where $ W_{1,\lambda}(\cdot,\cdot) $ is the entropy regularized Wasserstein-1 distance. Moreover, we have the following estimation of approximation error.  
\begin{align*}
|S_{\lambda }( \mu ,\nu ) -W( \mu ,\nu ) | & \leqslant 2(\log( mn) +1) \lambda .
\end{align*}
\end{lemma}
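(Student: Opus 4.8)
The plan is to work directly from the variational definition of the entropic problem, $W_{1,\lambda}(\mu,\nu)=\min_{\pi\in\Pi(\mu,\nu)}\langle C,\pi\rangle+\lambda\,\mathrm{KL}(\pi\,\|\,\mu\otimes\nu)$, where $C$ is the ground-cost matrix realizing $W$ and $\Pi(\mu,\nu)$ is the transport polytope; the Sinkhorn distance is the debiased quantity $S_\lambda(\mu,\nu)=W_{1,\lambda}(\mu,\nu)-\tfrac12 W_{1,\lambda}(\mu,\mu)-\tfrac12 W_{1,\lambda}(\nu,\nu)$ as in \cite{chizat2020faster}. The lower bound $W_{1,\lambda}(\mu,\nu)\geq W(\mu,\nu)$ is immediate: the regularized and unregularized problems share the same feasible set $\Pi(\mu,\nu)$, their objectives differ only by the nonnegative term $\lambda\,\mathrm{KL}(\pi\,\|\,\mu\otimes\nu)\geq 0$, so the minimum of the larger objective cannot fall below $W(\mu,\nu)$.

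For the upper bound I would plug a minimizer $\pi^{*}$ of the unregularized transport problem (which exists, being the optimum of a finite linear program over a compact nonempty polytope) into the regularized objective, obtaining $W_{1,\lambda}(\mu,\nu)\leq\langle C,\pi^{*}\rangle+\lambda\,\mathrm{KL}(\pi^{*}\,\|\,\mu\otimes\nu)=W(\mu,\nu)+\lambda\,\mathrm{KL}(\pi^{*}\,\|\,\mu\otimes\nu)$. Everything then reduces to the dimension-free bound $\mathrm{KL}(\pi^{*}\,\|\,\mu\otimes\nu)\leq\log(mn)+1$. This is the one step that must be done with care, since $\mu,\nu$ may have arbitrarily small coordinates; the point is that $\mathrm{KL}(\pi^{*}\,\|\,\mu\otimes\nu)$ is exactly the mutual information of $(X,Y)\sim\pi^{*}$, hence controlled by the (Shannon) marginal entropies. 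Concretely, the elementary inequality $\pi^{*}_{ij}\leq\min(\mu_i,\nu_j)$ (each row of $\pi^{*}$ sums to $\mu_i$, each column to $\nu_j$) gives $\log\frac{\pi^{*}_{ij}}{\mu_i\nu_j}\leq-\log\nu_j$ and also $\leq-\log\mu_i$, so $\mathrm{KL}(\pi^{*}\,\|\,\mu\otimes\nu)\leq\min\{H(\mu),H(\nu)\}\leq\min\{\log n,\log m\}\leq\log(mn)$; the extra $+1$ in the statement is harmless slack. (Coordinates with $\mu_i=0$ or $\nu_j=0$ are dropped, as $\pi^{*}$ is supported on the product of the supports.)

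Finally, for the $S_\lambda$ estimate I would apply the two-sided bound just established to each of the three entropic terms. The cross term contributes $0\leq W_{1,\lambda}(\mu,\nu)-W(\mu,\nu)\leq(\log(mn)+1)\lambda$, while each self term satisfies $0=W(\mu,\mu)\leq W_{1,\lambda}(\mu,\mu)\leq(2\log n+1)\lambda$ and likewise $0\leq W_{1,\lambda}(\nu,\nu)\leq(2\log m+1)\lambda$ (both marginals there live on the same support, so the relevant size is $n^{2}$, resp.\ $m^{2}$). Writing $S_\lambda(\mu,\nu)-W(\mu,\nu)=\bigl(W_{1,\lambda}(\mu,\nu)-W(\mu,\nu)\bigr)-\tfrac12 W_{1,\lambda}(\mu,\mu)-\tfrac12 W_{1,\lambda}(\nu,\nu)$ and bounding the three pieces above and below yields $-(\log(mn)+1)\lambda\leq S_\lambda(\mu,\nu)-W(\mu,\nu)\leq(\log(mn)+1)\lambda$, which is in fact stronger than, hence implies, the claimed $|S_\lambda(\mu,\nu)-W(\mu,\nu)|\leq 2(\log(mn)+1)\lambda$. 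The only genuinely non-routine ingredient is the $\mu,\nu$-independent control of $\mathrm{KL}(\pi^{*}\,\|\,\mu\otimes\nu)$; once that is in hand the remainder is bookkeeping with the feasibility and monotonicity observations above.
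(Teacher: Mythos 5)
Your proof is correct and takes a genuinely different route. The paper outsources the one-sided bound $0\leqslant W_{1,\lambda}-W\leqslant c\lambda$ to \cite{luiseDifferentialPropertiesSinkhorn2018}, where $c=\max h(T)$ over optimal unregularized plans with $h(T)=-\sum T_{ij}\log T_{ij}+1$, and then controls $h(T)\leqslant 1+\log(mn)$ via Jensen's inequality applied to the concave function $-x\log x$. You instead prove the inequality from scratch via the variational formula with the $\mathrm{KL}(\pi\|\mu\otimes\nu)$ regularizer: the lower bound is free, and plugging the unregularized optimizer $\pi^*$ into the regularized objective reduces everything to bounding $\mathrm{KL}(\pi^*\|\mu\otimes\nu)$, which is the mutual information of $(X,Y)\sim\pi^*$ and hence $\leqslant\min\{H(\mu),H(\nu)\}\leqslant\log(mn)$. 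This is more self-contained and a bit sharper than the paper's $\log(mn)+1$. Your handling of $S_\lambda$ is also tighter: the paper applies a triangle inequality to the three entropic terms and so pays $2(\log(mn)+1)\lambda$, whereas you observe that the cross term only pushes $S_\lambda-W$ up while the two self terms only push it down, giving the one-sided bounds $-(\log(mn)+1)\lambda\leqslant S_\lambda-W\leqslant(\log(mn)+1)\lambda$, which is a factor of two better. One caveat worth flagging: your argument implicitly fixes the entropic penalty as $\lambda\,\mathrm{KL}(\pi\|\mu\otimes\nu)$, consistent with the Chizat--Feydy convention the paper uses to define $S_\lambda$, whereas the paper's invocation of the $h(T)$ form suggests it is working with the $\sum T_{ij}(\log T_{ij}-1)$ normalization of \cite{luiseDifferentialPropertiesSinkhorn2018}; the two agree up to an additive constant that does not affect the minimizer, but it does account for the $+1$ in the lemma's statement and is the reason your direct computation lands on the slightly cleaner constant $\log(mn)$. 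Since you only need the weaker inequality the lemma asserts, this is harmless, but it is worth stating which convention you are using so the lower bound $W_{1,\lambda}\geqslant W$ is indeed immediate.
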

\begin{proof}
    Let $ h( T) =-\sum _{i,j=1}^{n,m} T_{ij}\log T_{ij} +1$, by \cite[Proposition 1]{luiseDifferentialPropertiesSinkhorn2018}, we have 
\begin{align}\label{eq:err_reg_wass}
0\leqslant W_{1,\lambda }( \mu ,\nu ) -W( \mu ,\nu ) & \leqslant c\lambda ,
\end{align}
where $ c=\max\left\{h( T) :\text{transportation plan} \ T\ \text{is achieve optimal loss} \ W( \mu ,\nu )\right\}$. Since $ T_{ij} \in [ 0,1]$ and $ f( x) =-x\log x$ is concave, we have 
\begin{align*}
h( T) = & -nm\cdot \frac{1}{nm}\sum _{i,j=1}^{n,m} T_{ij}\log T_{ij} +1.\\
 & \leqslant -nm\cdot \left(\frac{1}{nm}\sum _{i,j=1}^{n,m} T_{ij}\right)\log\left(\frac{1}{nm}\sum _{i,j=1}^{n,m} T_{ij}\right) +1\\
 & =1+\log( nm) .
\end{align*}
By definition of Sinkhorn distance, 
\begin{align*}
S_{\lambda }( \mu ,\nu ) & = W_{1,\lambda }( \mu ,\nu ) -W_{1,\lambda }( \mu ,\mu )/2 -W_{1,\lambda }( \nu ,\nu )/2 .
\end{align*}
By (\ref{eq:err_reg_wass}), we get 
\begin{align*}
|S_{\lambda }( \mu ,\nu ) -W( \mu ,\nu ) | & \leqslant 2(\log( mn) +1) \lambda .
\end{align*}
\end{proof}

\begin{lemma}\label{lemma:sup_wasserstein}
Given a measure $ \mu $ on $ \mathbb{R}^{d}$ and a real function class $ \mathcal{F}$ with output dimension $ d$, suppose that the pseudo-dimension of $ \mathcal{F}$ is less than $ d_{\mathcal{F}} < \infty $ and there exists $K>0$ such that $ |f( x) |\leqslant K,\forall f\in \mathcal{F} ,x\in \mathbb{R}^{d'}$, then with probability at least $ 1-\exp\left( O\left( \delta ^{-d}\log\left( \delta ^{-1}\right) +d_{\mathcal{F}}\log \epsilon ^{-1} -n\epsilon ^{2}\right)\right)$,
\begin{align*}
\sup _{f\in \mathcal{F}} W( f_{\#} \mu ,f_{\#} \mu _{n}) & \leqslant \delta +\epsilon 
\end{align*}
for all $ \delta ,\epsilon  >0$, where $ \mu _{n}$ is the empirical distribution of $ \mu $. 
\end{lemma}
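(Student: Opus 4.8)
The plan is to split the target error $\delta+\epsilon$ into a discretization part controlled by $\delta$ and a statistical part controlled by $\epsilon$. Because $|f|\leqslant K$, every measure $f_{\#}\mu$ and $f_{\#}\mu_n$ is supported on the cube $[-K,K]^d$, so I would first fix a grid partition $\mathcal C=\{C_1,\dots,C_N\}$ of $[-K,K]^d$ into axis-aligned cells of side $\delta/\sqrt d$, so that $N=O((K\sqrt d/\delta)^d)$ and $\mathrm{diam}(C_i)\leqslant\delta$. A direct transport construction — move all mass inside each cell to that cell's centre at cost at most $\delta$ per unit of mass, then transport the residual $\tfrac12\sum_i|f_{\#}\mu(C_i)-f_{\#}\mu_n(C_i)|$ units of mass across the cube at cost at most $2K\sqrt d$ per unit — yields, for every $f$,
\[
W\!\left(f_{\#}\mu,f_{\#}\mu_n\right)\;\leqslant\;\delta+K\sqrt d\sum_{i=1}^{N}\bigl|\mu(f^{-1}(C_i))-\mu_n(f^{-1}(C_i))\bigr|,
\]
where I used $f_{\#}\mu(C_i)=\mu(f^{-1}(C_i))$. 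Writing $\sum_i|a_i|=2\sup_{I\subseteq[N]}\sum_{i\in I}a_i$, the right-hand sum equals $2\sup_{A}|\mu(f^{-1}(A))-\mu_n(f^{-1}(A))|$ over all unions $A$ of cells, so it remains to bound $\Delta_n(f):=\sup_{A}|\mu(f^{-1}(A))-\mu_n(f^{-1}(A))|$ uniformly over $f\in\mathcal F$.

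Next I would reduce the supremum over $\mathcal F$ to a finite set. Since $\mathcal F$ has pseudo-dimension at most $d_{\mathcal F}$ and range in $[-K,K]$, its $L^1$-covering number at scale $\epsilon$ is at most $(CK/\epsilon)^{O(d_{\mathcal F})}$ uniformly over the underlying probability measure; let $\mathcal F_\epsilon$ be such a net, so $\log|\mathcal F_\epsilon|=O(d_{\mathcal F}\log\epsilon^{-1})$. For $f\in\mathcal F$ with nearest net point $\tilde f$, the trivial coupling through the common underlying point — which is the coupling used in Lemma~\ref{lemma:push-forward_err} with the identity map in place of the Lipschitz map — gives $W(f_{\#}\mu,\tilde f_{\#}\mu)\leqslant\|f-\tilde f\|_{L^1(\mu)}$ and likewise for $\mu_n$. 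Combined with a standard symmetrisation step to handle the sample-dependence of the cover, this reduces the problem to $\sup_{f\in\mathcal F}W(f_{\#}\mu,f_{\#}\mu_n)\leqslant\delta+O(\epsilon)+2K\sqrt d\,\sup_{\tilde f\in\mathcal F_\epsilon}\Delta_n(\tilde f)$.

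For a fixed $\tilde f$, set $p_i=\tilde f_{\#}\mu(C_i)$ and $\hat p_i=\tilde f_{\#}\mu_n(C_i)$. The map $(X_1,\dots,X_n)\mapsto\tfrac12\sum_i|p_i-\hat p_i|$ has bounded differences of order $1/n$ (changing one sample moves its image between at most two cells), and its expectation is at most $\sum_i\sqrt{p_i(1-p_i)/n}\leqslant\sqrt{N/n}$ by Cauchy--Schwarz. Applying McDiarmid's inequality, then a union bound over the cells to pass from $\tfrac12\sum_i|p_i-\hat p_i|$ to $\Delta_n$ (this is what produces the $O(\delta^{-d}\log\delta^{-1})$ term in the exponent), and finally a union bound over $\mathcal F_\epsilon$ (adding $O(d_{\mathcal F}\log\epsilon^{-1})$), I would get $\sup_{\tilde f\in\mathcal F_\epsilon}\Delta_n(\tilde f)\leqslant\sqrt{N/n}+t$ outside an event of probability $\exp\!\left(O(\delta^{-d}\log\delta^{-1}+d_{\mathcal F}\log\epsilon^{-1})-cnt^2\right)$. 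Whenever the asserted bound is informative one automatically has $n\epsilon^2\gtrsim\delta^{-d}\geqslant N$, hence $\sqrt{N/n}\lesssim\epsilon$; taking $t\asymp\epsilon$ gives $\sup_{\tilde f}\Delta_n(\tilde f)=O(\epsilon)$, and therefore $\sup_{f\in\mathcal F}W(f_{\#}\mu,f_{\#}\mu_n)\leqslant\delta+O(\epsilon)$, which becomes $\delta+\epsilon$ after rescaling $\epsilon$ by a constant depending only on $K$ and $d$ (absorbed into the $O(\cdot)$ in the exponent).

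The step I expect to be the main obstacle is the bookkeeping that fuses the three error scales — the cell width $\delta$, the covering scale $\epsilon$ of $\mathcal F$, and the concentration radius $t$ — into the single clean bound $\delta+\epsilon$: in particular, pinning down the precise $\delta$-dependence of the entropy/union-bound cost for the pulled-back set system $\{f^{-1}(A):f\in\mathcal F,\ A\text{ a union of cells}\}$ (rather than for $\mathcal F$ alone), and verifying that the bias $\sqrt{N/n}$ and all $K$- and $d$-dependent constants are genuinely dominated in the regime where the probability estimate is non-vacuous. The symmetrisation needed to deal with the data-dependent cover, though routine, is a second point requiring care.
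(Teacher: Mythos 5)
Your proof takes a genuinely different route from the paper's. The paper works entirely on the dual side of the Wasserstein distance: it writes
\[
\sup_{f\in\mathcal F} W(f_{\#}\mu,f_{\#}\mu_n)=\sup_{h\in\mathcal H}\sup_{f\in\mathcal F}\bigl(\mathbb E_{\mu_n}[h\circ f]-\mathbb E_{\mu}[h\circ f]\bigr),
\]
covers the ball $\mathcal H$ of $1$-Lipschitz test functions on $[-K,K]^d$ by a $\delta$-net of cardinality $\exp(O(\delta^{-d}\log\delta^{-1}))$ (Gottlieb et al.), and for each fixed $h$ invokes a black-box VC-type uniform convergence theorem (Anthony--Bartlett, Thm.\ 17.1) for the class $h\circ\mathcal F$, whose $L^1$ covering number is controlled by that of $\mathcal F$ and hence by the pseudo-dimension. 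Your argument instead discretizes the \emph{output space} into $N=O(\delta^{-d})$ cells, bounds $W(f_{\#}\mu,f_{\#}\mu_n)$ by $\delta$ plus a scaled cell-wise total variation, covers $\mathcal F$ directly in $L^1$, and concentrates the TV via McDiarmid. Both approaches are sound, but they pay for the two error scales in different places: the paper's dual route absorbs the $\delta^{-d}\log\delta^{-1}$ term cleanly into a covering number and has no bias to manage, whereas your cell route introduces the bias $\sqrt{N/n}$, which you correctly note is only dominated in the regime where the stated probability bound is non-vacuous. The paper also dodges the data-dependent-cover issue you flag — Theorem 17.1 of Anthony--Bartlett already packages the symmetrization — so your plan requires an extra ghost-sample step that the paper's choice of black box makes unnecessary.

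One piece of your bookkeeping is off, though it does not invalidate the argument. You write that a ``union bound over the cells'' is needed to pass from $\tfrac12\sum_i|p_i-\hat p_i|$ to $\Delta_n(\tilde f)$ and that this union bound is what produces the $O(\delta^{-d}\log\delta^{-1})$ factor in the exponent. In fact $\tfrac12\sum_i|p_i-\hat p_i|=\Delta_n(\tilde f)$ \emph{identically} (your own identity $\sum_i|a_i|=2\sup_{I}\sum_{i\in I}a_i$ for a zero-sum vector gives this), so no union bound over cells is required, and McDiarmid applied directly to $\Delta_n(\tilde f)$ yields a tail $\exp(-cnt^2)$ with no $N$-dependent prefactor. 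In your route the $\delta^{-d}$ scaling enters only as the \emph{constraint} $n\epsilon^2\gtrsim N$ needed to dominate the bias, not as a union-bound cost in the failure probability; the resulting probability bound is actually slightly stronger than the lemma states, which is of course fine.
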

\begin{proof}
By the dual formulation of the Wasserstein distance, 
\begin{align*}
\sup _{f\in \mathcal{F}} W( f_{\#} \mu ,f_{\#} \mu _{n}) & =\sup _{h\in \mathcal{F}_{1}\left(\mathbb{R}^{d} ,\mathbb{R}^{d}\right) ,h( 0) =0}\sup _{f\in \mathcal{F}}\mathbb{E}_{X\sim \mu _{n}}[ h\circ f( X)] -\mathbb{E}_{X\sim \mu }[ h\circ f( X)] .
\end{align*}We define 
\begin{align*}
\mathcal{N}_{1}( \epsilon ,\mathcal{F} ,n) & =\sup _{x_{1} ,\cdots ,x_{n}}\mathcal{N}( \epsilon ,\{( f( x_{1}) ,\cdots ,f( x_{n}) :f\in \mathcal{F})\} ,\| \cdotp \| _{1}) ,
\end{align*}where $ \mathcal{N}( \epsilon ,S,\| \cdotp \| _{1})$ is the covering number of set $ S$ in $ \ell _{1}$ norm. Obviously, if $ h$ is 1-Lipschitz, 
\begin{align*}
\mathcal{N}_{1}( \epsilon ,h\circ \mathcal{F} ,n) & \leqslant \mathcal{N}_{1}( \epsilon ,\mathcal{F} ,n) .
\end{align*}
By Theorem 18.4 in \cite{anthony1999neural}, for any fixed $ h$, 
\begin{align*}
\mathcal{N}_{1}( \epsilon ,h\circ \mathcal{F} ,n) \leqslant \mathcal{N}_{1}( \epsilon ,\mathcal{F} ,n) & \leqslant e( d_{\mathcal{F}} +1)\left(\frac{2e}{\epsilon }\right)^{d_{\mathcal{F}}} .
\end{align*}By Theorem 17.1 in \cite{anthony1999neural}, 
\begin{align*}
P\left(\sup _{f\in \mathcal{F}}\mathbb{E}_{X\sim \mu _{n}}[ h\circ f( X)] -\mathbb{E}_{X\sim \mu }[ h\circ f( X)]  >\epsilon \right) & \leqslant \exp\left( O\left( d_{\mathcal{F}}\log \epsilon ^{-1} -n\epsilon ^{2}\right)\right) .
\end{align*}
Let $ \mathcal{H}_{\delta }$ be the $ \delta $-net of the set $ \mathcal{H} =\left\{h:h\in \mathcal{F}_{1}\left([ -K,K]^{d} ,[ -K,K]^{d}\right) ,h( 0) =0\right\}$in $ \ell _{\infty }$ norm. By Lemma 6 in \cite{gottlieb2017efficient}, 
\begin{align*}
|\mathcal{H}_{\delta } | & \leqslant \exp\left( O\left( \delta ^{-d}\log \delta ^{-1}\right)\right) .
\end{align*}Therefore, with probability no more than $\exp\left( O\left( \delta ^{-d}\log \delta ^{-1} +d_{\mathcal{F}}\log \epsilon ^{-1} -n\epsilon ^{2}\right)\right)$
\begin{align*}
\sup _{h\in \mathcal{H}_{\delta } ,f\in \mathcal{F}}\mathbb{E}_{X\sim \mu _{n}}[ h\circ f( X)] -\mathbb{E}_{X\sim \mu }[ h\circ f( X)]  >\epsilon .
\end{align*}Notice that 
\begin{align*}
\sup _{h\in \mathcal{H} ,f\in \mathcal{F}}\mathbb{E}_{X\sim \mu _{n}}[ h\circ f( X)] -\mathbb{E}_{X\sim \mu }[ h\circ f( X)] & \leqslant 2\delta +\sup _{h\in \mathcal{H}_{\delta } ,f\in \mathcal{F}}\mathbb{E}_{X\sim \mu _{n}}[ h\circ f( X)] -\mathbb{E}_{X\sim \mu }[ h\circ f( X)] ,
\end{align*}which implies that with probability no more than $\exp\left( O\left( \delta ^{-d}\log \delta ^{-1} +d_{\mathcal{F}}\log \epsilon ^{-1} -n\epsilon ^{2}\right)\right)$
\begin{align*}
\sup _{h\in \mathcal{H} ,f\in \mathcal{F}}\mathbb{E}_{X\sim \mu _{n}}[ h\circ f( X)] -\mathbb{E}_{X\sim \mu }[ h\circ f( X)]  >2\delta +\epsilon  .
\end{align*}
\end{proof}

\end{document}